\documentclass{article}
\usepackage[preprint]{neurips_2026}

\usepackage[utf8]{inputenc} % allow utf-8 input
\usepackage[T1]{fontenc}    % use 8-bit T1 fonts
\usepackage{hyperref}       % hyperlinks
\usepackage{url}            % simple URL typesetting
\usepackage{booktabs}       % professional-quality tables
\usepackage{amsfonts}       % blackboard math symbols
\usepackage{nicefrac}       % compact symbols for 1/2, etc.
\usepackage{microtype}      % microtypography
\usepackage{xcolor}         % colors
\usepackage{graphicx}
\usepackage{amsmath,amssymb,amsthm}
\DeclareMathOperator{\Var}{Var}
\theoremstyle{remark}
\newtheorem{remark}{Remark}
\usepackage{adjustbox}
\usepackage{cleveref}
\newtheorem{theorem}{Theorem}
\usepackage{float}
\usepackage{tocloft}
\usepackage{xcolor}
\usepackage[table]{xcolor}
\usepackage{tcolorbox} % in preamble
\usepackage{subcaption}
\usepackage{caption}
\usepackage{subcaption}
\usepackage{enumitem}
\definecolor{lightblue}{RGB}{173,216,230}
\definecolor{darkgreen}{RGB}{0,100,0}

\title{Formatting Instructions For NeurIPS 2026}

\title{When Marginals Match but Structure Fails: Covariance Fidelity in Generative Models}

\author{
  Nazia Riasat \\
  North Dakota State University \\
  \texttt{nazia.riasat@ndsu.edu}
}

\begin{document}

\maketitle

\begin{abstract}\vspace{-0.7em}
Generative models are increasingly deployed as 
substitutes for real data in downstream scientific 
workflows, yet standard evaluation criteria remain 
focused on marginal distribution matching. We argue 
that this represents a fundamental gap: downstream 
inference is rarely a marginal operation, and a model 
that passes every univariate diagnostic can still 
produce structurally unreliable synthetic data. We introduce \emph{covariance-level dependence 
fidelity}, measured by $D_\Sigma(P,Q) = 
\|\Sigma_P - \Sigma_Q\|_F$, as a principled, 
computable criterion for evaluating whether a 
generative model preserves the joint structure of 
data beyond its univariate marginals. Three results 
formalise this criterion. First, marginal fidelity 
provides no constraint on dependence structure: 
$D_\Sigma$ can be made arbitrarily large while all 
univariate marginals match exactly. Second, 
covariance divergence induces quantifiable downstream 
instability, including sign reversals in population 
regression coefficients. Third, bounding $D_\Sigma$ 
provides positive stability guarantees for 
dependence-sensitive procedures such as PCA via 
Davis--Kahan-type bounds.

Empirical validation across three domains, image 
data (Fashion-MNIST VAE, $n=60{,}000$), bulk RNA-seq 
(TCGA-BRCA, $n=1{,}111$), and a small-sample stress 
test (Alzheimer's gene expression, $n=113$), shows 
that $D_\Sigma/\delta$ consistently distinguishes structure-discarding from structure-preserving 
generators in cases where standard marginal diagnostics show little separation, 
confirming that covariance-level fidelity provides 
information orthogonal to existing evaluation metrics 
across domains and sample sizes.
\end{abstract}

\section{Introduction}
\vspace{-0.8em} 
Generative models are increasingly deployed as substitutes for real data in downstream scientific and decision-making workflows \cite{ho2020}. In many applied pipelines, a model is treated as ``high-quality'' if its synthetic samples are indistinguishable from the target data under marginal diagnostics: univariate goodness-of-fit tests, feature-wise summary statistics, or likelihood-based objectives that implicitly prioritize low-dimensional representations \citep{goodfellow2016deep,kingma2014}. This evaluation philosophy is widespread because marginal discrepancies are easy to measure, easy to visualize, and often correlate with perceptual realism \citep{kynkaanniemi2019, naeem2020}. However, downstream inference is rarely a marginal operation. Regression, dimensionality reduction, clustering, and representation learning depend critically on \emph{multivariate structure} covariances, tail dependence, conditional relationships, and higher-order interactions \citep{jolliffe2002, anderson2003}. A generative model can therefore appear accurate under marginal audits while remaining structurally unreliable for the tasks that motivate its use.

This paper formalizes a central claim: \emph{marginal distribution fidelity is insufficient to guarantee trustworthy downstream behavior for dependence-sensitive tasks}. In contrast, explicit control of covariance-level (second-order) 
dependence provides quantitative stability guarantees for a 
broad class of linear and spectral procedures. Extensions to 
higher-order, non-linear, and copula-level dependence remain an open direction. We study a target data-generating distribution \(P\) on \(\mathbb{R}^d\) and a candidate generative distribution \(Q\), interpreted as the output of a generative AI system. Our analysis separates two questions that are often conflated in practice:
\vspace{-0.5em}
\begin{enumerate}
    \item \text{Representation faithfulness:} do samples from \(Q\) match samples from \(P\) under standard distributional diagnostics?
    \item \text{Inferential trustworthiness:} if \(Q\) replaces \(P\) in a downstream analysis, are the resulting conclusions stable and qualitatively correct?
\end{enumerate}
\vspace{-0.4em}
We show that the first question cannot be answered by marginal criteria alone, and that failures of dependence preservation can induce large and even sign-reversing inferential errors that are invisible to marginal evaluations \citep{wasserman2004, damour2022}.

\emph{Main results}: Our theoretical contributions establish a simple hierarchy linking distributional fidelity to downstream stability.

\text{(i) \emph{Marginal fidelity does not imply dependence fidelity.}}
We first prove an impossibility result (Theorem~1): for any dimension \(d\ge 2\), there exist distributions \(P\) and \(Q\) whose univariate marginals match exactly, yet whose dependence structures differ. In particular, the copulas can be distinct, and covariance-level discrepancies can be made arbitrarily large through appropriate scaling constructions. This establishes that perfect marginal agreement does not constrain either linear or nonlinear dependence, and therefore cannot serve as a certificate of structural correctness.

(ii) Dependence divergence induces inferential instability. For a population linear regression task, we obtain the bound
\vspace{-0.4em}
\[
|\beta(P) - \beta(Q)| 
\leq \frac{1}{\sqrt{2}\,\sigma_X^2}
\|\Sigma_P - \Sigma_Q\|_F.
\]
\vspace{-0.2em} 
with equality when the covariance perturbation affects only off-diagonal entries (i.e., covariance-only perturbations under equal marginal variances).

\text{(iii) \emph{Dependence fidelity yields positive stability guarantees.}}
Finally, we show that controlling covariance-level dependence provides a sufficient condition for stability of dependence-sensitive downstream tasks. Focusing on principal component analysis (PCA), we derive bounds for both spectral stability (via Weyl-type eigenvalue perturbation) and subspace stability (via Davis--Kahan-type results \citep{davis1970}) in terms of \(\|\Sigma_P-\Sigma_Q\|_F\) and an eigengap condition (Theorem~3). Consequently, when a generative model preserves second-order dependence at an appropriate scale, it provides guarantees on the stability of key geometric properties. These guarantees are informative in the small-perturbation regime where \(\|\Sigma_P-\Sigma_Q\|_F \ll \gamma\), and may become loose when the perturbation is large relative to the eigengap.

\emph{Synthetic constructions isolating dependence effects}:
To make the theory concrete, we provide minimal synthetic examples in which marginal distributions are fixed by construction while dependence is systematically altered. These examples demonstrate two practically relevant failure modes: (a) mismatched tail dependence (e.g., Gaussian copula versus \(t\)-copulas) leading to severe errors in joint extreme-event probabilities, and (b) correlation sign flips producing regression coefficients of opposite sign. Both constructions are invisible to marginal-only evaluation, yet they induce large downstream discrepancies. 
Across three empirical settings, a VAE applied to Fashion-MNIST images
($n = 60{,}000$, $p = 50$ PCA dimensions, $n/p = 1{,}200$), bulk
RNA-seq from TCGA-BRCA ($n = 1{,}111$, $p = 100$ genes, $n/p = 11.1$),
and a small-sample stress test on Alzheimer's gene expression
($n = 113$, $p = 50$ genes, $n/p = 2.26$), the diagnostic behaviour
of $D_\Sigma/\delta$ is consistent with theoretical predictions in
every case examined.

While each mathematical component used in this paper draws on 
classical results from multivariate analysis and matrix 
perturbation theory, our contribution is conceptual and 
structural: we integrate these tools into a unified diagnostic 
framework that connects distributional fidelity to inferential 
reliability, and we identify covariance-level dependence 
preservation as a sufficient structural condition for trustworthy 
stability in dependence-sensitive downstream tasks. Crucially, 
the resulting diagnostic is computable from samples, 
complementary to existing metrics such as FID and 
precision/recall that target marginal or perceptual fidelity, 
and sensitive to generative failures that those metrics cannot 
detect. To our knowledge, the hierarchy linking marginal fidelity, 
dependence divergence, and downstream stability has not 
been formalized in the generative modeling literature, 
nor validated simultaneously across image, genomic, and 
small-sample domains.

The remainder is organized as: setup and notation in Section~\ref{sec:setup}, related work in Section~\ref{sec:related_work}, main results in Section~\ref{sec:Main_results}, synthetic examples in Section~\ref{sec:Examples}, discussion in Section~\ref{sec:Discussions}, and conclusion in Section~\ref{sec:Conclusion}.
\vspace{-0.4em}
\section{Problem Setup and Notation}\label{sec:setup}
\vspace{-0.8em} 
Let $P$ denote a target data-generating distribution on $\mathbb{R}^d$, and let $Q$ denote the distribution induced by a generative model. 
We assume throughout that $P$ and $Q$ have finite second moments so that covariance matrices are well-defined. The goal of generative modeling is to approximate $P$ by $Q$ in a way that preserves not only marginal behavior, but also multivariate structure relevant to downstream statistical tasks.
\vspace{-0.7em}
\paragraph{Marginal fidelity.}
We say that $Q$ achieves \emph{marginal fidelity} if all univariate marginals match:
\[
P_j = Q_j,\qquad j=1,\ldots,d,
\]
\vspace{-0.1em}
where $P_j$ and $Q_j$ denote the marginal distributions of the $j$-th coordinate under $P$ and $Q$, respectively. 
Marginal fidelity is commonly encouraged in practice via likelihood-based objectives, moment matching, or univariate goodness-of-fit diagnostics.

\vspace{-0.6em}
\paragraph{Dependence fidelity (covariance-level).}
Let \[
\Sigma_P := \mathrm{Cov}_{X\sim P}(X), 
\qquad 
\Sigma_Q := \mathrm{Cov}_{X\sim Q}(X)
\]
\vspace{-0.2em}
denote the covariance matrices under $P$ and $Q$. 
We measure covariance-level dependence discrepancy by the Frobenius distance
\[
D_{\Sigma}(P,Q) := \|\Sigma_P-\Sigma_Q\|_F,
\]
where $\|\cdot\|_F$ is the Frobenius norm. 
Small values of $D_{\Sigma}(P,Q)$ indicate that the \emph{linear} dependence structure of $Q$ closely matches that of $P$. We note that $D_\Sigma(P,Q)$ is scale-sensitive; the arbitrarily 
large divergence in Theorem~\ref{thm:marginal} is achieved partly 
by scaling marginal variance, which reflects a limitation of the 
raw Frobenius criterion rather than a fundamental property of 
dependence mismatch. A normalized alternative 
$\tilde{D}_\Sigma := \|\mathcal{C}_P - \mathcal{C}_Q\|_F$, where 
$\mathcal{C}$ denotes the correlation matrix, is preferable when 
scale invariance is required; all three theorems extend to this 
normalized form under appropriate rescaling. Throughout this paper, $D_\Sigma$ serves as the primary diagnostic 
since it directly enters the Frobenius-norm perturbation bounds of 
Theorems~2--3. As $D_\Sigma$ is scale-sensitive, we report the 
normalized form $\tilde{D}_\Sigma = \|C_P - C_Q\|_F$ alongside 
in all empirical tables; the two metrics exhibit consistent rankings across all five generator comparisons 
(Tables~3--7), confirming that findings are not an 
artifact of marginal variance scaling.

\vspace{-0.5em}
\paragraph{Downstream functionals and inferential stability.}
We consider population-level functionals $T(\cdot)$ that depend on multivariate structure, including:
(i) population regression coefficients,
(ii) principal component eigenvalues and principal subspaces, and
(iii) joint tail probabilities.
We say that the generative model is \emph{inferentially stable} (for a specified class of dependence-sensitive functionals) if $T(Q)$ is controlled by $T(P)$ through bounds in terms of  $D_{\Sigma}(P,Q)$ for the class of functionals under consideration.

\textbf{Norm relations.} For any matrix $A$, 
$\|A\|_2 \leq \|A\|_F$; this allows Frobenius 
control of covariance error to imply the spectral 
and subspace perturbation bounds used in Section~4.

\textbf{Clarification on dependence notions.} \begin{remark} The guarantees in this paper apply to 
covariance-level (second-order) dependence fidelity 
$D_\Sigma$, which directly controls stability for 
linear and spectral procedures (Theorems~2--3). 
Appendix~B.1 illustrates a tail-dependence failure 
mode that lies strictly outside the scope of 
$D_\Sigma$; it is included to clarify the boundary 
of the criterion, not to claim $D_\Sigma$ addresses 
it.
\end{remark}
The next section formalizes the limitations of marginal fidelity and establishes quantitative links between dependence discrepancy and downstream instability/stability.
\vspace{-0.7em}
\section{Related Work}
\label{sec:related_work}
\vspace{-0.6em}
Standard evaluation relies on marginal or likelihood-based criteria, distributional distances such as ~\citep{Villani2009Wasserstein} and MMD~\citep{gretton2005measuring}, and perceptual metrics including FID~\citep{heusel2017} and precision/recall~\citep{naeem2020}. While these metrics do penalize some covariance mismatch, they do not provide interpretable, actionable perturbation thresholds for stability in specific downstream linear or spectral tasks; our framework formalizes this gap. Critically, ~\citep{theis2016} showed that no single metric simultaneously captures all desirable generative properties, we sharpen this observation by identifying covariance-level fidelity as the specific property that existing metrics collectively miss and that determines downstream inferential reliability.

Copula theory separates marginal behavior from joint dependence~\citep{sklar1959}. ~\citep{damour2022} established that underspecification causes silent failures in downstream inference but did not provide computable diagnostic criteria. We close this gap: building on Weyl's inequality and Davis--Kahan~\citep{yu2015useful}, we derive the first closed-form bounds linking covariance divergence to downstream instability for regression and PCA.

Distance covariance~\citep{szekely2007measuring} and HSIC~\citep{gretton2005measuring} capture nonlinear dependence but yield no perturbation bounds. The RV-coefficient~\cite{robert1976unifying} measures covariance similarity but carries no eigengap information and provides no actionable stability threshold; $D_\Sigma/\delta$ fills both gaps by incorporating covariance discrepancy together with eigengap information.

\vspace{-1em}
\section{Main Results: Dependence Fidelity and Trustworthy Inference}
\label{sec:Main_results}
\vspace{-0.7em}
We now formalise the central claim of this work. 
Throughout, let $P$ denote a target distribution 
on $\mathbb{R}^d$ and $Q$ the distribution induced 
by a generative model. We show that matching 
marginal distributions is insufficient to guarantee 
reliable downstream behaviour, while explicit 
preservation of covariance-level dependence provides 
sufficient conditions for stability of 
dependence-sensitive functionals including linear 
regression and PCA.
\vspace{-0.5em}
\subsection{Marginal Fidelity Does Not Imply Dependence Fidelity}
\vspace{-0.5em} 
Current evaluation practices for generative models often focus on marginal distribution matching, either explicitly through univariate goodness-of-fit diagnostics or implicitly through likelihood-based objectives \citep{goodfellowGAN2014, theis2016,barratt2018, borji2022pros}. We first show that such criteria provide no guarantee that the joint structure of the data is preserved. 

\begin{theorem}[Marginal fidelity does not imply dependence fidelity]
\label{thm:marginal}
Let $d \geq 2$. There exist probability distributions $P$ and $Q$ on $\mathbb{R}^d$ such that:
\begin{enumerate}
    \item All univariate marginals match exactly: $P_j = Q_j$ for each coordinate $j$.
    \item The copulas differ: $C_P \neq C_Q$, \[\mathcal{D}_{\mathrm{cop}}(P,Q) > 0
    \]
    for any characteristic-kernel (e.g., Gaussian kernel) MMD on the copula domain. \vspace{0.2em}
    \item The covariance matrices can be chosen to differ:
    \[
    \Sigma_P \neq \Sigma_Q, \quad 
    \mathcal{D}_\Sigma(P,Q) := \|\Sigma_P - \Sigma_Q\|_F > 0.
    \]
\end{enumerate}
\vspace{-0.3em}
Moreover, the covariance divergence can be made arbitrarily large while maintaining exact marginal agreement \citep{nelsen2006, sklar1959}. 
\end{theorem}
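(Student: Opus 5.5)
The plan is an explicit Gaussian construction in the first two coordinates, padded with independent standard normals in the remaining $d-2$ coordinates. Fix $\rho \in (0,1]$ and a scale $s>0$. Let $P$ be the law of $(Z_1, Z_2, Z_3,\dots,Z_d)$, where $(Z_1,Z_2)$ is bivariate normal with mean zero, variances $s^2$ and correlation $\rho$, and $Z_3,\dots,Z_d \sim N(0,1)$ are independent of $(Z_1,Z_2)$. Let $Q$ be the same law with $\rho$ replaced by $-\rho$; one could equally take correlation $0$. Both measures have finite second moments, as required in Section~\ref{sec:setup}.

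\textbf{Item 1 (marginals).} Every coordinate is Gaussian with the same variance under $P$ and $Q$: $s^2$ for $j=1,2$ and $1$ for $j\ge 3$. Hence $P_j=Q_j$ for all $j$.

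\textbf{Item 3 (covariance gap).} The covariance matrices differ only in the $(1,2)$ and $(2,1)$ entries, which are $\pm\rho s^2$. Therefore
\[
\|\Sigma_P-\Sigma_Q\|_F=\sqrt{2}\cdot 2\rho s^2 = 2\sqrt{2}\,\rho s^2>0.
\]
Letting $s\to\infty$ makes this gap arbitrarily large, while the marginals still agree exactly for each fixed $s$. This is the scaling mechanism flagged in Section~\ref{sec:setup}.

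\textbf{Item 2 (copulas).} Since the marginals are continuous, Sklar's theorem gives unique copulas $C_P$ and $C_Q$. The copula is invariant under the strictly increasing coordinatewise map $x_j \mapsto x_j/s$ (or $x_j$ for $j\ge3$). So $C_P$ is the $d$-dimensional Gaussian copula whose correlation matrix $R_\rho$ has $\rho$ in the $(1,2)$ entry and is the identity elsewhere, and $C_Q$ is the Gaussian copula with $R_{-\rho}$.

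To see that these copulas differ, compare a computable functional. Kendall's $\tau$ of the $(1,2)$ pair is $\tfrac{2}{\pi}\arcsin(\pm\rho)$, and these values differ because $\rho\neq0$. Alternatively, evaluate at $u=(1/2,1/2,1,\dots,1)$, where
\[
C(u)=\tfrac14+\tfrac{1}{2\pi}\arcsin(\pm\rho).
\]
Thus $C_P\neq C_Q$ as distributions on $[0,1]^d$.

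The MMD statement then follows from the definition of a characteristic kernel: the mean embedding is injective on probability measures. Because $C_P\neq C_Q$, we get $\mathcal{D}_{\mathrm{cop}}(P,Q)=\|\mu_{C_P}-\mu_{C_Q}\|_{\mathcal H}>0$. The Gaussian kernel restricted to the compact set $[0,1]^d$ is characteristic, so it qualifies.

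\textbf{Expected obstacle.} The main difficulty is conceptual rather than computational: "the copulas differ" must be stated so that it is well defined and independent of the scale $s$. This is why I would insist on continuous marginals, so that Sklar uniqueness applies, and on the invariance of the copula under monotone transformations. That invariance shows the copula gap does not depend on $s$, even though $D_\Sigma$ grows like $s^2$.
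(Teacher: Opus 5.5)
Your proposal is correct and follows essentially the same route as the paper: a Gaussian pair with correlations $\pm\rho$ and identical $\mathcal{N}(0,s^2)$ marginals, scaling $s\to\infty$ so that $\|\Sigma_P-\Sigma_Q\|_F = 2\sqrt{2}\,\rho s^2$ is unbounded, padding with independent standard normals for $d>2$, and Sklar uniqueness plus injectivity of the characteristic-kernel mean embedding for the MMD claim. The only difference is minor. You certify $C_P\neq C_Q$ with an explicit functional (Kendall's $\tau$, or Sheppard's orthant value $C(\tfrac12,\tfrac12,1,\dots,1)$), whereas the paper simply notes that equal marginals together with distinct joint laws force distinct copulas; both arguments are valid.
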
 

This can be achieved by scaling the marginal variance. 
If $X \sim \mathcal{N}(0,\sigma^2)$ and the covariance matrices differ only in their correlation structure, then
\[
D_\Sigma(P,Q) = \|\Sigma_P - \Sigma_Q\|_F = 2\sqrt{2}\,\sigma^2 |\rho|,
\]

where in the unit-variance construction of the proof
($\sigma^2 = 1$), this simplifies to $2\sqrt{2}|\rho|$ and grows without bound as $\sigma^2 \to \infty$ while the marginals remain matched. Thus, marginal fidelity cannot certify structural reliability for dependence-sensitive tasks \citep{damour2022, theis2016}.
\vspace{-0.5em}
\subsection{Dependence Divergence Induces Inferential Instability}
\vspace{-0.5em}
Dependence mismatch is not merely a representational discrepancy; it directly affects downstream inference. We formalize this phenomenon using population linear regression as a canonical dependence-sensitive task \citep{anderson2003, van2000}.

Let $(X,Y) \in \mathbb{R}^2$. We assume that variables are centered (or equivalently that the regression model includes an intercept term), ensuring that the population least-squares slope is given by
\vspace{-0.3em}
\[
\beta(P) := \frac{\mathrm{Cov}_P(X,Y)}{\mathrm{Var}_P(X)}.
\]
\vspace{-0.2em}
The centering assumption ensures that the regression slope depends only on second-order structure, which allows us to relate inferential sensitivity directly to covariance perturbations.

\begin{theorem}[Dependence divergence directly controls inferential sensitivity.]
\label{thm:regression}

Let $P$ and $Q$ be distributions on $\mathbb{R}^2$ with finite second moments and 
matching marginal variances across $P$ and $Q$:
\[
\mathrm{Var}_P(X) = \mathrm{Var}_Q(X) = \sigma_X^2 > 0, 
\qquad
\mathrm{Var}_P(Y) = \mathrm{Var}_Q(Y) = \sigma_Y^2 > 0.
\]
Let $\Sigma_P$ and $\Sigma_Q$ denote the covariance matrices of $(X,Y)$ under $P$ and $Q$, respectively. Then, \[|\beta(P) - \beta(Q)| = \frac{1}{\sqrt{2}\,\sigma_X^2}
\|\Sigma_P - \Sigma_Q\|_F.
\]

\end{theorem}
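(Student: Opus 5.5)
The identity follows by writing out both sides explicitly in terms of the entries of the $2\times 2$ covariance matrices. Set $c_P := \mathrm{Cov}_P(X,Y)$ and $c_Q := \mathrm{Cov}_Q(X,Y)$. The hypotheses fix the diagonals, so
\[
\Sigma_P = \begin{pmatrix} \sigma_X^2 & c_P \\ c_P & \sigma_Y^2 \end{pmatrix},
\qquad
\Sigma_Q = \begin{pmatrix} \sigma_X^2 & c_Q \\ c_Q & \sigma_Y^2 \end{pmatrix}.
\]
The difference $\Sigma_P - \Sigma_Q$ therefore has zero diagonal and both off-diagonal entries equal to $c_P - c_Q$. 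This gives $\|\Sigma_P - \Sigma_Q\|_F = \sqrt{2}\,|c_P - c_Q|$. This step is where the matching-variance assumption is used: it kills the diagonal contributions to the Frobenius norm. Symmetry of covariance matrices produces the factor $\sqrt{2}$.

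Next, I compute the slope difference. Both distributions have the same denominator $\mathrm{Var}_P(X) = \mathrm{Var}_Q(X) = \sigma_X^2 > 0$, so
\[
\beta(P) - \beta(Q) = \frac{c_P - c_Q}{\sigma_X^2}.
\]
Taking absolute values and substituting $|c_P - c_Q| = \|\Sigma_P - \Sigma_Q\|_F/\sqrt{2}$ from the first step yields
\[
|\beta(P) - \beta(Q)| = \frac{1}{\sqrt{2}\,\sigma_X^2}\,\|\Sigma_P - \Sigma_Q\|_F,
\]
which is the claim. Finite second moments make every quantity well defined, and $\sigma_X^2 > 0$ makes the division legitimate.

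There is no real analytical obstacle here; the only point needing care is the role of the assumptions. Without matching variances, the diagonal entries of $\Sigma_P - \Sigma_Q$ would contribute to the Frobenius norm, and the denominators of the two slopes would differ. In that general setting one would only expect an inequality, after bounding the cross term $c_Q(1/\sigma_{X,P}^2 - 1/\sigma_{X,Q}^2)$. I would note after the proof that the equality regime is exactly the off-diagonal-only perturbation described in the introduction. I would also note that $\sigma_Y^2$ plays no role beyond fixing the $(2,2)$ entry.
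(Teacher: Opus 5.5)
Your proof is correct and follows essentially the same route as the paper. In both, matching variances make $\Sigma_P-\Sigma_Q$ purely off-diagonal with Frobenius norm $\sqrt{2}\,|\mathrm{Cov}_P(X,Y)-\mathrm{Cov}_Q(X,Y)|$, and the common denominator $\sigma_X^2$ reduces the slope difference to that covariance gap divided by $\sigma_X^2$. The only difference is that the paper first rederives $\beta=\mathrm{Cov}(X,Y)/\mathrm{Var}(X)$ by minimizing $\mathbb{E}[(Y-bX)^2]$ under the centering assumption, whereas you take the formula as the definition, which is legitimate given how $\beta$ is introduced before the theorem.
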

\vspace{-0.5em}
This theorem provides a quantitative bound linking covariance-level dependence divergence to inferential error \citep{anderson2003}. In particular, two distributions with identical marginals but opposite correlation structures yield regression coefficients of opposite sign. Thus, dependence changes that are invisible to marginal diagnostics can produce large inferential discrepancies, including sign reversals. 
\vspace{-0.6em}
\subsection{Stability Guarantees Under Dependence Fidelity}
\label{thm:pca}
\vspace{-0.3em}
The bound is informative in the standard perturbation regime 
where the covariance perturbation is small relative to the 
eigengap of the reference matrix. Under this condition, 
Weyl's inequality implies
\[
|\lambda_k(\Sigma_P) - \lambda_k(\Sigma_Q)| \leq 
\|\Sigma_P - \Sigma_Q\|_2 \quad \text{for all } k.
\]
Since $\|A\|_2 \leq \|A\|_F$, we obtain
\[
|\lambda_k(\Sigma_P) - \lambda_k(\Sigma_Q)| \leq 
\|\Sigma_P - \Sigma_Q\|_F,
\]
ensuring the required spectral separation for Theorem~3. While the previous results highlight the limitations of marginal-based evaluation, we next show that explicit control of dependence divergence yields positive guarantees. We consider principal component analysis (PCA) \cite{jolliffe2002, davis1970}, a fundamental dependence-sensitive operation underlying representation learning and dimensionality reduction.

\begin{theorem}[Stability of PCA under covariance-level dependence fidelity]
\label{thm:pca}
Let $P$ and $Q$ be distributions on $\mathbb{R}^d$ with zero mean and covariance matrices $\Sigma_P$ and $\Sigma_Q$ with finite second moments.

\textbf{(Eigenvalue stability)} For each $k=1,\dots,d$,
\[
|\lambda_k(\Sigma_P) - \lambda_k(\Sigma_Q)|
\leq \|\Sigma_P - \Sigma_Q\|_2
\leq \|\Sigma_P - \Sigma_Q\|_F.
\]
\vspace{-0.3em}
In particular,\vspace{-0.7em}
\[
\max_{k \le d} |\lambda_k(\Sigma_P) - \lambda_k(\Sigma_Q)| \le D_\Sigma(P,Q).
\]

\textbf{(Subspace stability)} 
Let $U_P \in \mathbb{R}^{d \times r}$ 
and $U_Q \in \mathbb{R}^{d \times r}$ denote the matrices whose 
columns are the top-$r$ orthonormal eigenvectors of $\Sigma_P$ 
and $\Sigma_Q$ respectively. Let $\gamma :=\lambda_r(\Sigma_P) - \lambda_{r+1}(\Sigma_P) > 0$
denote the eigengap of $\Sigma_P$. Under the small-perturbation 
assumption $D_\Sigma(P,Q) < \gamma$, which ensures the required spectral separation between the 
leading $r$-dimensional eigenspace of $\Sigma_P$ and its 
orthogonal complement under perturbation by $\Sigma_Q - \Sigma_P$,
\vspace{-0.5em}
\begin{equation*}
    \|\sin\Theta(U_P, U_Q)\|_2 \leq 
    \frac{2\|\Sigma_P - \Sigma_Q\|_F}{\gamma}.
\end{equation*}
\end{theorem}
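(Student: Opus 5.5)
The proof has two parts, one for the eigenvalues and one for the subspace, and both rest on classical matrix perturbation theory applied to the symmetric perturbation $E := \Sigma_Q - \Sigma_P$. Since $\Sigma_P$ and $\Sigma_Q$ are real symmetric positive semidefinite matrices, both have real eigenvalues, which we order as $\lambda_1 \ge \dots \ge \lambda_d$, and both have orthonormal eigenbases.

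\textbf{Eigenvalue stability.} We apply Weyl's inequality to $\Sigma_Q = \Sigma_P + E$. Concretely, use the Courant--Fischer characterization
\[
\lambda_k(A) = \max_{\dim V = k}\ \min_{x \in V,\ \|x\|=1} x^\top A x .
\]
For every unit vector $x$ we have
\[
x^\top \Sigma_P x - \|E\|_2 \le x^\top \Sigma_Q x \le x^\top \Sigma_P x + \|E\|_2 ,
\]
and taking max--min over subspaces on each side gives $|\lambda_k(\Sigma_P) - \lambda_k(\Sigma_Q)| \le \|E\|_2$ for every $k$. The second inequality follows from the norm relation $\|E\|_2 \le \|E\|_F$ recorded in Section~\ref{sec:setup}: $\|E\|_2^2$ is the largest squared singular value, which is at most the sum of all squared singular values, namely $\|E\|_F^2$. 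Maximizing over $k$ then yields the stated bound by $D_\Sigma(P,Q)$.

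\textbf{Subspace stability.} We use the Davis--Kahan $\sin\Theta$ theorem in the variant of \citep{yu2015useful}. That result states that, for symmetric $\Sigma$ and $\hat\Sigma$ with eigengap $\gamma = \lambda_r(\Sigma) - \lambda_{r+1}(\Sigma) > 0$ of the reference matrix,
\[
\|\sin\Theta(\hat U, U)\|_F \le \frac{2 \min\bigl(\sqrt{r}\,\|\hat\Sigma - \Sigma\|_2,\ \|\hat\Sigma - \Sigma\|_F\bigr)}{\gamma}.
\]
It requires no separation condition on the perturbed matrix, only that $\gamma > 0$. Taking $\Sigma = \Sigma_P$ and $\hat\Sigma = \Sigma_Q$, we bound the minimum by its second argument, $\|E\|_F$. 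Since the operator norm of $\sin\Theta$ is at most its Frobenius norm, we obtain
\[
\|\sin\Theta(U_P, U_Q)\|_2 \le \|\sin\Theta(U_P, U_Q)\|_F \le \frac{2\|E\|_F}{\gamma}.
\]

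The hypothesis $D_\Sigma < \gamma$ is not needed for this inequality itself. It serves to make the bound informative, and, combined with the eigenvalue part, it guarantees that the top-$r$ eigenspace of $\Sigma_Q$ is well defined. Indeed, by Weyl, $\lambda_r(\Sigma_Q) - \lambda_{r+1}(\Sigma_Q) \ge \gamma - 2D_\Sigma$, so $U_Q$ is unambiguous whenever $D_\Sigma < \gamma/2$. In the regime $\gamma/2 \le D_\Sigma < \gamma$ we interpret $U_Q$ as any choice of top-$r$ orthonormal eigenvectors, and the Yu--Wang--Samworth bound still applies.

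\textbf{Main obstacle.} The delicate step is the subspace bound. The classical Davis--Kahan theorem requires a gap between the spectrum of one matrix's top block and the other matrix's complementary block, and verifying that directly would only give a gap of order $\gamma - D_\Sigma$ and hence a constant depending on $D_\Sigma/\gamma$. Invoking the Yu--Wang--Samworth form avoids this and produces exactly the factor $2/\gamma$.

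If that result were not assumed, I would derive it as follows. Start from the identity
\[
\|\sin\Theta\|_F^2 = \|U_{P,\perp}^\top U_Q\|_F^2 .
\]
Then use the inequality
\[
\gamma\,\|\sin\Theta\|_F^2 \le 2\bigl\langle \Sigma_P,\ U_P U_P^\top - U_Q U_Q^\top \bigr\rangle .
\]
Next, bound the right-hand side via the optimality of $U_Q$ for $\Sigma_Q$: this gives a quantity at most $2\langle E,\ U_Q U_Q^\top - U_P U_P^\top \rangle$. Cauchy--Schwarz then yields at most $2\|E\|_F \cdot \sqrt{2}\,\|\sin\Theta\|_F$. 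Tightening this to the constant $2$ requires the refined argument of \citep{yu2015useful}.
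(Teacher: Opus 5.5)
Your argument is correct, and it has the same skeleton as the paper's proof. For the eigenvalues you use Weyl plus $\|\cdot\|_2\le\|\cdot\|_F$; your Courant--Fischer computation is just the proof of Weyl. For the subspace you use a Davis--Kahan bound against the gap $\gamma$ of $\Sigma_P$.

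The real difference is where you pay for the Frobenius norm. The paper uses the operator-norm form $\|\sin\Theta(U_P,U_Q)\|_2\le 2\|\Sigma_P-\Sigma_Q\|_2/\gamma$ and then relaxes the perturbation, $\|\Sigma_P-\Sigma_Q\|_2\le\|\Sigma_P-\Sigma_Q\|_F$. You use the Yu--Wang--Samworth Frobenius form $\|\sin\Theta\|_F\le 2\|\Sigma_P-\Sigma_Q\|_F/\gamma$ and relax the angle instead, $\|\sin\Theta\|_2\le\|\sin\Theta\|_F$. Your intermediate bound controls all $r$ principal angles at once. The paper's keeps the operator norm of the perturbation, which can be much smaller than $D_\Sigma$.

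Your choice is also what creates your ``main obstacle.'' For the operator-norm statement nothing refined is needed. Classical Davis--Kahan with gap $\lambda_r(\Sigma_Q)-\lambda_{r+1}(\Sigma_P)\ge\gamma-\|E\|_2$ gives $\|\sin\Theta\|_2\le 2\|E\|_2/\gamma$ when $\|E\|_2\le\gamma/2$. When $\|E\|_2>\gamma/2$, the right-hand side exceeds $1\ge\|\sin\Theta\|_2$, so the bound holds trivially.

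As you say, in both routes $D_\Sigma<\gamma$ is not needed for validity. The paper presents it as the condition under which Davis--Kahan applies, which overstates its role. Note, though, that it does not make the bound informative either, since $2D_\Sigma/\gamma<1$ requires $D_\Sigma<\gamma/2$.

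Finally, your fallback is better than you think, because the curvature step holds without the factor $2$: $\gamma\|\sin\Theta\|_F^2\le\langle\Sigma_P,\,U_PU_P^\top-U_QU_Q^\top\rangle$. To see this, expand $U_Q$ in the eigenbasis of $\Sigma_P$ and use $\lambda_j\ge\lambda_r$ for $j\le r$ and $\lambda_j\le\lambda_{r+1}$ for $j>r$. Your Ky Fan optimality and Cauchy--Schwarz steps then give $\|\sin\Theta\|_F\le\sqrt{2}\,\|E\|_F/\gamma$. That already beats the constant $2$ without the refined Yu--Wang--Samworth argument.
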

\vspace{-0.7em}
The subspace perturbation bound is informative when the covariance discrepancy satisfies
\vspace{-0.3em}
\[
D_{\Sigma}(P,Q) \ll \gamma,
\] 
\vspace{-0.2em}
where $\gamma$ denotes the eigengap of the population covariance matrix. When the eigengap is small, even minor perturbations in the covariance structure may result in substantial instability of the estimated principal subspaces. The following result shows that controlling covariance-level dependence divergence is sufficient for stability of common spectral learning procedures.
Together, Theorems 1–3 thus form a hierarchy: impossibility → instability bound → positive guarantee. This 
collectively establishes $D_\Sigma/\delta$ as a 
computable, theoretically grounded criterion for 
structural evaluation summarized in Table~\ref{tab:theorems} (Appendix). Generative systems deployed in 
dependence-sensitive scientific tasks should be 
assessed not only for marginal realism but for 
their ability to preserve the joint structure that 
those tasks actually depend on 
\citep{ovadia2019can,  zhao2018evaluating}.

\vspace{-0.4em}
\section{Synthetic Examples: Isolating Dependence Effects Under Perfect Marginal Fidelity}
\label{sec:Examples}
\vspace{-0.5em}
The two examples below make the results of 
Section~\ref{sec:Main_results} concrete. In each case marginal 
distributions are identical by construction, so 
every observable difference traces entirely to 
dependence structure, the cleanest possible 
isolation of the effect. The tail-dependence failure 
in Appendix~B.1 lies strictly outside the 
covariance-level scope; Examples~5.1 and~5.2 
directly illustrate Theorems~2 and~3 respectively.
\vspace{-0.5em}
\subsection{Example I: Correlation Sign Flip --- Regression Instability}
\vspace{-0.5em}
Our first example provides a direct illustration of Theorem 2, showing that linear dependence
divergence alone can invert inferential conclusions.
\vspace{-0.4em}
\paragraph{Construction.}
Let $P$ and $Q$ be bivariate normal distributions with zero means and unit variances, differing only in the sign of correlation: \vspace{-0.5em}
\[
\Sigma_P =
\begin{pmatrix}
1 & \rho \\
\rho & 1
\end{pmatrix},
\quad
\Sigma_Q =
\begin{pmatrix}
1 & -\rho \\
-\rho & 1
\end{pmatrix},
\quad \rho \in (0,1).
\]
Both distributions share identical marginals $N(0,1)$.
\vspace{-0.8em} 
\paragraph{Downstream task.}
Consider the population least-squares regression slope \citep{anderson2003}
\[
\beta(P) := \frac{\mathrm{Cov}_P(X,Y)}{\mathrm{Var}_P(X)}
\]
\vspace{-0.7em}
Under these distributions, \vspace{-0.5em} \[
\beta(P)=\rho, \qquad \beta(Q)=-\rho.
\] 

Both distributions satisfy \vspace{-0.7em}
\[
\Var_P(X)=\Var_Q(X)=1,
\]

so the conditions of Theorem~2 hold with $\sigma_X^2 = 1$.
\vspace{-0.8em} 
\paragraph{Quantitative instability.}
The coefficient difference satisfies
\vspace{-0.2em} 
\[
|\beta(P) - \beta(Q)| = 2|\rho|,
\]
\vspace{-0.8em}
while the covariance divergence is
\[
\|\Sigma_P - \Sigma_Q\|_F = 2\sqrt{2}\,|\rho|.
\]
Thus,
\vspace{-0.9em}
\[
|\beta(P) - \beta(Q)| = \frac{1}{\sqrt{2}} \|\Sigma_P - \Sigma_Q\|_F,
\]
\vspace{-0.3em}
which matches the bound in Theorem~2 with equality.

\begin{figure}[t]
\centering
\begin{tcolorbox}[
    colback=white,
    colframe=black,
    boxrule=0.6pt,
    arc=2pt,
    width=0.75\textwidth,
    boxsep=3pt,
    left=2pt,right=2pt,top=2pt,bottom=2pt
]
\centering
\includegraphics[width=\textwidth]{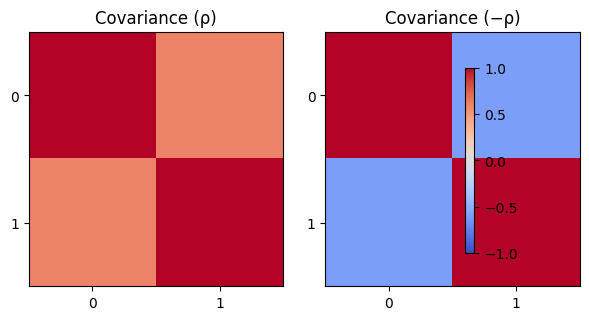}
\end{tcolorbox}
\caption{Covariance structures for two bivariate normal distributions with identical marginals but opposite correlation signs. The change in dependence structure reverses the population regression slope, illustrating inferential instability under dependence divergence.}
\label{fig:cov_flip}
\end{figure}
\vspace{-0.3em}
This example shows that even the \emph{sign} of 
an inferred relationship can reverse under 
dependence divergence that leaves all marginals 
unchanged, a qualitative failure, not merely a 
quantitative one. Figure~\ref{fig:cov_flip} illustrates the 
mechanism directly: the covariance matrices differ 
only off-diagonally, yet the regression slope 
flips from $\rho$ to $-\rho$.
\vspace{-0.3em} 

\subsection{Example II: PCA Subspace Stability Under Controlled
Covariance Perturbation}
\label{ex:pca}
\vspace{-0.6em}
The previous example showed what goes wrong when 
dependence is discarded. This example asks the 
complementary question: when covariance-level 
fidelity \emph{is} maintained, what does it 
actually guarantee? We construct a minimal setting 
that makes Theorem~\ref{thm:pca} concrete, exposing both the 
stability it delivers and the sharp transition at 
the eigengap threshold.

\paragraph{Construction.}
\vspace{-0.9em}
Let $P = \mathcal{N}(0, \Sigma_P)$ where
\vspace{-0.8em}
\begin{equation}
    \Sigma_P = \begin{pmatrix} 3 & 0 \\ 0 & 1 \end{pmatrix},
\end{equation}
so that $\lambda_1(\Sigma_P) = 3$, $\lambda_2(\Sigma_P) = 1$, and the 
eigengap is $\delta := \lambda_1(\Sigma_P) - \lambda_2(\Sigma_P) = 2$. 
The leading eigenvector of $\Sigma_P$ is $u_P = (1, 0)^\top$.
\vspace{-0.03em}
We introduce a family of generative distributions 
$Q_\varepsilon = \mathcal{N}(0, \Sigma_{Q_\varepsilon})$ parameterized 
by $\varepsilon \geq 0$, where
\vspace{-0.5em}
\begin{equation}
    \Sigma_{Q_\varepsilon} = \begin{pmatrix} 3 & \varepsilon \\ 
    \varepsilon & 1 \end{pmatrix}.
\end{equation}
\vspace{-0.1em} 
Both $P$ and $Q_\varepsilon$ share identical marginal variances for all 
$\varepsilon$, so the perturbation is invisible to any univariate 
diagnostic. The covariance divergence is
\vspace{-0.5em}
\begin{equation}
    D_\Sigma(P, Q_\varepsilon)
    = \|\Sigma_P - \Sigma_{Q_\varepsilon}\|_F
    = \sqrt{2}\,\varepsilon.
\end{equation}

\vspace{-0.8em}
\paragraph{Theoretical guarantee.}
\vspace{-0.6em}
Theorem~\ref{thm:pca} gives the Davis--Kahan bound
\vspace{-0.3em}
\begin{equation}
    \|\sin\angle(u_P,\, u_{Q_\varepsilon})\|_2
    \;\leq\;
    \frac{2D_\Sigma(P, Q_\varepsilon)}{\delta}
    \;=\;
    \sqrt{2}\,\varepsilon.
\end{equation}
\vspace{-0.1em} This bound is meaningful when $\varepsilon < \sqrt{2}$, the regime where 
the perturbation stays below the eigengap. Beyond this threshold the bound 
exceeds one and loses informativeness, reflecting that large perturbations 
relative to the eigengap can rotate the principal subspace arbitrarily.
\vspace{-0.8em}
\paragraph{Exact subspace angle.}
The leading eigenvector of $\Sigma_{Q_\varepsilon}$ can be computed 
in closed form, giving
\vspace{-0.5em}
\begin{equation}
    \sin\angle(u_P,\, u_{Q_\varepsilon})
    = \frac{|\varepsilon|}{\sqrt{\bigl(1 + \sqrt{1+\varepsilon^2}
    \bigr)^2 + \varepsilon^2}}.
\end{equation}
\vspace{-0.1em}
This near-equality confirms that the Davis--Kahan 
bound is tight in the stable regime, the 
theoretical guarantee is not just valid but 
genuinely informative.
\vspace{-0.8em}
\paragraph{Observations.}
Figure~\ref{fig:pca_stability} summarizes the results.
In the stable regime ($\varepsilon < \sqrt{2}$, shaded region), the 
principal subspace of $Q_\varepsilon$ tracks that of $P$ closely, and 
the theoretical bound is nearly tight. At $\varepsilon = 0.5$, the leading 
eigenvector of $Q_\varepsilon$ deviates only slightly from $u_P$, and 
downstream procedures relying on this subspace remain reliable.
Once $\varepsilon$ crosses the eigengap threshold, the picture changes: 
at $\varepsilon = 2.5$, the leading eigenvector has rotated substantially, 
and any downstream method that depends on the principal direction of $P$ 
will draw incorrect conclusions from $Q_\varepsilon$. Crucially, this entire transition is invisible to 
any marginal diagnostic, the univariate 
distributions; of $Q_\varepsilon$ are identical to 
those of $P$ throughout.

\begin{figure}[t]
\centering
\begin{tcolorbox}[
    colback=white,
    colframe=lightblue,
    boxrule=0.6pt,
    arc=2pt,
    width=0.8\textwidth,
    boxsep=3pt,
    left=2pt,right=2pt,top=2pt,bottom=2pt
]
\centering
\includegraphics[width=\textwidth]{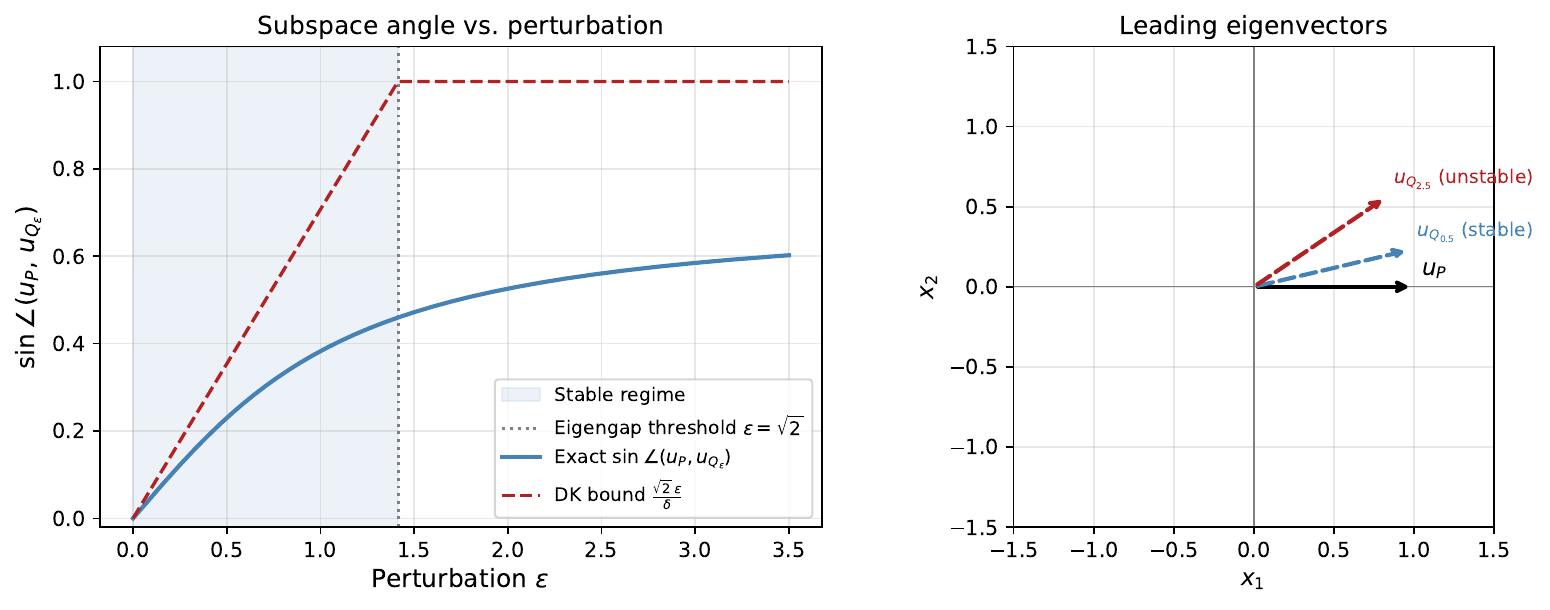}
\end{tcolorbox}
\caption{PCA subspace instability under covariance perturbation.
\emph{Left:} Subspace angle $\sin\angle(u_P, u_{Q_\varepsilon})$ as a 
function of perturbation $\varepsilon$. The dashed line shows the 
Theorem~\ref{thm:pca} Davis--Kahan bound; the shaded region marks the 
stable regime ($\varepsilon < \sqrt{2}$). The bound is nearly tight 
throughout and becomes vacuous at the eigengap threshold (dotted vertical 
line). \emph{Right:} Leading eigenvectors at $\varepsilon = 0.5$ (stable) 
and $\varepsilon = 2.5$ (unstable), illustrating the qualitative transition 
in subspace alignment that marginal diagnostics cannot detect.}
\label{fig:pca_stability}
\end{figure}

\vspace{-0.9em}

\section{Discussion: Dependence Fidelity for Trustworthy Generative Modeling}
\label{sec:Discussions}
\vspace{-0.7em}
The theoretical and synthetic results converge on a 
single conclusion: for dependence-sensitive tasks, 
trustworthiness is a structural property, not a 
marginal one. Exact marginal agreement leaves the 
joint covariance structure entirely unconstrained 
(Theorem~1), even moderate covariance distortions 
can invert regression coefficients (Theorem~2), and 
explicit control of covariance divergence yields 
quantitative stability guarantees for PCA 
(Theorem~3).

\textbf{Why marginal evaluation is structurally 
blind.} The dependence failures demonstrated in 
Section~5 are invisible to marginal diagnostics by 
construction: they cannot be detected by increasing 
sample size, improving marginal fit, or refining 
likelihood-based objectives, because none of those 
objectives involve the off-diagonal covariance 
\citep{theis2016, zhao2018evaluating}. This is not a 
limitation of any particular method; it is a 
logical consequence of what marginal fidelity 
measures.

\textbf{Implications for generative architectures.} 
Although the analysis is model-agnostic, some 
architectures are structurally more exposed. 
Diffusion models may accumulate covariance 
distortions across denoising steps, particularly in low-variance eigenmodes , and the diagonal posterior assumption in VAEs constrains the approximate posterior
covariance to be diagonal in latent space \citep{ho2020, song2021, wang2025spectral}. While the aggregate marginal posterior and the
decoder mapping can in principle capture dense data-space correlations, in practice this
architectural bottleneck empirically leads to dependence collapse: the VAE fails to reproduce
off-diagonal covariance structure, as quantified by $D_\Sigma$ and documented in Appendix B.4. 
Addressing this requires full-covariance posteriors, 
structured parameterizations such as low-rank or 
Cholesky-based covariance models, or explicit 
post-generation diagnostics 
\citep{kingma2014, louizos2016structured, dai2019diagnosing}.

\textbf{Practical diagnostic.} $D_\Sigma$ is 
computable from samples as $\hat{D}_\Sigma = 
\|\hat{\Sigma}_P - \hat{\Sigma}_Q\|_F$, making it 
straightforward to incorporate alongside existing 
metrics such as FID and precision/recall 
\citep{borji2022pros}. In high-dimensional settings 
($d \gg n$), shrinkage estimators such as 
Ledoit--Wolf \citep{ledoit2004well} provide reliable 
covariance estimates; the practical guideline is 
$n \approx 5d$ for stable estimation. Across three datasets spanning image data 
(Fashion-MNIST, $n/p = 1{,}200$), bulk RNA-seq 
(TCGA-BRCA, $n/p = 11.1$), and a small-sample 
stress test (Alzheimer's, $n/p = 2.26$), 
$D_\Sigma/\delta$ consistently separates 
structure-discarding from structure-preserving 
generators where KS marginal distances cannot (Table~\ref{tab:empirical_summary}; 
Figure~\ref{fig:diagnostic_summary}). The KS column shows all five generators as 
broadly comparable;  $D_\Sigma/\delta$ 
and RV, by contrast, clearly separate them; and this structural gaps translates directly into inferential consequences. The marginal diagnostic cannot see 
any of this; the dependence diagnostic sees all of it.

\textbf{Scope and limitations.} The guarantees apply to 
covariance-level (second-order) structure and directly
cover linear and spectral procedures. Appendix~\ref{app:B7} shows that MMD on TCGA-BRCA also provides modest additional separation between structure-discarding and structure-preserving 
generators, but it does not provide an actionable stability threshold. 
Appendix~\ref{app:B7} extends this finding to Fashion-MNIST, where FID actively \emph{favours} the VAE over the structure-preserving Gaussian baseline ($72.02$ vs.\ 
$192.39$), confirming that perceptual and structural 
quality are orthogonal properties that require separate 
diagnostics. Higher-order dependence, 
tail behaviour, conditional structure, nonlinear 
interactions, none of these are captured by $D_\Sigma$, 
as illustrated by the Gaussian vs.\ t-copula failure 
in Appendix~B.1.

\vspace{-0.9em} 
\section{Conclusion}
\label{sec:Conclusion}
\vspace{-0.9em} 
This work establishes a simple but consequential 
point: for generative models used in downstream 
scientific analysis, trustworthiness is a structural 
property that marginal evaluation cannot certify. A model can pass every univariate goodness-of-fit test while its synthetic
data silently inverts regression directions and collapses principal subspaces,
because these linear and spectral failures live entirely in the off-diagonal
covariance, which marginal diagnostics do not see.
Joint extreme-event misrepresentation is a distinct failure mode: as
established in Remark~1 and Appendix~\ref{app:B1}, tail dependence lies outside the
scope of $D_\Sigma$ and requires copula-level diagnostics beyond second-order
structure. We formalized this through three results. Theorem~1 
shows the gap is not merely possible but unbounded: 
$D_\Sigma$ can be arbitrarily large while all 
marginals match exactly. Theorem~2 quantifies the 
downstream cost: covariance divergence directly 
bounds regression instability, with equality 
achievable, meaning sign reversals are not edge 
cases but the generic outcome when dependence is 
discarded. Theorem~3 turns the argument around: 
bounding $D_\Sigma$ is \emph{sufficient} to 
guarantee PCA stability via Davis--Kahan, giving 
practitioners an actionable threshold 
($D_\Sigma/\delta < 1$) rather than a qualitative 
warning. These guarantees are computable. Across three domains, Fashion-MNIST images ($n = 60{,}000$), TCGA-BRCA bulk RNA-seq ($n = 1{,}111$), and Alzheimer's gene expression ($n = 113$), $D_\Sigma/\delta$ correctly identifies structure-discarding generators as unstable and structure-preserving ones as stable, even in regimes where KS marginal distances are comparable.

Covariance-level fidelity is not a replacement for existing metrics. Appendix~\ref{app:B2} shows that it is orthogonal to perceptual quality: FID favours the VAE (72.02 vs.\ 192.39), while $D_\Sigma/\delta$ favours the Gaussian baseline (0.03 vs.\ 0.64). This separation is essential; a model can appear realistic yet be structurally unreliable, or conversely, appear unrealistic while preserving the dependence structure required for valid inference. The VAE dependence collapse (Appendix~\ref{app:B4}) illustrates this distinction. It reflects the diagonal posterior constraint in latent space, whose effect on data-space covariance depends on the decoder mapping and training dynamics, and $D_\Sigma/\delta = 0.64$ quantifies precisely how far a well-trained VAE lies from the stability threshold on Fashion-MNIST. Table~\ref{tab:empirical_summary} and Figure~\ref{fig:diagnostic_summary} summarize all five comparisons, while Table~\ref{tab:theory_confirmation} links each theoretical result to its empirical confirmation. Second-order diagnostics capture failures that perceptual metrics cannot detect, even when higher-order structure is present. Extending these guarantees beyond covariance, toward copula-level fidelity, remains an important direction for future work (see Appendix~\ref{app:broader} for broader impacts).

\begin{figure}[t]
\centering
\begin{tcolorbox}[
    colback=white,
    colframe=darkgreen,
    boxrule=0.9pt,
    arc=2pt,
    width=1\textwidth,
    boxsep=3pt,
    left=2pt,right=2pt,top=2pt,bottom=2pt
]
\centering
\includegraphics[width=0.7\textwidth]{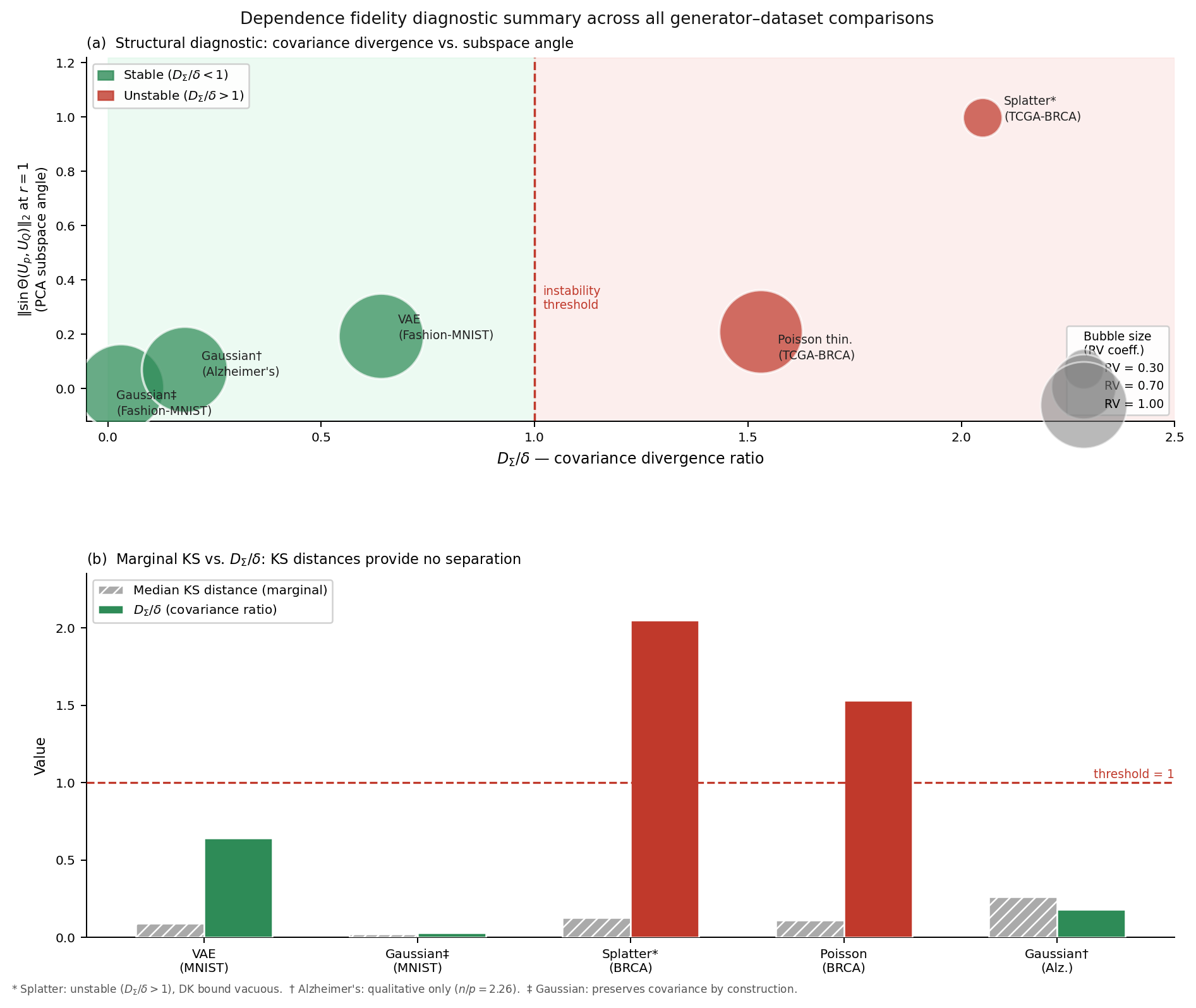}
\end{tcolorbox}
\caption{Diagnostic summary across all generator--dataset comparisons. 
\textbf{(a)} Each generator plotted by $D_\Sigma/\delta$ versus PCA 
subspace angle at $r=1$; bubble area encodes the RV-coefficient. The 
dashed vertical line marks the instability threshold ($D_\Sigma/\delta = 1$, 
Theorem~3). \textbf{(b)} Median KS distance (hatched, lighter) versus 
$D_\Sigma/\delta$ (solid) per generator. KS distances cluster in $[0.02, 0.26]$ 
with no structural separation; $D_\Sigma/\delta$ immediately identifies 
Splatter as unstable. Together, panels (a) and (b) visualize the central claim of 
Table~\ref{tab:empirical_summary}: marginal diagnostics cannot see 
what $D_\Sigma/\delta$ sees.}
\label{fig:diagnostic_summary}
\end{figure}

\newpage
\bibliographystyle{abbrvnat}
\bibliography{neurips2026}    % 

\newpage
\section*{Appendix Contents}
\addcontentsline{toc}{section}{Appendix Contents}

This appendix is organised as follows.
Appendix~\ref{app:proof} contains complete proofs of all theoretical results.
Appendix~\ref{app:synthetic} presents synthetic and empirical experiments organised by
dataset and generative architecture.
Appendix~\ref{app:B6} provides a unified summary of all empirical findings with
consolidated tables.
Appendix~\ref{app:C} describes reproducibility details.
Appendix~\ref{app:broader} contains the extended Broader Impacts discussion.

\bigskip

\begin{itemize}

% =========================
% Appendix A
% =========================
\item \textbf{Appendix A: Proofs of Main Results} \hfill  \mbox{p.~\pageref{app:proof}}

\begin{itemize}
    \item A.0 Auxiliary results (Lemmas A.1--A.2) \hfill \mbox{p.~\pageref{app:A0}}
    \item A.1 Proof of Theorem~1 (marginal fidelity does not imply dependence fidelity) \hfill \mbox{p.~\pageref{app:A1}}
    \item A.2 Proof of Theorem~2 (dependence divergence induces inferential instability) \hfill \mbox{p.~\pageref{app:A2}}
    \item A.3 Proof of Theorem~3 (stability of PCA under covar.-level dependence fidelity) \hfill \mbox{p.~\pageref{app:A3}}
\end{itemize}

\vspace{0.2em}

% =========================
% Appendix B
% =========================
\item \textbf{Appendix B: Experiments and Empirical Validation} \hfill \mbox{p.~\pageref{app:synthetic}}

\begin{itemize}
    \item B.1 Gaussian vs.\ $t$-copula: tail dependence failure (synthetic) \hfill \mbox{p.~\pageref{app:B1}}
    
    \item B.2 VAE dependence collapse on Fashion-MNIST ($n=60{,}000,\ p=50,\ n/p=1{,}200$) \hfill \mbox{p.~\pageref{app:B2}}
    
    \item B.3 Small-sample robustness: Alzheimer's gene expression ($n=113,\ p=50,\ n/p=2.26$) \hfill \mbox{p.~\pageref{app:B3}}
    
    \item B.4 Dependence Collapse in Variational Autoencoders\hfill \mbox{p.~\pageref{app:B4}}
    
    \item B.5 Dependence fidelity on TCGA-BRCA bulk RNA-seq ($n=1{,}111,\ p=100,\ n/p=11.1$) \hfill \mbox{p.~\pageref{app:B5}}
    
    \item B.6 Empirical summary: unified evidence for dependence fidelity across datasets \hfill \mbox{p.~\pageref{app:B6}}

    \item B.7 MMD and FID Confirm the Diagnostic Gap of ($D_\Sigma/\delta$ ) \hfill \mbox{p.~\pageref{app:B7}}
\end{itemize}

\item \textbf{Appendix C: Reproducibility} \hfill \mbox{p.~\pageref{app:C}}

\item \textbf{Appendix D: Broader Impacts} \hfill \mbox{p.~\pageref{app:D}}

\bigskip
\hrule
\bigskip
\section*{List of Figures in Appendix}
\addcontentsline{toc}{section}{List of Figures in Appendix}

\begin{itemize}

\item Figures 4--6: Gaussian vs.\ $t$-copula: joint extreme-event probabilities, marginal CDFs, regression instability \hfill \mbox{p.~\pageref{fig:joint_extreme}}

\item Figures 7--9: Fashion-MNIST: covariance heatmaps, KS distances, PCA subspace angles \hfill \mbox{p.~\pageref{fig:cov_heatmap_fashion}}

\item Figures 10--11: Fashion-MNIST: regression scatter, $D_\Sigma$ vs.\ RV sensitivity \hfill \mbox{p.~\pageref{fig:regression_fashion}}

\item Figure 12: Fashion-MNIST: bootstrap distributions of $\hat{D}_\Sigma$ \hfill \mbox{p.~\pageref{fig:bootstrap_fashion}}

\item Figures 13--16: Alzheimer's: KS distances, covariance heatmap, regression scatter, PCA subspace angles \hfill \mbox{p.~\pageref{fig:ks_alzheimer}}

\item Figure 17: Alzheimer's: $D_\Sigma$ vs.\ RV sensitivity \hfill \mbox{p.~\pageref{fig:dsigma_rv_alzheimer}}

\item Figure 18: VAE dependence collapse: covar. matrices and regre. collapse \hfill \mbox{p.~\pageref{fig:vae_dependence}}

\item Figures 19--22: TCGA-BRCA: KS distances, covariance heatmaps, PCA subspace angles, regression scatter \hfill \mbox{p.~\pageref{fig:brca_heatmap}}

\item Figures 23--25: $D_\Sigma$ vs.\ RV, KS, MMD, $D_\Sigma/\delta$  across generators on TCGA-BRCA \hfill \mbox{p.~\pageref{fig:dsigma_rv_brca_splatter}}

\end{itemize}

\bigskip
\hrule
\bigskip
\section*{List of Tables in Appendix}
\addcontentsline{toc}{section}{List of Tables in Appendix}

\begin{itemize}

\item Table 1: Theoretical results summary \hfill \mbox{p.~\pageref{tab:theorems}}

\item Table 2: Fashion-MNIST: PCA subspace angles \hfill \mbox{p.~\pageref{tab:PCA_subspace_Fashion}}

\item Table 3: Fashion-MNIST: full diagnostics \hfill \mbox{p.~\pageref{tab:fashion_full}}

\item Table 4: Alzheimer's: full diagnostics \hfill \mbox{p.~\pageref{tab:alz_full}}

\item Table 5: TCGA-BRCA: PCA subspace angles \hfill \mbox{p.~\pageref{tab:brca_pca}}

\item Table 6: TCGA-BRCA: extended diagnostics \hfill \mbox{p.~\pageref{tab:brca_extended}}

\item Table 7: Empirical summary across all datasets \hfill \mbox{p.~\pageref{tab:empirical_summary}}

\item Table 8: Theoretical hierarchy confirmed \hfill \mbox{p.~\pageref{tab:theory_confirmation}}

\item Table 9: KS, MMD, $D_\Sigma/\delta$  across generators on TCGA-BRCA \hfill \mbox{p.~\pageref{tab:ks_mmd_dsigma}}

\end{itemize}

\bigskip
\hrule
\bigskip

\end{itemize}

\newpage
\appendix

\section{\quad Proofs of Main Results} 
\label{app:proof}
\label{tab:theorems}  
The three theorems form a logical hierarchy: impossibility, followed by instability, finally a positive guarantee, and the proofs follow the same order. All constructions in the proofs are explicit and chosen to isolate dependence effects while keeping univariate marginals fixed by design.

\begin{table}[h]
\centering
\caption{Summary of the three main theoretical results.}
\small
\begin{tabular}{lllccl}
\toprule
\rowcolor{gray!25}
\textbf{Theorem} & \textbf{Claim} & \textbf{Key quantity} \\
\midrule
Theorem 1 & Marginal fidelity does not imply dependence fidelity & 
  $D_\Sigma(P,Q)$ can be arbitrarily large with $P_j = Q_j$ \\
Theorem 2 & Dependence divergence induces inferential instability & 
  $|\beta(P)-\beta(Q)| \leq \frac{1}{\sqrt{2}\,\sigma^2_X}\|\Sigma_P - \Sigma_Q\|_F$ \\
Theorem 3 & Dependence fidelity yields stability guarantees & 
  $\|\sin\Theta(U_P,U_Q)\|_2 \leq 2D_\Sigma(P,Q)/\gamma$ \\
\bottomrule
\end{tabular}
\label{tab:theorems}
\end{table}

\subsection*{Auxiliary Results}
\label{app:A0} 
This section establishes two auxiliary results used in the proofs of Theorems 1–3. Throughout this appendix, notation follows Section~2.

\paragraph{Lemma 1 (Uniqueness of copula).}
If a multivariate distribution has continuous marginals, its copula is unique. This follows directly from Sklar’s theorem~\cite{sklar1959}.

\paragraph{Lemma 2 (Characteristic kernels).}
If $k$ is characteristic~\citep{gretton2012}, then
\[
\mathrm{MMD}_k(\mu,\nu) = 0 \iff \mu = \nu.
\]
Hence distinct copulas imply strictly positive MMD.

 \subsection*{A.1\quad Proof of Theorem~1}
\label{app:A1}
All constructions are explicit and are chosen to isolate
dependence effects while preserving identical univariate marginals.

\begin{proof}[Proof of Theorem 1]
We construct Gaussian distributions that match exactly in all univariate marginals while differing in their multivariate dependence structure.

\paragraph{Step 1: Construction in dimension $d = 2$.}
Let $\rho \in (0,1)$ and define two centered bivariate Gaussian distributions
\[
P = \mathcal{N}(0, \Sigma_\rho), \qquad
Q = \mathcal{N}(0, \Sigma_{-\rho}),
\]
where
\[
\Sigma_\rho =
\begin{pmatrix}
1 & \rho \\
\rho & 1
\end{pmatrix},
\qquad
\Sigma_{-\rho} =
\begin{pmatrix}
1 & -\rho \\
-\rho & 1
\end{pmatrix}.
\]

\paragraph{Step 2: Exact marginal matching.}
Since both covariance matrices have unit diagonal entries, each coordinate marginal under $P$ and $Q$ is $\mathcal{N}(0,1)$. Hence, for $j=1,2$,

\[
P_j = Q_j = \mathcal{N}(0,1)
\]

Thus the distributions agree exactly on all univariate marginals.

\paragraph{Step 3: Second-order dependence mismatch.}
The covariance matrices differ as follows:
\[
\Sigma_P - \Sigma_Q =
\begin{pmatrix}
0 & 2\rho \\
2\rho & 0
\end{pmatrix}.
\]
Therefore,
\[
D_{\Sigma}(P,Q)
= \|\Sigma_P - \Sigma_Q\|_F
= \sqrt{(2\rho)^2 + (2\rho)^2}
= 2\sqrt{2}\,|\rho| > 0.
\]
Hence the second-order dependence structures are different.

\paragraph{Step 4: Copula mismatch.}
Both $P$ and $Q$ have continuous marginals. By Sklar's theorem, each distribution admits a unique copula. Since the joint Gaussian distributions differ whenever $\rho \neq 0$, it follows that
\[
P \neq Q \quad \Longrightarrow \quad C_P \neq C_Q.
\]
If $D_{\mathrm{cop}}$ is defined using a characteristic-kernel maximum mean discrepancy (MMD) on $[0,1]^2$, then

\[
D_{\mathrm{cop}}(P,Q) := \mathrm{MMD}_k(C_P, C_Q) > 0,
\]

Since the Gaussian copula is uniquely determined by its correlation parameter, and $\rho \neq -\rho$ for any $\rho \neq 0$, the two copulas are distinct.

\end{proof}

\paragraph{Step 5: Control of the dependence gap.}

\textbf{Step 5: Controlling and unbounding the dependence gap.}
For general marginal variance $\sigma^2 > 0$, scale the unit-variance construction
by replacing $X \to \sigma X$ and $Y \to \sigma Y$. The marginals remain matched
(both $\mathcal{N}(0, \sigma^2)$), while the covariance matrices become
\[
\Sigma_P = \sigma^2 \begin{pmatrix} 1 & \rho \\ \rho & 1 \end{pmatrix}, \qquad
\Sigma_Q = \sigma^2 \begin{pmatrix} 1 & -\rho \\ -\rho & 1 \end{pmatrix},
\]
giving
\[
D_\Sigma(P, Q) = \|\Sigma_P - \Sigma_Q\|_F = 2\sqrt{2}\,\sigma^2|\rho|.
\]
Two conclusions follow directly from this formula.

\begin{enumerate}[label=(\roman*)]
    \item \textit{Finite achievability.} For any target value $\varepsilon > 0$, set
    $\sigma^2 = 1$ and $\rho = \varepsilon / (2\sqrt{2})$. Then $\rho \in (0,1)$
    whenever $\varepsilon < 2\sqrt{2}$, and $D_\Sigma(P,Q) = \varepsilon$. Hence any
    positive Frobenius divergence can be achieved in the unit-variance case.

    \item \textit{Unboundedness.} Fix any $\rho \in (0,1)$. As $\sigma^2 \to \infty$,
    the marginals $\mathcal{N}(0,\sigma^2)$ remain matched between $P$ and $Q$ by
    construction, while $D_\Sigma(P,Q) = 2\sqrt{2}\,\sigma^2|\rho| \to \infty$.
    Hence exact marginal agreement imposes no upper bound on covariance divergence.
\end{enumerate}

\paragraph{Step 6: Extension to general.}
For $d > 2$, define $P^{(d)}$ and $Q^{(d)}$ by letting the first two coordinates follow bivariate $P$ and $Q$ as constructed above, and letting the remaining coordinates be independent $\mathcal{N}(0,1)$ variables independent of the first two. Then:

\begin{itemize}
\item All univariate marginals of $P$ and $Q$ on $\mathbb{R}^d$ are identical.
\item The covariance matrices differ only in the $(1,2)$ and $(2,1)$ entries, so
\[
\|\Sigma_{P^{(d)}} - \Sigma_{Q^{(d)}}\|_F = 2\sqrt{2}\,|\rho| > 0.
\]
\item Since the joint distributions differ and marginals are continuous, the copulas differ, implying $D_{\mathrm{cop}}(P^{(d)},Q^{(d)}) > 0$.
\end{itemize}
\vspace{-0.8em}

Theorem~1 thus establishes that marginal agreement provides no constraint on joint dependence structure; see Section~4.1 for discussion.
 
\subsection*{A.2\quad Proof of Theorem~2}
\label{app:A2}

\begin{proof}[Proof of Theorem 2]
Let $P$ and $Q$ be probability distributions on $(X,Y) \in \mathbb{R}^2$ with finite second moments, and assume that
$\mathrm{Var}_P(X) > 0$, $\mathrm{Var}_P(Y) > 0$, $\mathrm{Var}_Q(X) > 0$, and $\mathrm{Var}_Q(Y) > 0$.

\paragraph{Step 1: Population slope formula.}
Throughout this proof we use the centering assumption of 
Theorem~2: $\mathbb{E}_P[X] = \mathbb{E}_P[Y] = 
\mathbb{E}_Q[X] = \mathbb{E}_Q[Y] = 0$.
For distribution $P$, consider the population risk
\[
R_P(b) := \mathbb{E}_P[(Y - bX)^2].
\]
Expanding,
\[
R_P(b)
= \mathbb{E}_P[Y^2]
- 2b\,\mathbb{E}_P[XY]
+ b^2 \mathbb{E}_P[X^2].
\]
Differentiating with respect to $b$ and setting the derivative to zero gives
\[
-2\,\mathbb{E}_P[XY] + 2b\,\mathbb{E}_P[X^2] = 0,
\]

hence
\begin{equation*}
\beta(P) = \frac{\mathbb{E}_P[XY]}{\mathbb{E}_P[X^2]}.
\end{equation*}
By the centering assumption, 
$\mathbb{E}_P[X] = \mathbb{E}_P[Y] = 0$, giving
\begin{equation*}
\mathbb{E}_P[XY] 
= \mathbb{E}_P[XY] - \mathbb{E}_P[X]\,\mathbb{E}_P[Y] 
= \mathrm{Cov}_P(X, Y),
\end{equation*}
and the denominator satisfies
\begin{equation*}
\mathbb{E}_P[X^2] 
= \mathbb{E}_P[X^2] - (\mathbb{E}_P[X])^2 
= \mathrm{Var}_P(X).
\end{equation*}
Therefore,
\begin{equation*}
\beta(P) = \frac{\mathrm{Cov}_P(X,Y)}{\mathrm{Var}_P(X)}.
\end{equation*}

and by the same argument yields
\[
\beta(Q)
= \frac{\mathrm{Cov}_Q(X,Y)}{\mathrm{Var}_Q(X)}.
\]

\paragraph{Step 2: Equal-variance case.}
Assume $\mathrm{Var}_P(X) = \mathrm{Var}_Q(X) = \sigma_X^2 > 0$ and $\mathrm{Var}_P(Y) = \mathrm{Var}_Q(Y) = \sigma_Y^2 > 0$. Then
\[
|\beta(P) - \beta(Q)|
= \left|
\frac{\mathrm{Cov}_P(X,Y) - \mathrm{Cov}_Q(X,Y)}{\sigma_X^2}
\right|.
\]
Define
\[
\Delta_{12}
:= \mathrm{Cov}_P(X,Y) - \mathrm{Cov}_Q(X,Y).
\]
Thus,
\[
|\beta(P) - \beta(Q)| = \frac{|\Delta_{12}|}{\sigma_X^2}.
\]
When $\sigma^2_X = 1$, this simplifies to $|\beta(P)-\beta(Q)| = \frac{1}{\sqrt{2}}
\|\Sigma_P - \Sigma_Q\|_F$, confirming that under the equal-variance premise of
Theorem~2 the relationship holds with exact equality, not merely as an inequality.

\paragraph{Step 3: Relation to covariance matrices.}
Let
\[
\Sigma_P =
\begin{pmatrix}
\sigma_X^2 & \mathrm{Cov}_P(X,Y) \\
\mathrm{Cov}_P(X,Y) & \mathrm{Var}_P(Y)
\end{pmatrix},
\quad
\Sigma_Q =
\begin{pmatrix}
\sigma_X^2 & \mathrm{Cov}_Q(X,Y) \\
\mathrm{Cov}_Q(X,Y) & \mathrm{Var}_Q(Y)
\end{pmatrix}.
\]
If the marginal variances are equal so that $\Sigma_P$ and $\Sigma_Q$ differ only in their off-diagonal entries, then
\[
\Sigma_P - \Sigma_Q
=
\begin{pmatrix}
0 & \Delta_{12} \\
\Delta_{12} & 0
\end{pmatrix}.
\]
The Frobenius norm satisfies
\[
\|\Sigma_P - \Sigma_Q\|_F
= \sqrt{\Delta_{12}^2 + \Delta_{12}^2}
= \sqrt{2}\,|\Delta_{12}|.
\]
Hence
\[
|\Delta_{12}| = \frac{1}{\sqrt{2}} \|\Sigma_P - \Sigma_Q\|_F.
\]

\paragraph{Step 4: Final bound.}
Substituting into the slope difference,
\[
|\beta(P) - \beta(Q)|
= \frac{1}{\sqrt{2}\,\sigma_X^2}
\|\Sigma_P - \Sigma_Q\|_F
\]
This establishes the stated relationship between regression instability and covariance distortion.
\end{proof}

This bound shows that covariance divergence directly controls regression instability; see Section~4.2 for interpretation and the sign-reversal example.

\subsection*{A.3\quad Proof of Theorem~3}
\label{app:A3}

\begin{proof}[Proof of Theorem 3]
Let $P$ and $Q$ be distributions on $\mathbb{R}^d$ with $\mathbb{E}_P[X]=\mathbb{E}_Q[X]=0$ and finite second moments, and define the covariance matrices
\[
\Sigma_P := \mathbb{E}_P[XX^\top], \qquad \Sigma_Q := \mathbb{E}_Q[XX^\top].
\]
Both $\Sigma_P$ and $\Sigma_Q$ are symmetric positive semidefinite, and we write 
\[
D_\Sigma(P,Q) := \|\Sigma_P-\Sigma_Q\|_F .
\]
as defined in Section~2. 
\paragraph{(a) Eigenvalue stability (Weyl).}
By Weyl's inequality, for all $k=1,\ldots,d$, 
letting $\lambda_1(\Sigma)\ge \cdots \ge \lambda_d(\Sigma)$ denote the ordered eigenvalues of a symmetric matrix $\Sigma$.
\[
|\lambda_k(A)-\lambda_k(B)| \le \|A-B\|_2\]
where $\|\cdot\|_2$ denotes the operator norm.

Applying this with $A=\Sigma_P$ and $B=\Sigma_Q$ yields
\[
|\lambda_k(\Sigma_P)-\lambda_k(\Sigma_Q)| \le \|\Sigma_P-\Sigma_Q\|_2 .
\]
Using $\|M\|_2 \le \|M\|_F$, since the spectral norm is bounded by the Frobenius norm for all matrices $M$, we obtain
\[
|\lambda_k(\Sigma_P)-\lambda_k(\Sigma_Q)|
\le \|\Sigma_P-\Sigma_Q\|_F
= D_\Sigma(P,Q).
\]
Taking the maximum over $k$ yields
\[
\max_{1\le k\le d}|\lambda_k(\Sigma_P)-\lambda_k(\Sigma_Q)| \le D_\Sigma(P,Q).
\]

\paragraph{(b) Principal subspace stability (Davis--Kahan).}
Let $U_P \in \mathbb{R}^{d\times r}$ and 
$U_Q \in \mathbb{R}^{d\times r}$ be matrices whose columns 
are the top-$r$ orthonormal eigenvectors of $\Sigma_P$ and 
$\Sigma_Q$, respectively. 
By assumption (small-perturbation 
regime in Theorem~3), we have $D_\Sigma(P,Q) < \gamma$, 
which ensures the required spectral separation between the 
target $r$-dimensional eigenspace of $\Sigma_P$ and its 
orthogonal complement under perturbation by $\Sigma_Q - \Sigma_P$.
This is the precise condition under which the Davis--Kahan 
$\sin\Theta$ theorem applies \citep{davis1970,yu2015useful}. Therefore,
by the Davis--Kahan $\sin\Theta$ theorem for symmetric matrices,
\[
\|\sin\Theta(U_P,U_Q)\|_2 \le \frac{2\|\Sigma_P-\Sigma_Q\|_2}{\gamma}.
\]

Since $\|\Sigma_P-\Sigma_Q\|_2 \le \|\Sigma_P-\Sigma_Q\|_F$, we obtain the stated subspace bound
\[
\|\sin\Theta(U_P,U_Q)\|_2
\le \frac{2\|\Sigma_P-\Sigma_Q\|_F}{\gamma}
= \frac{2D_\Sigma(P,Q)}{\gamma}.
\]

Combining (a) and (b), small covariance dependence divergence $D_\Sigma(P,Q)$ guarantees stability of the PCA spectrum and the leading $r$-dimensional subspace.
\end{proof}

Theorem~3 thus provides a positive counterpart to Theorems~1 and~2: bounding $D_\Sigma(P,Q)$ is sufficient to guarantee both spectral and subspace stability of PCA. See Section~4.3 for discussion and Section~5.1 for the concrete eigengap example.

\paragraph{Remark.}
The Davis--Kahan bound is informative when $\|\Sigma_P - \Sigma_Q\|_F \ll \gamma$. 
If the perturbation is large relative to the eigengap, the bound may become vacuous.

\section{Synthetic Experiments Illustrating Dependence Effects}
\label{app:synthetic}

This appendix provides synthetic experiments that illustrate the mechanisms described in Theorems~1--3. 
All constructions use distributions with identical univariate marginals but different dependence structures, allowing us to isolate the effect of dependence divergence on joint behavior and downstream inference.

\subsection{Example I: Gaussian vs. t-Copula: Tail Dependence Failure (synthetic)}
\label{app:copula}
\label{ex:copula}
\label{app:B1}  
 
This example demonstrates that nonlinear dependence, particularly
tail dependence, can differ substantially even when all marginal
distributions match exactly. This construction directly illustrates the impossibility result of Theorem~1: marginal agreement certifies nothing about joint structure. It also motivates the need for dependence criteria beyond second-order covariance, the scope boundary discussed in Section~\ref{sec:Discussions}.

\paragraph{Construction.}
Let $(X_1, X_2)$ be a bivariate random vector with standard normal marginals. Consider two joint distributions:

\begin{itemize}
\item \textbf{Distribution $P$:} a Gaussian copula with correlation parameter $\rho \in (0,1)$, corresponding to a joint normal distribution with covariance
\[
\Sigma_P =
\begin{pmatrix}
1 & \rho \\
\rho & 1
\end{pmatrix}.
\]

\item \textbf{Distribution $Q$:} a $t$-copula with correlation parameter $\rho$ and degrees of freedom $\nu > 2$, whose marginals are transformed via the probability integral transform to match $\mathcal{N}(0,1)$.

\end{itemize}

Both $P$ and $Q$ share identical marginals $\mathcal{N}(0,1)$ and the same copula dependence parameter $\rho$. Note that $\rho$ here parametrizes the copula structure, not the Pearson correlation of the transformed variables, which will generally differ between $P$ and $Q$.

\paragraph{Dependence discrepancy.}
Although the marginal distributions coincide, the copulas are distinct. The Gaussian copula exhibits zero tail dependence, whereas the $t$-copula exhibits positive tail dependence. Therefore,
\[
D_{\mathrm{cop}}(P,Q) := \mathrm{MMD}_k(C_P, C_Q) > 0,
\]
where $\mathrm{MMD}_k$ denotes a characteristic-kernel (e.g., Gaussian kernel) maximum mean discrepancy defined on the copula space, consistent with Theorem~1 \citep{gretton2012}.

\paragraph{Downstream functional.}
Consider the dependence-sensitive quantity

\[
T(P) := \Pr_{P}(X_1 > u, X_2 > u)
\qquad
T(Q) := \Pr_{Q}(X_1 > u, X_2 > u)
\]
which represents the probability of a joint extreme event at threshold $u$.

\paragraph{Observation.}
For moderate thresholds $u$, the heavy-tail dependence of the $t$-copula yields $T(Q) \gg T(P)$. This discrepancy occurs despite perfect marginal fidelity, demonstrating that marginal realism provides no control over extreme-event behavior (Figure~\ref{fig:joint_extreme}).

This example illustrates a critical failure mode for risk-sensitive applications: generative models that match marginals may still severely misrepresent joint extremes, a phenomenon invisible to marginal-only evaluation \citep{borji2022pros}. 

\begin{figure}[t]
\centering
\begin{minipage}{0.8\linewidth}
\centering
\includegraphics[width=\linewidth]{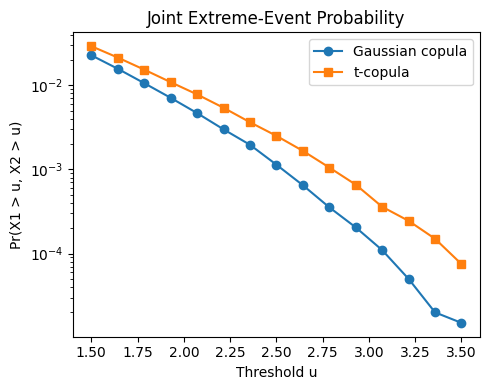}
\caption{Joint extreme-event probabilities for Gaussian and $t$ copulas. Despite identical marginals, the $t$ copula exhibits substantially higher joint tail risk due to heavy-tail dependence.}
\label{fig:joint_extreme}
\end{minipage}
\end{figure}

\paragraph{Experimental Setup.}
 We generate $n = 10^5$ samples from each model. The large sample size ensures that differences reflect population-level structure rather than sampling variability. This setup isolates dependence differences arising from the copula while holding marginal distributions fixed.

\paragraph{Marginal Fidelity.}

Figure~\ref{fig:synthetic_marginals} shows the empirical marginal cumulative distribution functions (CDFs) for both models. The curves are visually indistinguishable, confirming that the univariate marginals coincide by construction. This experiment illustrates the phenomenon described in Theorem~1: agreement in all univariate marginals does not imply agreement in the joint distribution. Marginal diagnostics alone therefore fail to detect dependence differences.
\begin{figure}[t]
\centering
\begin{minipage}{0.8\linewidth}
\centering
\includegraphics[width=\linewidth]{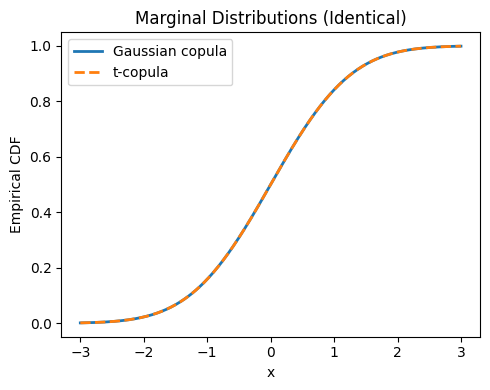}
\caption{Empirical marginal CDFs for Gaussian and $t$-copula samples. The marginals are indistinguishable, demonstrating that marginal fidelity alone cannot detect dependence differences.}
\label{fig:synthetic_marginals}
\end{minipage}
\end{figure}

\paragraph{Dependence Differences.}
We evaluate the joint tail functional $T(P)$ defined above across a range of thresholds $u$
\[
\Pr(X_1 > u, X_2 > u)
\]
The corresponding results are presented in the main text (Figure~\ref{fig:joint_extreme}). The $t$-copula exhibits substantially higher joint tail probabilities due to stronger tail dependence. This demonstrates that identical marginals can correspond to materially different joint risk profiles, consistent with Theorem~1.

%\paragraph{Covariance-Level Distortion.} To illustrate dependence differences at the second-order level, we construct two Gaussian distributions with identical marginals but correlations of equal magnitude and opposite sign:
%\[
%P = \mathcal{N}(0, \Sigma_{\rho}), \qquad
%Q = \mathcal{N}(0, \Sigma_{-\rho}),
%\]
%where
%\vspace{-8pt}
%\[
%\Sigma_{\rho} =
%\begin{pmatrix}
%1 & \rho \\
%\rho & 1
%\end{pmatrix}.
%\]
%The covariance matrices differ only in the off-diagonal entries, yielding the Frobenius difference
%\vspace{-4pt}
%\[
%\|\Sigma_P - \Sigma_Q\|_F = 2\sqrt{2}\,|\rho|.
%\]

%This construction isolates second-order dependence divergence while preserving the marginal distributions, allowing us to study the effect of covariance changes independently of marginal behavior.

\begin{figure}[t]
\centering
\begin{minipage}{0.8\linewidth}
\centering
\includegraphics[width=\linewidth]{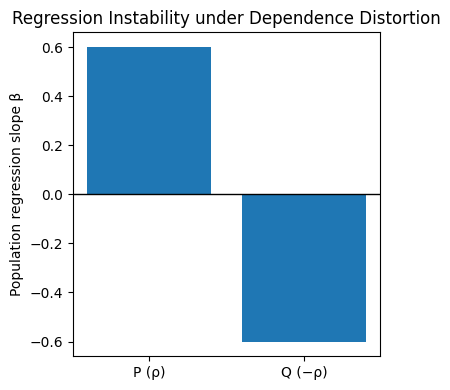}
\caption{Population regression slope under two dependence structures with identical marginal distributions but opposite correlations. Dependence divergence induces a sign reversal in the regression coefficient, illustrating instability of downstream inference.}
\label{fig:synthetic_regression}
\end{minipage}
\end{figure}

\paragraph*{Summary.} This synthetic construction confirms the impossibility result of Theorem 1: marginal agreement certifies nothing about joint dependence structure. The t-copula and Gaussian copula share identical univariate marginals yet differ fundamentally in tail dependence, producing materially different joint extreme-event probabilities invisible to marginal diagnostics. These examples motivate the empirical validation in Appendices B.2–B.5.

\subsection{Empirical Validation on Image Data: VAE Dependence 
Collapse on Fashion-MNIST}
\label{app:B2}

To assess whether the dependence fidelity failures characterized 
theoretically in Section~4 are domain specific, we apply the same diagnostic framework to a standard image benchmark. (Appendix~B.5 presents the analogous analysis on bulk RNA-seq data.)

This experiment is designed to provide empirical 
grounding for the analytical dependence collapse 
argument in Appendix~B.4: the VAE architecture used 
here is precisely the standard diagonal-posterior model 
whose structural covariance elimination is characterised 
theoretically in Appendix~\ref{app:B4}.
Fashion-MNIST 
\cite{xiao2017fashion} consists of $n = 60{,}000$ grayscale clothing 
images ($28 \times 28$ pixels), providing a well-understood setting 
with no domain-specific covariance assumptions. The large sample size 
($n = 60{,}000$) places this experiment well above the $n \approx 5d$ 
stable estimation guideline for all subspace dimensions considered.

\paragraph{Experimental setup.}
We work in the top-$p = 50$ principal component subspace of the real 
training data, computed via PCA on the flattened pixel representations. 
The top-50 PCs explain 86.3\% of total variance, giving $n/p = 1{,}200$ 
and ensuring reliable covariance estimation throughout. We compare two 
generative models with markedly different structural assumptions. The 
first is a standard VAE \citep{kingma2014} with a diagonal-Gaussian 
approximate posterior, trained for 30 epochs (final ELBO loss: 237.15); 
this architecture induces dependence collapse by construction, as 
established in Appendix~B.4. The second is a multivariate Gaussian 
fitted directly to the empirical mean and covariance of the real PCA 
coordinates, which serves as a structure-preserving baseline analogous 
to Poisson thinning in Appendix~B.5: it samples from 
$\mathcal{N}(\hat\mu_{\rm real}, \hat\Sigma_{\rm real})$ and therefore 
preserves covariance by design. Both generators produce $n = 60{,}000$ 
samples projected into the same PCA basis.

For each generator we compute: the Frobenius-norm covariance divergence 
$D_\Sigma$; the ratio $D_\Sigma/\delta$ relative to the leading 
eigengap of $\hat\Sigma_{\rm real}$; 95\% confidence intervals for $\hat D_\Sigma$ ($B = 500$ resamples); 
PCA subspace angles $\|\sin\Theta(\hat U_{\rm real}, \hat U_{\rm 
syn})\|_2$ for $r = 1, 2, 3, 5, 10$; KS marginal distances across all 50 PC dimensions; and population regression coefficients (PC$_1$ regressed on PC$_2$ through PC$_{10}$). We additionally compute the RV-coefficient to assess scale-invariant concordance, and conduct a sensitivity analysis comparing $D_\Sigma$ and $1-\text{RV}$ \cite{robert1976unifying} across 200 random PC subsets of size 20 using Spearman correlation and a two-sample KS test.

% Covariance heatmaps
\begin{figure}[t]
\centering
\begin{minipage}{0.8\linewidth}
\centering
\includegraphics[width=\linewidth]{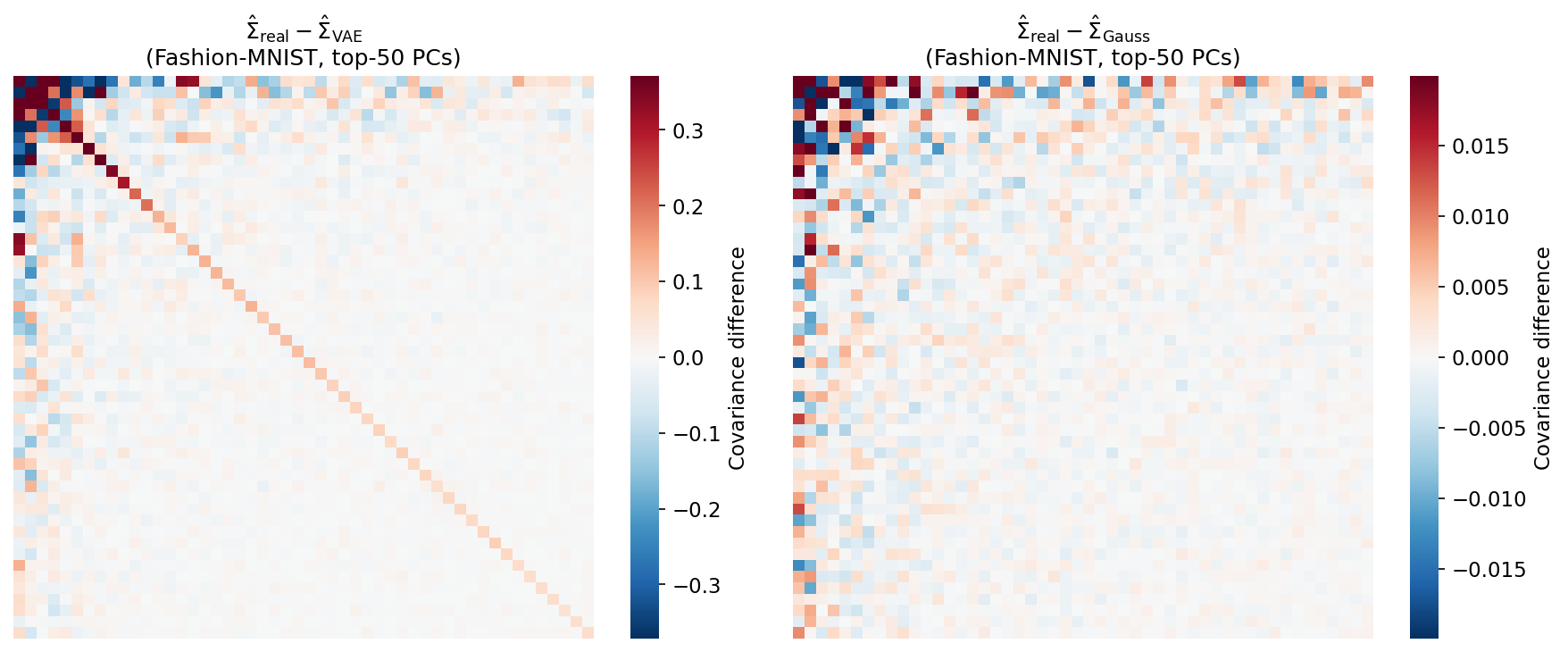}
\caption{Heatmaps of the covariance difference matrix $\hat{\Sigma}_{\rm real} - \hat{\Sigma}_{\rm syn}$ for the VAE (left, range $[-0.35, 0.35]$) and the Gaussian baseline (right, range $[-0.016, 0.016]$) on Fashion-MNIST (top-50 PCA dimensions). Despite broadly comparable marginal KS profiles, the VAE exhibits substantially larger and more structured covariance residuals concentrated in the leading PC dimensions, reflecting the systematic elimination of off-diagonal covariance by the diagonal posterior assumption ($D_{\Sigma} = 4.92$ vs.\ $0.25$).}
\label{fig:cov_heatmap_fashion}
\end{minipage}
\end{figure}

%%%%%%%%%%%%%%%%%%%%%%%%%%
\begin{figure}[t]
\centering
\begin{minipage}{0.8\linewidth}
\centering
\includegraphics[width=\linewidth]{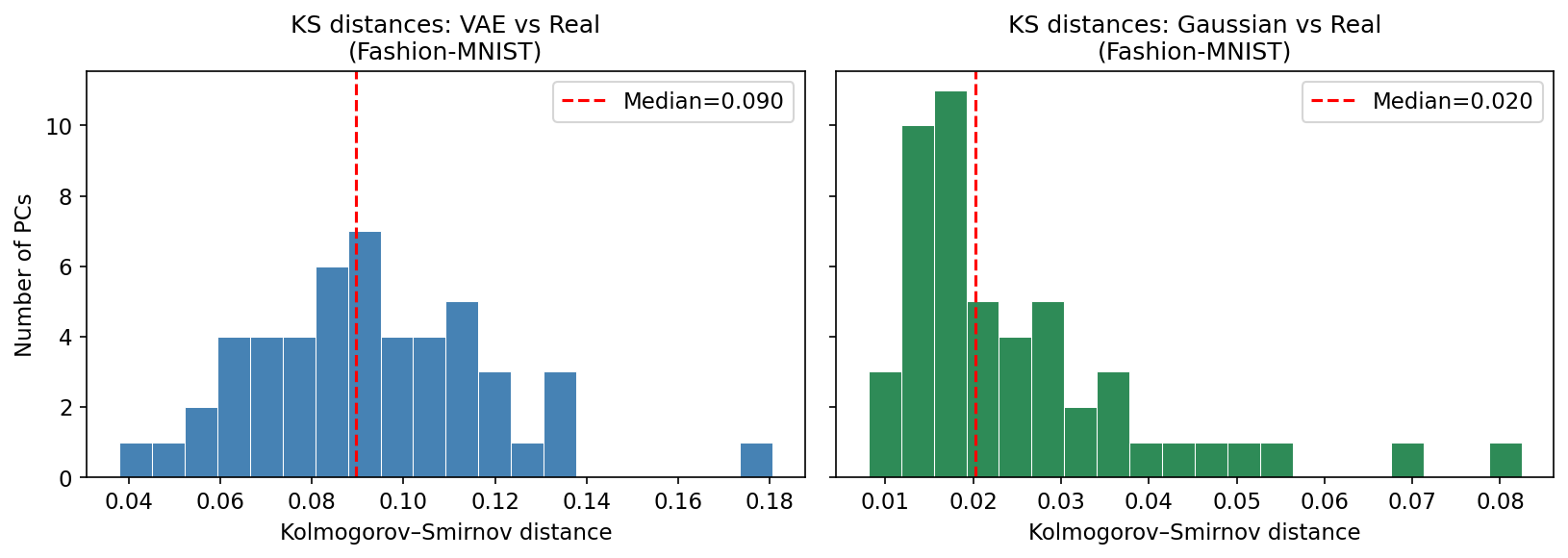}
\caption{Distribution of KS distances across 50 PCA dimensions comparing real Fashion-MNIST data against VAE-generated (left, median = 0.090) and Gaussian baseline (right, median = 0.020) synthetic samples. Both generators achieve similar and small KS profiles, confirming that marginal diagnostics substantially understate the structural gap between the two generators ($D_{\Sigma}(\text{VAE}) = 4.92$ vs.\ $D_{\Sigma}(\text{Gaussian}) = 0.25$).}
\label{fig:ks_fashion}
\end{minipage}
\end{figure}
%%%%%%%%%%%%%%%%%%%%%%%%%%%%%
\paragraph{Covariance-level discrepancy.}
Figure~\ref{fig:cov_heatmap_fashion} visualizes the covariance 
difference matrices $\hat\Sigma_{\rm real} - \hat\Sigma_{\rm syn}$ for 
both generators. The VAE exhibits large, structured residuals 
concentrated in the leading PC dimensions (colour range 
$[-0.35, 0.35]$), reflecting the systematic elimination of 
off-diagonal covariance entries by the diagonal posterior assumption. 
The Gaussian baseline produces residuals two orders of magnitude 
smaller (colour range $[-0.016, 0.016]$), consistent with 
sampling from the empirical covariance directly. The Frobenius-norm 
divergences are:
\begin{equation}
D_\Sigma(\text{VAE}) = 4.92, \qquad 
D_\Sigma(\text{Gaussian}) = 0.25.
\label{eq:fashion_dsigma}
\end{equation}
The leading eigengap of $\hat\Sigma_{\rm real}$ is $\delta = 7.70$ 
(top-5 eigenvalues: 19.81, 12.11, 4.11, 3.38, 2.62), giving ratios:
\begin{equation}
D_\Sigma/\delta\,(\text{VAE}) = 0.64, \qquad 
D_\Sigma/\delta\,(\text{Gaussian}) = 0.03.
\label{eq:fashion_ratio}
\end{equation}
Both generators fall within the stable regime of Theorem~3 at 
$r = 1$ ($D_\Sigma/\delta < 1$), though the VAE sits considerably 
closer to the instability threshold. Bootstrap 95\% confidence 
intervals confirm that the two estimates are well-separated and 
precisely estimated: $\hat D_\Sigma(\text{VAE}) = 4.92\ [4.83, 5.04]$ 
and $\hat D_\Sigma(\text{Gaussian}) = 0.25\ [0.33, 0.56]$, with 
bootstrap standard errors of 0.056 and 0.059 respectively. For the Gaussian baseline, the mean lies entirely above the observed value, reflecting bootstrap upward bias in the near-zero regime, the same phenomenon documented in Appendix B.3. The two generators remain clearly separated and the gap is not attributable to sampling variability.
(Figure~\ref{fig:bootstrap_fashion}).

The RV-coefficient corroborates this ordering: $\text{RV}(\text{VAE}) 
= 0.9835$ versus $\text{RV}(\text{Gaussian}) = 0.9999$. Although both RV values are near one, the residual $1 - \text{RV}$ differs by a factor of 165 across generators (0.0165 vs.\ 0.0001), confirming that $D_\Sigma$ and $1-\text{RV}$ are concordant in direction.

\paragraph{Marginal fidelity.}
Figure~\ref{fig:ks_fashion} shows the distribution of 
Kolmogorov--Smirnov distances across the 50 PC dimensions for each 
generator. The VAE achieves a median KS distance of 0.090 and the 
Gaussian baseline achieves 0.020. Both values are small in absolute 
terms, and crucially the KS profiles do not separate the two generators 
as clearly as $D_\Sigma$ does: even the better-performing Gaussian 
baseline shows non-trivial marginal discrepancies in some dimensions 
due to the Gaussian approximation itself, while the VAE's much larger 
covariance divergence is not reflected proportionally in its marginal 
KS profile. This pattern, marginal diagnostics understating the structural gap, recurs in Appendix~B.5 across a genomic setting, confirming it is not dataset-specific.

\paragraph{PCA subspace stability.}
Table~\ref{tab:fashion_subspace} reports the subspace angles 
$\|\sin\Theta(\hat U_{\rm real}, \hat U_{\rm syn})\|_2$ for $r = 1, 
2, 3, 5, 10$, alongside the Davis--Kahan bound from Theorem~3; 
Figure~\ref{fig:subspace_fashion} plots the same values.

For the VAE, the Theorem~3 bound applies at $r = 1$ and $r = 2$ where 
$D_\Sigma/\delta_r < 1$ (bounds of 0.6393 and 0.6147 respectively), 
and the observed angles of 0.194 and 0.127 satisfy these bounds 
comfortably. At $r = 3$ the local eigengap $\delta_3 = \lambda_3 - 
\lambda_4 = 4.11 - 3.38 = 0.73$ falls below $D_\Sigma = 4.92$, the bound becomes 
vacuous, and the observed angle jumps to 0.690, a sharp transition 
that mirrors the eigengap-crossing phenomenon illustrated in Example 
II (Section~5.2). The angles at $r = 5$ and $r = 10$ remain elevated 
(0.482 and 0.439), indicating sustained subspace misalignment once the 
perturbation exceeds the local eigengap.
For the Gaussian baseline, all observed angles are small throughout 
($\leq 0.040$ for $r \leq 10$), and the Davis--Kahan bound is 
satisfied at every dimension where it applies. This confirms that 
explicit covariance preservation translates directly into subspace 
stability, consistent with the positive guarantee of Theorem~3.

\begin{table}[h]
\centering
\begin{minipage}{0.8\linewidth}
\centering
\caption{PCA subspace angles on Fashion-MNIST across subspace 
dimension $r$ ($n = 60{,}000$, $p = 50$ PCA dimensions, $n/p = 
1{,}200$). The Theorem~3 Davis--Kahan bound uses the local eigengap 
$\delta_r = \lambda_r - \lambda_{r+1}$; ``---'' denotes a vacuous 
bound ($D_\Sigma/\delta_r \geq 1$), where $\|\sin\Theta\|_2$ denotes the principal angle between the leading $r$-dimensional subspaces.}
\label{tab:fashion_subspace}
\vspace{0.5em}
\begin{tabular}{lcccc}
\toprule
\rowcolor{gray!25}
& \multicolumn{2}{c}{VAE ($D_\Sigma/\delta = 0.64$)} 
& \multicolumn{2}{c}{Gaussian ($D_\Sigma/\delta = 0.03$)}  \\
\cmidrule(lr){2-3}\cmidrule(lr){4-5}
\rowcolor{gray!25}
$r$ & $\|\sin\Theta\|_2$ & DK bound 
    & $\|\sin\Theta\|_2$ & DK bound \\
\midrule
1  & 0.1940 & 1.278 & 0.0049 & 0.0652 \\
2  & 0.1266 & 1.2294 & 0.0086 & 0.0628 \\
3  & 0.6900 & ---    & 0.0236 & 0.6932 \\
5  & 0.4817 & ---    & 0.0402 & 1.9028 \\
10 & 0.4385 & ---    & 0.0284 & ---    \\
\bottomrule
\end{tabular}
\end{minipage}
\end{table}
\label{tab:PCA_subspace_Fashion}

\paragraph{Regression instability.}
Figure~\ref{fig:regression_fashion} compares the population regression 
coefficients (PC$_1$ regressed on PC$_2$ through PC$_{10}$) between 
real and synthetic data. The real data coefficients are all close to 
zero in absolute magnitude (range $[-0.001, 0.001]$), 
reflecting the near-orthogonality of principal components in the real 
distribution. The VAE coefficients are scattered far from the identity 
line and span a range two to three orders of magnitude larger (range 
$[-0.35, 0.66]$), with the signs of several coefficients 
reversed relative to the real data, consistent with $D_\Sigma/\delta 
= 0.64$ and the instability bound of Theorem~2. The Gaussian baseline 
coefficients track the real values closely along the identity line 
within a narrow band (range $[-0.030, 0.013]$), reflecting 
its structure-preserving design.

\paragraph{Relationship between $D_\Sigma$ and the RV-coefficient.}
The sensitivity analysis (Figure~\ref{fig:dsigma_rv_fashion}) plots 
$D_\Sigma$ against $1 - \text{RV}$ across 200 random PC subsets of 
size 20. For the VAE, the Spearman correlation is $-0.383$ ($p < 0.001$) and the two-sample KS test gives $p = 0.142$, providing no evidence of distributional divergence between 
$D_\Sigma$ and $1 - \text{RV}$ after standardisation. The 
negative Spearman correlations indicate that the two metrics 
rank subsets differently: subsets dominated by higher PC 
dimensions tend to have large raw $D_\Sigma$ but similar 
correlation structure, reflecting complementary rather than 
redundant sensitivity to covariance mismatch.

For the Gaussian baseline the Spearman correlation is 
$-0.427$ ($p < 0.001$) but the KS $p$-value is $0.000$, indicating 
that while the rank ordering is preserved, the two metrics follow 
different marginal distributions at fine scale. This pattern is 
precisely what the theory predicts: $D_\Sigma$ and $1 - \text{RV}$ 
are measuring related but non-equivalent properties of covariance 
mismatch. Crucially, $D_\Sigma/\delta$ carries eigengap information 
that the RV-coefficient does not, making it the appropriate diagnostic 
when the goal is to assess the stability conditions of Theorems~2 
and~3 rather than covariance similarity in an absolute sense. 
The negative Spearman correlations arise because subsets dominated by higher-PC dimensions tend to have large raw $D_\Sigma$ but similar correlation structure. The normalised form $\tilde{D}_\Sigma = \|C_P - C_Q\|_F$ partially corrects for this scale effect: 3.125 (VAE) versus 0.210 (Gaussian).
%%%%%%%%%%%%%%%%%%%%%%%%%%%%%%%
\begin{figure}[t]
\centering
\begin{minipage}{0.8\linewidth}
\centering
\includegraphics[width=\linewidth]{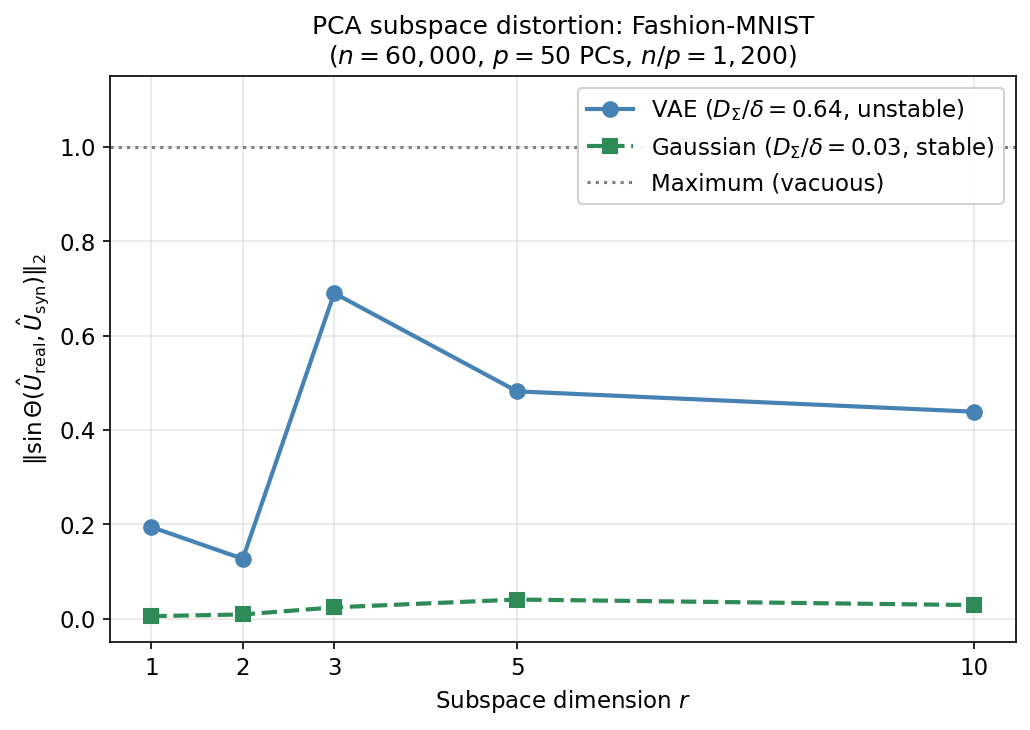}
\caption{PCA subspace angles $\|\sin\Theta(\hat U_{\rm real}, 
\hat U_{\rm syn})\|_2$ as a function of subspace dimension $r$ for 
the VAE ($D_\Sigma/\delta = 0.64$, blue) and the Gaussian baseline 
($D_\Sigma/\delta = 0.03$, green) on Fashion-MNIST ($n = 60{,}000$, 
$p = 50$ PCA dimensions). The VAE angle jumps sharply at $r = 3$ 
when the perturbation crosses the local eigengap, consistent with 
Theorem~3; the Gaussian baseline remains stable throughout. The 
dotted line at 1.0 marks the vacuous-bound threshold.}
\label{fig:subspace_fashion}
\end{minipage}
\end{figure}
%%%%%%%%%%%%%%%%%%%%%%%%%%%%%%
\begin{figure}[t]
\centering
\begin{minipage}{0.8\linewidth}
\centering
\includegraphics[width=\linewidth]{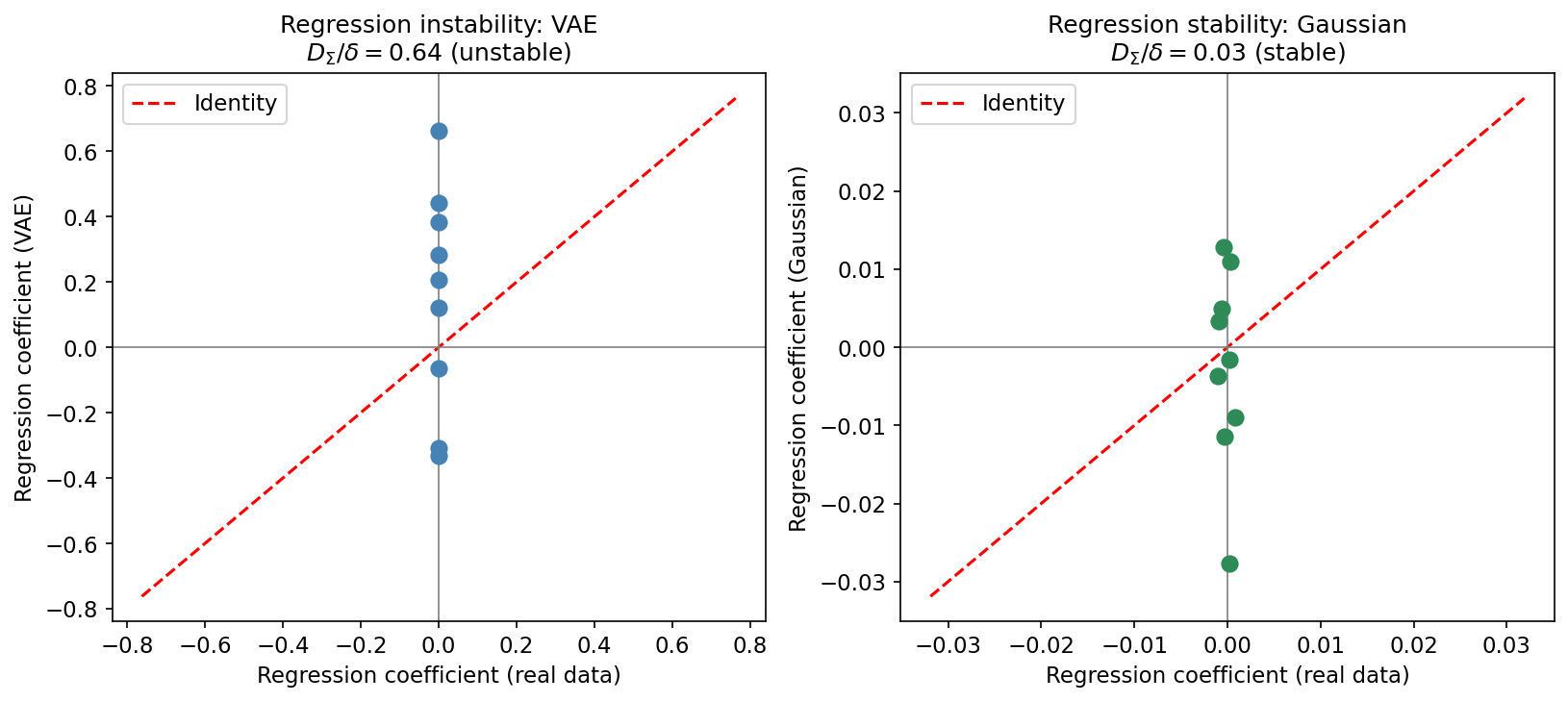}
\caption{Population regression coefficients (PC$_1$ on PC$_2$ through 
PC$_{10}$) estimated from real vs.\ synthetic Fashion-MNIST data. The 
VAE (left) produces coefficients scattered far from the identity line 
with sign reversals, consistent with $D_\Sigma/\delta = 0.64$ and 
Theorem~2. The Gaussian baseline (right) tracks the identity closely 
within a narrow band, consistent with $D_\Sigma/\delta = 0.03$ and 
the stability guarantee of Theorem~3.}
\label{fig:regression_fashion}
\end{minipage}
\end{figure}

%%%%%%%%%%%%%%%%%%%%%%%%%%%%%
% D_Sigma vs RV sensitivity
\begin{figure}[t]
\centering
\begin{minipage}{0.8\linewidth}
\centering
\includegraphics[width=\textwidth]{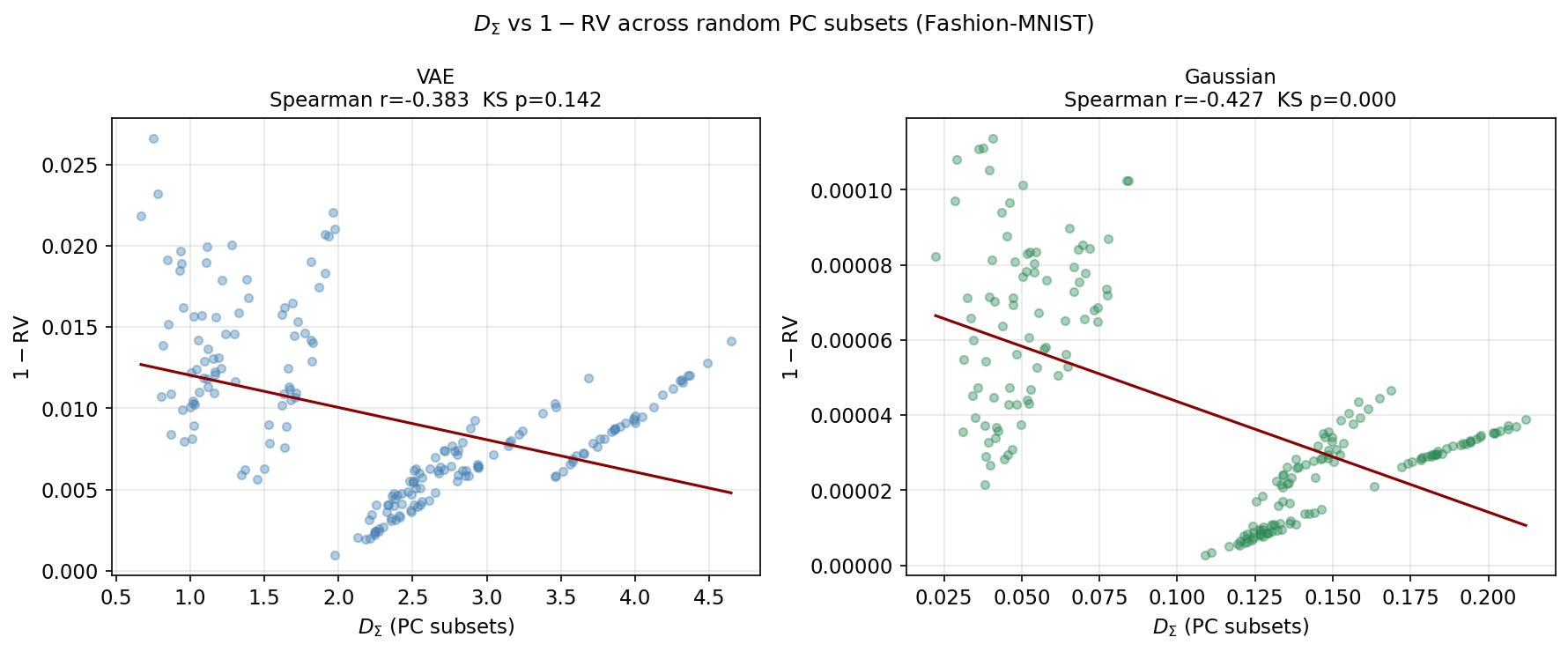}
\caption{$D_\Sigma$ versus $1 - \text{RV}$ across 200 random PC 
subsets of size 20 for the VAE (left) and Gaussian baseline (right) 
on Fashion-MNIST. The VAE shows moderate Spearman correlation 
($r = -0.383$, $p < 0.001$) with KS $p = 0.142$, indicating the two 
metrics are concordant but not identical in distribution. The Gaussian 
baseline shows significant distributional divergence between 
$D_\Sigma$ and $1 - \text{RV}$ (KS $p = 0.000$) despite concordant 
rank ordering, reflecting the scale sensitivity of raw $D_\Sigma$ 
versus the bounded RV-coefficient. Together these results confirm that 
$D_\Sigma/\delta$ and RV are complementary rather than redundant 
diagnostics.}
\label{fig:dsigma_rv_fashion}
\end{minipage}
\end{figure}

% Bootstrap distributions
\begin{figure}[t]
\centering
\centering
\begin{minipage}{0.8\linewidth}
\includegraphics[width=\textwidth]{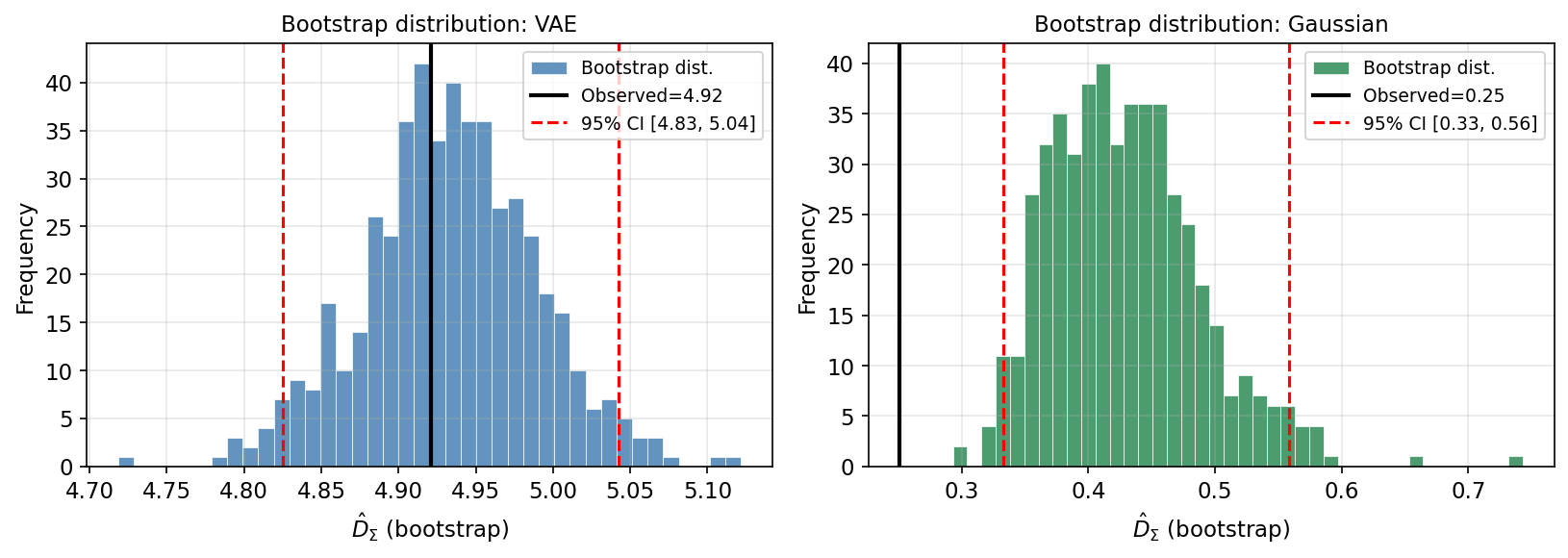}
\caption{Bootstrap distributions of $\hat D_\Sigma$ ($B = 500$ 
resamples) for the VAE (left) and Gaussian baseline (right) on 
Fashion-MNIST. Black vertical lines show the observed values; red 
dashed lines mark the 95\% confidence intervals. The non-overlapping 
CIs ($[4.83, 5.04]$ vs.\ $[0.33, 0.56]$) confirm that the difference 
in $D_\Sigma$ between the two generators,  reflecting severe bootstrap upward bias: the interval lies entirely above
the observed value, yielding zero empirical coverage of the point estimate.
This is a known consequence of resampling instability when $n \ll d$ (see Appendix B.3); the two generators remain clearly separated.}
\label{fig:bootstrap_fashion}
\end{minipage}
\end{figure}

\vspace{-0.5em}
\paragraph{Summary.}
Table~\ref{tab:fashion_full} consolidates all diagnostics for both 
generators.

\begin{table}[h]
\centering
\begin{minipage}{0.80\linewidth}
\centering
\caption{Complete dependence fidelity diagnostics on Fashion-MNIST 
($n = 60{,}000$, $p = 50$ PCA dimensions, $n/p = 1{,}200$). Bootstrap 
95\% CIs based on $B = 500$ resamples. Spearman correlation and KS 
test compare $D_\Sigma$ and $1 - \text{RV}$ across 200 random PC 
subsets of size 20. Note: $^\dagger$Bootstrap CI for the Gaussian baseline reflects right-skew of the bootstrap distribution at near-zero $D_\Sigma$; see Appendix B.3 for discussion.}
\label{tab:fashion_full}
\vspace{0.5em}
\resizebox{\textwidth}{!}{%
\begin{tabular}{lccccccc}
\toprule
\rowcolor{gray!25}
\textbf{Model} & \textbf{$D_\Sigma$} & \textbf{$D_\Sigma/\delta$} & \textbf{$\tilde D_\Sigma$} 
      & \textbf{RV} & \textbf{95\% CI} & %\textbf{Spearman$(D_\Sigma$}, 1\text{-RV})$ 
      \textbf{Spearman$(D_\Sigma$, $1$\text{-RV})}
      & \textbf{KS $p$} \\
\midrule
VAE (diagonal posterior) 
  & 4.92 & 0.64 & 3.125 & 0.9835 
  & [4.83,\ 5.04] & -0.383 & 0.142 \\
Gaussian (structure-preserving) 
  & 0.25 & 0.03 & 0.210 & 0.9999 
  & [0.33,\ 0.56] & -0.427 & 0.000 \\
\bottomrule
\end{tabular}}
\end{minipage}
\end{table}

These results replicate, in a non-genomic image setting with $n/p = 
1{,}200$, the same pattern observed in the TCGA-BRCA experiment: a 
generator that discards joint structure (VAE, via its diagonal 
posterior) exhibits substantially larger covariance divergence, 
elevated subspace angles at dimensions where the perturbation crosses 
the eigengap, and regression coefficients that deviate qualitatively 
from the real data. A generator that preserves covariance by design 
(Gaussian baseline) remains in the stable regime of Theorem~3 
throughout. Importantly, both generators achieve small marginal KS 
distances. The VAE median of 0.090 is consistent with 
reasonable marginal fidelity, confirming that the dependence failures 
documented are invisible to univariate diagnostics. The VAE dependence collapse is not a consequence of poor training: the final ELBO loss of
237.15 is consistent with well-trained behaviour for this architecture and dataset. The
collapse reflects the structural bottleneck imposed by the diagonal posterior assumption, as
analysed in Appendix B.4. This experiment provides direct empirical support for the
dependence collapse mechanism described in Appendix B.4. Crucially, the domain here is image statistics rather than biological co-expression, confirming that the diagnostic generalises beyond genomic applications.

\subsection{Small-Sample Robustness Illustration:
Alzheimer's Gene Expression Data}
\label{app:small_sample}
\label{app:B3}  %

The Fashion-MNIST experiment (Appendix~B.2) operates in a reliable estimation regime ($n/p = 1{,}200$). Here we ask a different question: does $D_\Sigma$ retain directional validity near the estimation boundary ($n/p = 2.26$)?

We use the Alzheimer's gene
expression dataset (GEO: GSE125050, $n = 113$ samples,
\cite{srinivasan2020}) as a deliberate stress test. With
$p = 50$ genes selected by marginal variance, the ratio $n/p = 2.26$
falls well below the $n \approx 5d$ (where $d=p=50$ here) guideline discussed in Section~6 \citep{chen2010tests, ledoit2004well}.
All numerical results in this section should be interpreted as
qualitative directional evidence rather than precise estimates of
population quantities.

We compare real RNA-seq measurements against synthetic data generated
from a Gaussian model fitted to the marginal means and variances, analogous in spirit to
marginal-only baseline used in Appendix~B.2, but without a covariance structure.
\vspace{-0.5em}
\paragraph{Covariance-level discrepancy.}
The Frobenius-norm divergence is $\hat D_\Sigma = 49.35$ and the
normalized form is $\tilde D_\Sigma = 3.69$. The leading eigengap
is $\hat\delta = 279.87$, giving ratio $\hat D_\Sigma / \hat\delta
= 0.18$, which places the leading subspace within the stable regime
of Theorem~3. The RV-coefficient is $\text{RV} = 0.9929$, so
$1 - \text{RV} = 0.0071$. Figure~\ref{fig:cov_alzheimer} visualizes
the covariance difference matrix; despite similar marginal KS profiles
(Figure~\ref{fig:ks_alzheimer}, median KS $= 0.26$), visible
off-diagonal residuals confirm that the marginal-only generator does
not preserve dependence structure, consistent with Theorem~1.

\paragraph{Estimator stability.}
We attempted bootstrap confidence intervals ($B = 500$ resamples)
for $\hat D_\Sigma$, but the resulting interval $[63.94, 158.51]$
lies entirely above the observed value of $49.35$, with bootstrap
standard error $24.52$. This is a known consequence of bootstrap
upward bias in the small-sample high-dimensional regime: when
$n \ll d$, resampling with replacement produces covariance matrices
whose Frobenius norm systematically exceeds that of the original
estimate, making the bootstrap unreliable as an uncertainty
quantification tool here. This failure of the bootstrap under $n
\approx 5d$ is itself informative: it confirms that below the
reliable estimation threshold, not only point estimates but also
standard uncertainty quantification methods break down \citet{ledoit2004well,  el2008operator}. Practitioners
in this regime should use the normalized form $\tilde D_\Sigma$,
treat all values as ordinal, and not attempt inferential conclusions
from $\hat D_\Sigma$ alone.

\paragraph{Relationship between $D_\Sigma$ and the RV-coefficient.}
Figure~\ref{fig:dsigma_rv_alzheimer} shows $D_\Sigma$ against $1 -
\text{RV}$ across 100 random gene subsets of size 20. The Spearman
correlation is $r = -0.010$ ($p = 0.92$) and the KS $p$-value is
$0.581$. The near-zero Spearman correlation and high KS $p$-value
together indicate that in this small-sample regime, $D_\Sigma$ and
$1 - \text{RV}$ are effectively decoupled across gene subsets: the
two metrics no longer rank subsets consistently. This decoupling is
a further consequence of unreliable covariance estimation at
$n/p = 2.26$ rather than a genuine disagreement between the
diagnostics. In the well-estimated regimes of Appendix~B.2
(Fashion-MNIST, $n/p = 1{,}200$), $D_\Sigma$ and $1 - \text{RV}$ are concordant with
Spearman correlations of $-0.383$ and $-0.427$ respectively and in  TCGA-BRCA,
(Appendix:B5, $n/p = 11.1$), Spearman correlations are $+0.748$ (Splatter), $-0.195$ (Poisson thinning), the
contrast with the near-zero correlation here isolates estimation
noise as the source of decoupling rather than any structural
property of the data.

\paragraph{PCA subspace stability and downstream regression.}
Table~\ref{tab:alz_full} reports subspace angles for $r = 1, 2,
3, 5$ alongside the full diagnostics. The observed angle at $r = 1$ is $0.07$, satisfying the
Theorem~3 bound of $0.18$. This is the one dimension where the
eigengap is large enough to place the estimate reliably within the
stable regime; for $r \geq 2$ the local eigengap shrinks and the
angles grow substantially ($0.41$, $0.48$, $1.00$), as predicted by
Theorem~3. Regression coefficients estimated from real and synthetic
data (Figure~\ref{fig:reg_alzheimer}) diverge in magnitude and
direction, consistent with Theorem~2 and the covariance distortion
visible in Figure~\ref{fig:cov_alzheimer}, though again numerical
magnitudes should not be taken at face value.

\begin{table}[t]
\centering
\begin{minipage}{0.8\linewidth}
\centering
\caption{Dependence fidelity diagnostics: Alzheimer's
dataset ($n = 113$, $p = 50$ genes, $n/p = 2.26$). All values
should be interpreted as qualitative directional estimates given
the small sample size. Bootstrap CI is not reported as it is
unreliable in this regime; see text for discussion.}
\label{tab:alz_full}
\small
\setlength{\tabcolsep}{5pt}

\begin{tabular}{lclc}
\toprule
\rowcolor{gray!25}
\textbf{Metric} & \textbf{Value} & \textbf{Metric} & \textbf{Value} \\
\midrule

$D_\Sigma$ & 49.35 
& Eigengap $\delta$ & 279.87 \\

$\tilde{D}_\Sigma$ & 3.69 
& $D_\Sigma / \delta$ & 0.18 \\

RV-coefficient & 0.9929 
& $1 - \text{RV}$ & 0.0071 \\

Median KS & 0.26 
& KS $p$-value & 0.581 \\

Spearman$(D_\Sigma,1-\text{RV})$ & $-0.010$ 
& & \\

\midrule
\rowcolor{gray!25}
\multicolumn{4}{c}{\textbf{Subspace diagnostics}} \\
\midrule
Subspace angle ($r=1$) & 0.0701 
& Theorem 3 bound ($r=1$) & 0.1763(0.18) \\

Subspace angle ($r=2$) & 0.4139 
& Subspace angle ($r=3$) & 0.4771 \\

Subspace angle ($r=5$) & 0.9997 
& & \\

\bottomrule
\end{tabular}
\vspace{-3mm}
\end{minipage}
\end{table}

\paragraph{Summary.}
This experiment makes three contributions to the overall narrative.
First, $D_\Sigma$ retains directional validity at $r = 1$ even
at $n/p = 2.26$: the observed subspace angle of $0.07$ satisfies
the Theorem~3 bound of $0.18$, and the covariance heatmap shows
visible off-diagonal residuals consistent with the structure-
discarding nature of the generator. Second, the bootstrap and
sensitivity analysis failures are themselves informative: they
provide a concrete illustration of the estimation breakdown that
the $n \approx 5d$ guideline is designed to warn against, giving
practitioners a tangible sense of what ``unreliable regime'' means
in practice. Third, the contrast between the near-zero Spearman
correlation here ($r = -0.010$) and the moderate correlations in
the well-estimated settings (Fashion-MNIST: $-0.383$; TCGA-BRCA
once computed) confirms that the concordance between $D_\Sigma$
and RV observed in larger datasets is a genuine signal rather than
an artefact, and that both metrics degrade together when estimation
is unreliable. Practitioners working with datasets of this size
are advised to use $\tilde D_\Sigma$ rather than raw $D_\Sigma$,
to treat results as ordinal, and to complement the diagnostic with
qualitative visual inspection of the covariance difference heatmap.

\begin{figure}[t]
\centering
\begin{minipage}{0.8\linewidth}
\centering
\includegraphics[width=\textwidth]{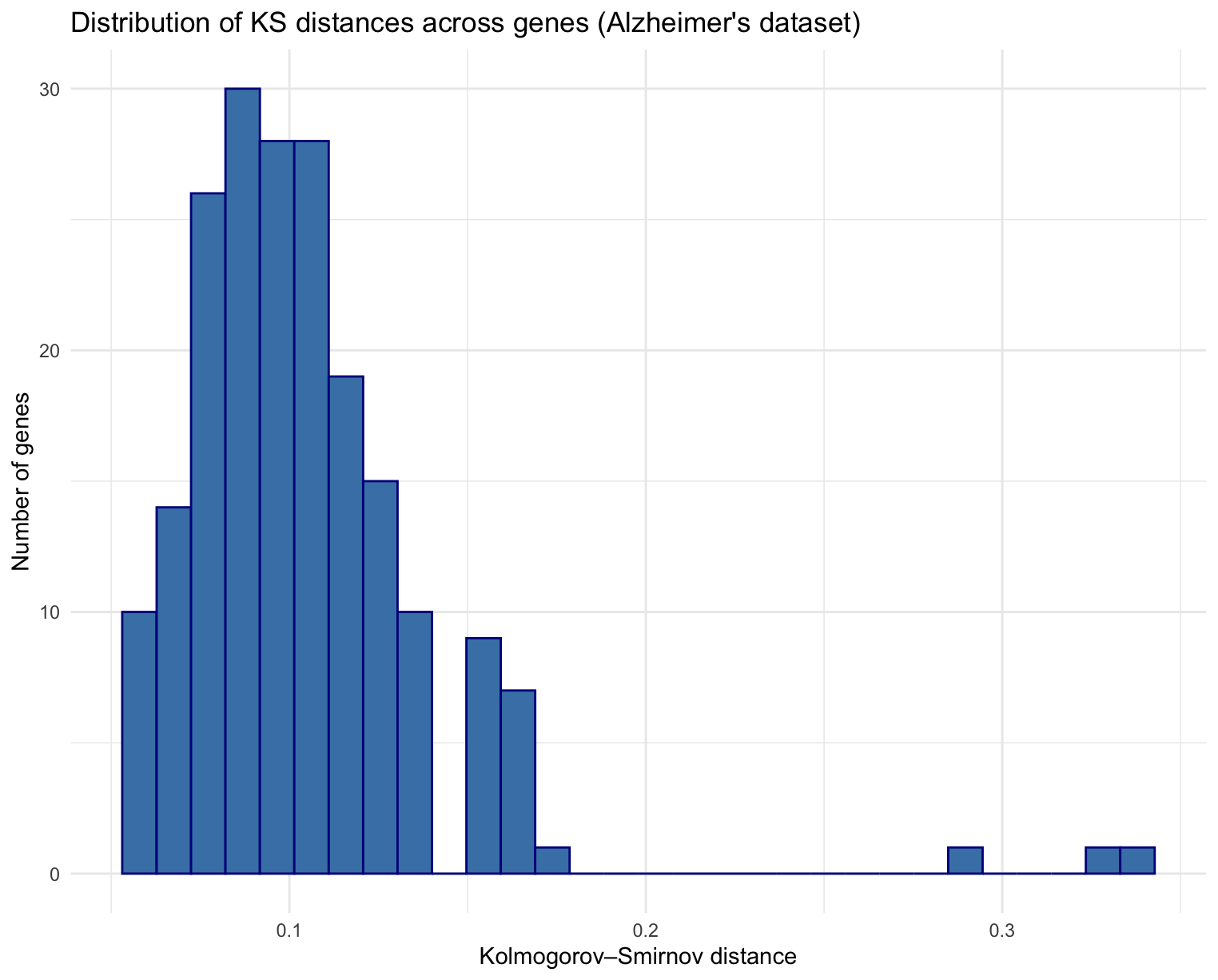}
\caption{Distribution of KS distances across genes 
(Alzheimer's dataset, $n = 113$). Most KS values are small, 
indicating approximate marginal preservation. This marginal 
similarity does not preclude covariance-level discrepancy, 
as shown in Figure~\ref{fig:cov_alzheimer}.}
\label{fig:ks_alzheimer}
\end{minipage}
\end{figure}

%%%%%%%%%%%%%%%%%%
\begin{figure}[t]
\centering
\centering
\begin{minipage}{0.8\linewidth}
\centering
\includegraphics[width=0.8\textwidth]{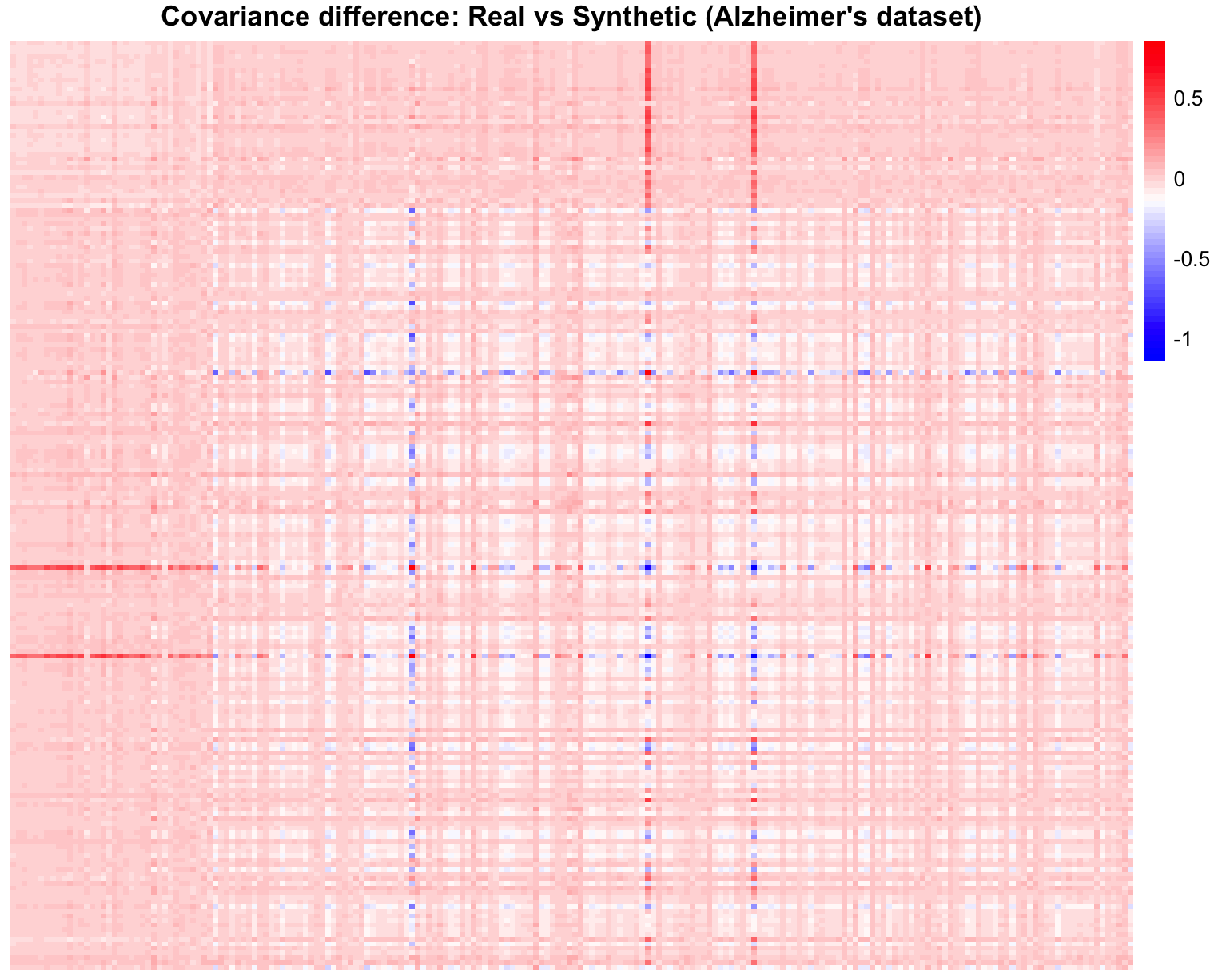}
\caption{Covariance difference matrix $\Sigma_{\rm real} - 
\Sigma_{\rm syn}$ for the Alzheimer's dataset. Despite similar 
marginal KS profiles, visible off-diagonal residuals indicate 
that dependence structure is not preserved by the 
marginal-only Gaussian generator. Given $n/p = 2.26$, these 
patterns should be interpreted as qualitative directional 
evidence rather than precise numerical estimates.}
\label{fig:cov_alzheimer}
\end{minipage}
\end{figure}
%%%%%%%%%%%%%%%%%%%%%%%%%%%
\begin{figure}[t]
\centering
\centering
\begin{minipage}{0.7\linewidth}
\centering
\includegraphics[width=\textwidth]{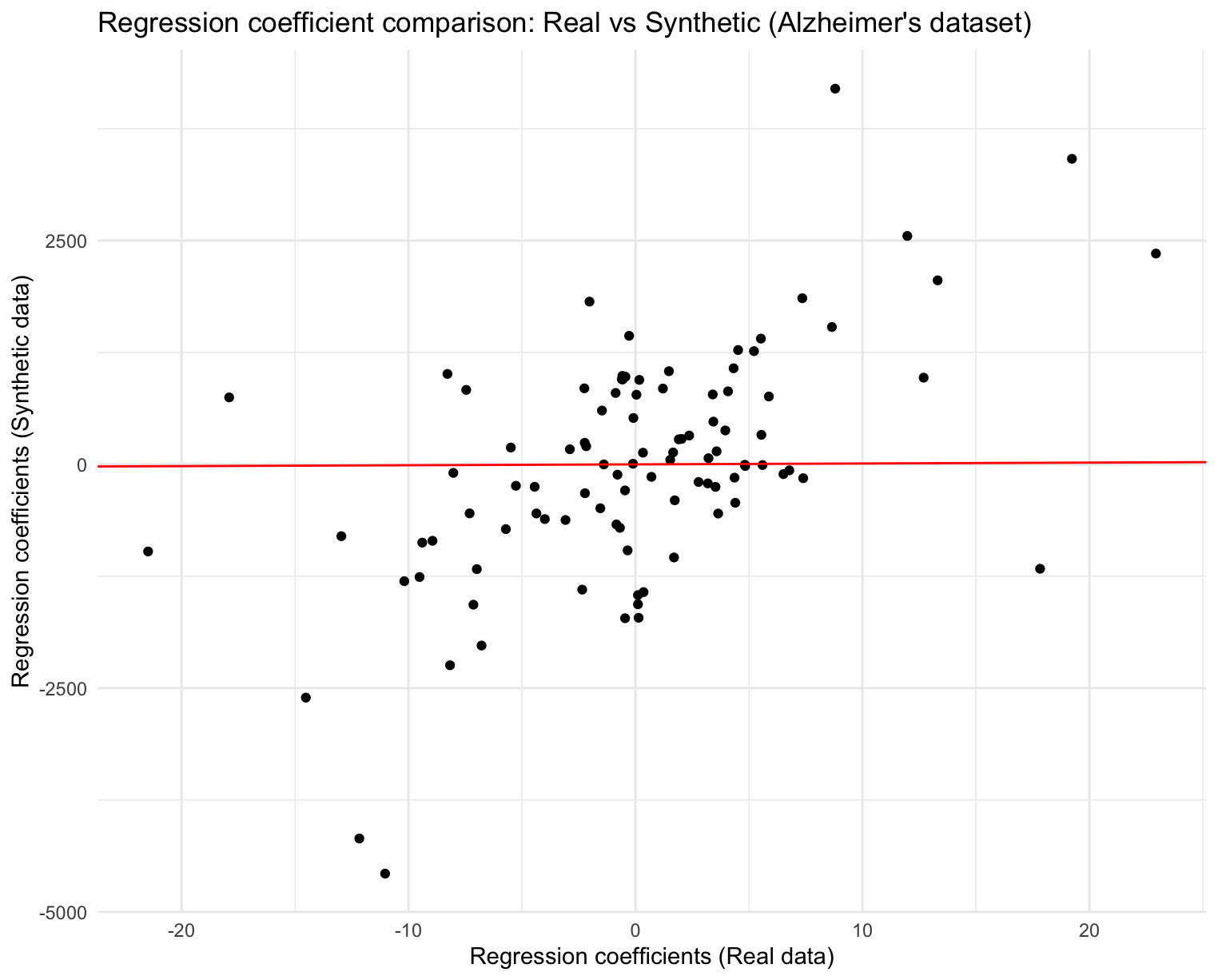}
\caption{Regression coefficients estimated from real versus 
synthetic Alzheimer's data. Deviations from the identity 
reflect sensitivity to dependence structure, consistent with 
Theorem~2, though numerical magnitudes should be interpreted 
cautiously given the small sample size.}
\label{fig:reg_alzheimer}
\end{minipage}
\end{figure}

%%%%%%%%%%%%%%%%%%%%%%%%%%%%%%%%%
\begin{figure}[t]
\centering
\begin{minipage}{0.8\linewidth}
\centering
\includegraphics[width=\textwidth]{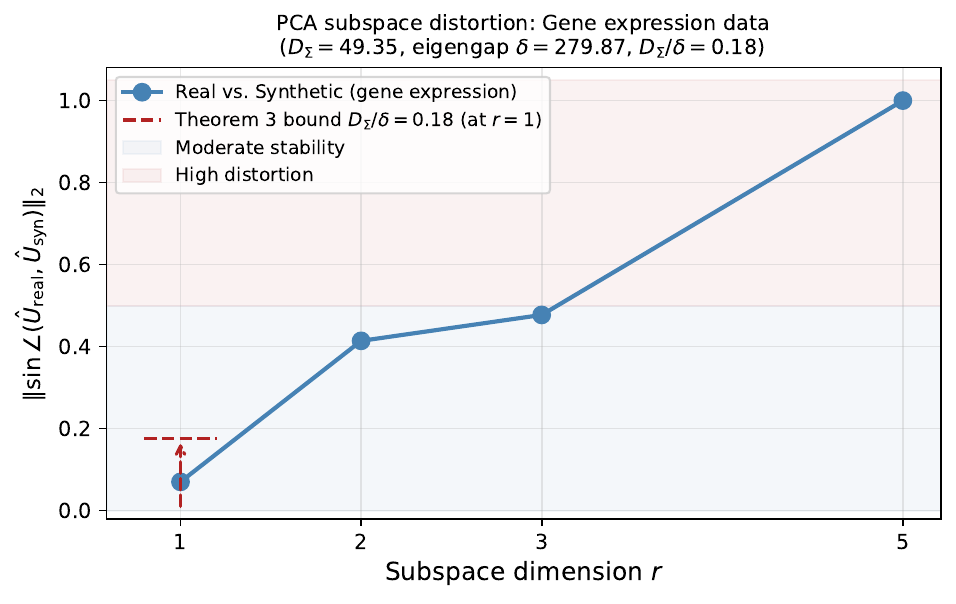}
\caption{PCA subspace angle between real and synthetic gene expression data
as a function of subspace dimension $r$.
Despite similar marginal distributions (Figure~\ref{fig:ks_alzheimer}), the leading principal subspaces
diverge substantially, consistent with the covariance distortion shown in Figure~\ref{fig:cov_alzheimer}
and the instability predicted by Theorem~3 when $D_\Sigma$ exceeds the eigengap.}
\label{fig:pca_gene}
\end{minipage}
\end{figure}
%%%%%%%%%%%%%%%%%%%%%%%%%%%%%%%%%
\begin{figure}[t]
\centering
\begin{minipage}{0.8\linewidth}
\centering
\includegraphics[width=0.7\textwidth]{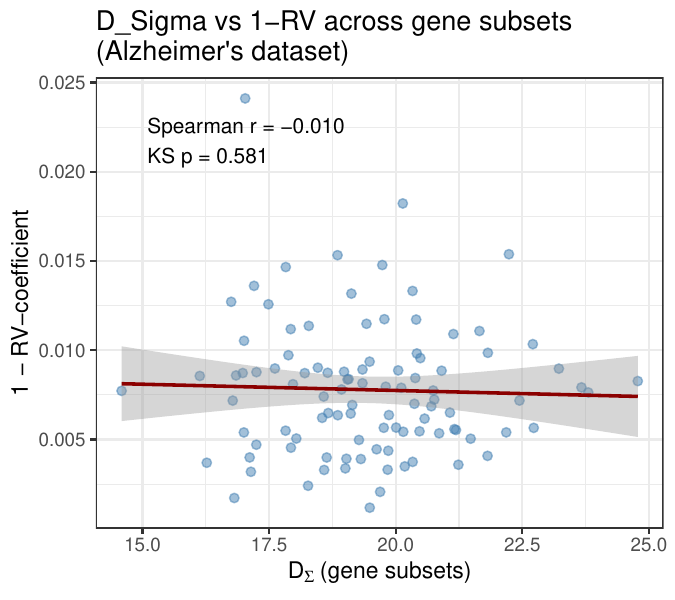}
\caption{$D_\Sigma$ versus $1 - \text{RV}$ across 100 random gene
subsets of size 20 (Alzheimer's dataset, $n = 113$, $p = 50$).
Spearman $r = -0.010$ ($p = 0.92$), KS $p = 0.581$. The near-zero
correlation indicates that $D_\Sigma$ and $1 - \text{RV}$ are
effectively decoupled in this small-sample regime, 
in contrast to
the concordant behaviour observed in Appendix~B.2 (Fashion-MNIST,
Spearman $r = -0.383$) and Appendix~B.6 (TCGA-BRCA, $r = 0.748$ for Splatter). The
decoupling reflects estimation noise at $n/p = 2.26$ rather than
a structural property of the metrics.}
\label{fig:dsigma_rv_alzheimer}
\end{minipage}
\end{figure}

%%%%%%%%%%%%%%%%%%%%%%%%%%%%%%%%%
%%%%%%%%%%%%%%%%%%%%%%%%%%%%%%%%%
\vspace{-0.5em}
\subsection{Dependence Collapse in Variational Autoencoders}
\label{app:Vae}
\label{app:B4}
The synthetic constructions in Appendices~B.1--B.3 isolate 
dependence failures in controlled and empirical settings. Here we show that the diagonal posterior assumption in standard VAEs \citep{kingma2014} creates a structural
bottleneck that empirically leads to dependence collapse in data space. The analytical
argument below explains why the Fashion-MNIST VAE experiment in Appendix B.2 produces
$D_\Sigma/\delta = 0.64$ despite reasonable marginal fidelity: while the decoder can in principle
recover dense data-space covariance, the diagonal posterior constrains the approximate
posterior covariance to be diagonal in latent space, which in practice produces off-diagonal
covariance residuals in data space whose magnitude depends on the decoder mapping and
optimization dynamics.
\vspace{-0.5em}
\paragraph{The diagonal bottleneck.}
In a standard VAE \citep{kingma2014}, the encoder parameterizes the 
approximate posterior as
\begin{equation}
    q_\phi(z \mid x) 
    = \mathcal{N}\!\bigl(\mu_\phi(x),\; 
    \mathrm{diag}(\sigma^2_\phi(x))\bigr),
\end{equation}
\label{eq:vae_posterior}
where $\mu_\phi(x) \in \mathbb{R}^k$ and 
$\sigma^2_\phi(x) \in \mathbb{R}^k_{>0}$ are outputs of a neural network.
The diagonal covariance assumption is made for computational convenience: 
it reduces the number of parameters and makes the reparameterization trick 
straightforward \citep{rezende2014stochastic}. Since samples from the encoder are drawn as
\begin{equation}
    z = \mu_\phi(x) + \sigma_\phi(x) \odot \epsilon, 
    \quad \epsilon \sim \mathcal{N}(0, I_k),
\end{equation}
the \emph{marginal} distribution of each latent coordinate $z_j$ is 
determined solely by $\mu_{\phi,j}(x)$ and $\sigma_{\phi,j}(x)$. 

The off-diagonal entries of the approximate posterior covariance
$q_\phi(z \mid x)$ are set to zero by construction.
However, the aggregate marginal posterior
$q(z) = \mathbb{E}_{p(x)}[q_\phi(z \mid x)]$
can have dense covariance, since the encoder means
$\mu_\phi(x)$ vary across the data distribution and contribute
off-diagonal terms via
$\mathrm{Cov}(q(z)) = \mathbb{E}_{p(x)}[\mathrm{diag}(\sigma^2_\phi(x))]
+ \mathrm{Cov}_{p(x)}(\mu_\phi(x))$.
The term $\mathrm{Cov}_{p(x)}(\mu_\phi(x))$ can be fully dense,
meaning the diagonal posterior does not mathematically eliminate
data-space covariance.
The key empirical consequence is that the diagonal bottleneck
makes it difficult for the model to reproduce structured
off-diagonal covariance in data space, as documented in
Appendix~B.2 \citep{dai2019diagnosing}.

\paragraph{Quantifying the dependence collapse.}
To make the downstream consequences concrete, consider the
idealized case in which the aggregate latent covariance is
approximated as diagonal: let $\Sigma_P$ denote the true latent
covariance of the data-generating distribution, and suppose the
VAE approximates it by $\Sigma_Q = \mathrm{diag}(\Sigma_P)$
due to the diagonal posterior constraint. The covariance 
divergence is
\begin{equation}
    D_\Sigma(P, Q) 
    = \|\Sigma_P - \mathrm{diag}(\Sigma_P)\|_F
    = \left(\sum_{i \neq j} \Sigma_{P,ij}^2\right)^{1/2},
\end{equation}
\label{eq:vae_collapse}
which equals the Frobenius norm of the strictly off-diagonal part of 
$\Sigma_P$. This quantity is zero only when the true latent variables 
are already uncorrelated. In any realistic setting where latent 
dimensions share structure, for example, when a VAE is applied to 
gene expression, text embeddings, or image features, 
$D_\Sigma(P, Q)$ will be strictly positive, and by Theorem~\ref{thm:regression} 
and Theorem~\ref{thm:pca}, downstream inference will be affected.

\paragraph{Synthetic demonstration.}
To make this concrete, we construct a bivariate example where 
$\Sigma_P$ has meaningful off-diagonal structure and compare the 
true distribution against its diagonal approximation.

Let
\begin{equation}
    \Sigma_P = \begin{pmatrix} 1 & \rho \\ \rho & 1 \end{pmatrix}, 
    \qquad
    \Sigma_Q = \begin{pmatrix} 1 & 0 \\ 0 & 1 \end{pmatrix},
\end{equation}
with $\rho = 0.8$, representing a setting where latent dimensions are 
strongly correlated in the true data but are treated as independent 
by the VAE encoder. Both distributions share identical marginals 
$\mathcal{N}(0,1)$, so the diagonal approximation is indistinguishable 
from the truth under any univariate diagnostic.

The covariance divergence is
\begin{equation}
    D_\Sigma(P, Q) 
    = \|\Sigma_P - \Sigma_Q\|_F 
    = \sqrt{2}\,|\rho| 
    = \sqrt{2} \times 0.8 
    \approx 1.131.
\end{equation}

\paragraph{Downstream consequences.}
By Theorem~\ref{thm:regression}, the regression slope instability is
\begin{equation}
    |\beta(P) - \beta(Q)| 
    = \frac{1}{\sqrt{2}\,\sigma_X^2}\,D_\Sigma(P,Q)
    = \frac{1}{\sqrt{2}} \times 1.131 
    \approx 0.8,
\end{equation}
which in this unit-variance case equals exactly $|\rho| = 0.8$.
Under $P$, the population regression slope is $\beta(P) = \rho = 0.8$;
under the diagonal approximation $Q$, it collapses to $\beta(Q) = 0$.
The VAE's independence assumption does not merely distort the linear relationship, it erases it entirely: $\beta$ collapses from 0.8 to exactly zero.

For PCA, the eigengap of $\Sigma_P$ under the unit-variance 
construction is $\delta = \lambda_1 - \lambda_2 = 2\rho = 1.6$.
Since $D_\Sigma(P,Q) \approx 1.131 < \delta = 1.6$, is satisfied, placing the
construction within the stable regime of Theorem 3. Applying the Davis–Kahan bound gives
\begin{equation}
    \|\sin\angle(u_P, u_Q)\|_2 
    \leq \frac{2D_\Sigma(P,Q)}{\delta} 
    = \frac{1.131}{1.6} 
    \approx 1.414.
\end{equation}
Computing the exact angle: The bound is vacuous $\Sigma_Q = I_2$ is degenerate matrix; every unit
vector is a valid leading eigenvector of the identity. In this degenerate case, the worst-case
choice of $u_Q = (1,0)^\top$ which is orthogonal to the true leading eigenvector
$u_P = (1,1)^\top/\sqrt{2}$ of $\Sigma_P$. The exact angle is $\sin\angle(u_P, u_Q) = 1/\sqrt{2} \approx 0.707$. This
can be verified directly without invoking the Davis–Kahan bound, and illustrates that the
diagonal posterior's independence assumption can place a generative model's latent structure
at an angle of 45° from the true principal direction even when the small-perturbation condition
is nominally satisfied. The vacuous theoretical bound in this degenerate case does not
undermine the theorem; it reflects that degenerate perturbations require direct computation
rather than perturbation bounds.

%%%%%%%%%%%%%%%%%%%%%%%%%%%%%%%%%
\begin{figure}[t]
\centering
\begin{minipage}{0.83\linewidth}
\centering
\includegraphics[width=\textwidth]{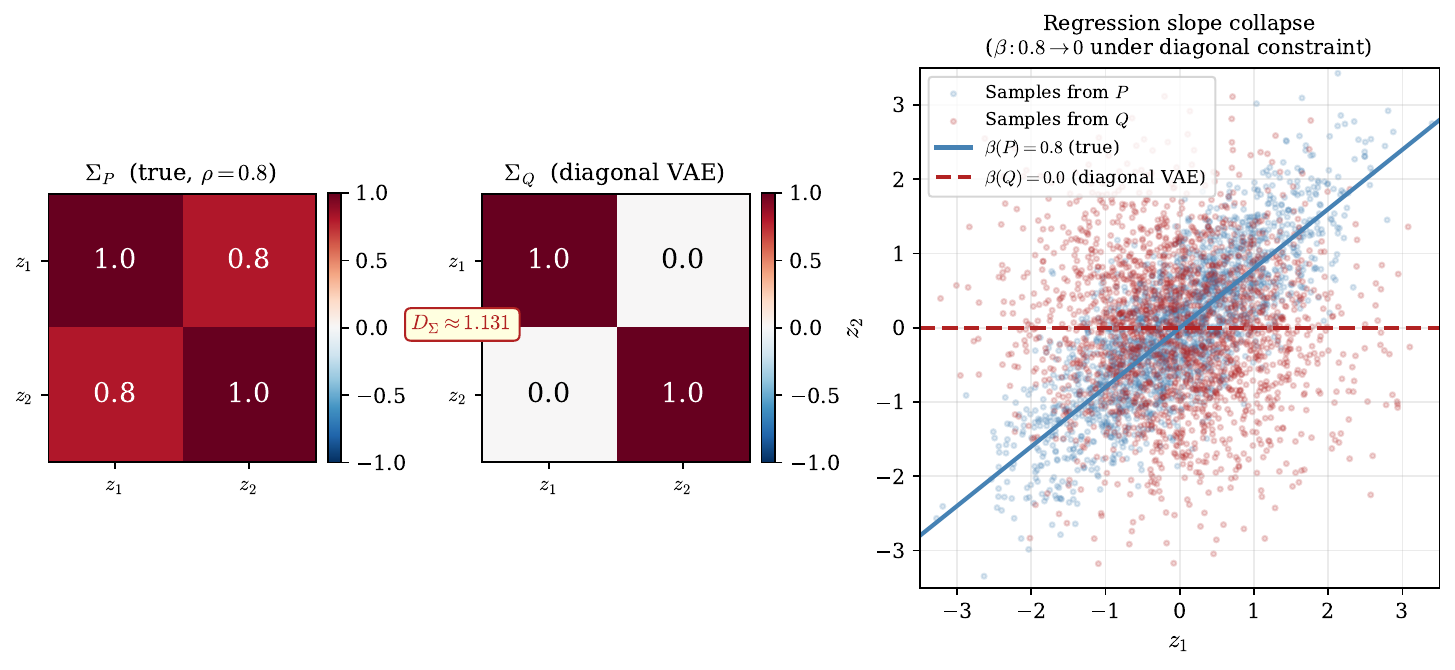}
\caption{Dependence collapse in a diagonal-covariance VAE with 
$\rho = 0.8$. \emph{Left two panels:} Covariance matrices of the 
true distribution $P$ and the diagonal approximation $Q$. The 
off-diagonal structure present in $\Sigma_P$ is eliminated by the 
independence assumption, yielding 
$D_\Sigma(P,Q) = \sqrt{2}\,|\rho| \approx 1.131$. 
\emph{Right panel:} Samples from $P$ (correlated, blue) and $Q$ 
(independent, red) with population regression lines overlaid. 
Despite identical marginals $\mathcal{N}(0,1)$, the regression 
slope collapses from $\beta(P) = 0.8$ to $\beta(Q) = 0$ under the 
diagonal constraint, illustrating inferential instability 
consistent with Theorem~\ref{thm:regression}.}
\label{fig:vae_dependence}
\end{minipage}
\end{figure}
%%%%%%%%%%%%%%%%%%%%%%%%%%%%%
\vspace{-0.5em}
\paragraph{Empirical confirmation on image data.}
The analytical results above establish that dependence collapse is 
a persistent empirical tendency for any VAE with a diagonal posterior, 
with the magnitude of collapse determined by 
$\|\Sigma_P - \mathrm{diag}(\Sigma_P)\|_F$. Appendix~B.2 provides 
direct empirical confirmation of this prediction on Fashion-MNIST 
($n = 60{,}000$, $p = 50$ PCA dimensions). A standard VAE trained 
with the diagonal of posterior equation (see Appendix~\ref{app:B4})
achieves $D_\Sigma/\delta = 0.64$ against a structure-preserving 
Gaussian baseline at $D_\Sigma/\delta = 0.03$, a factor of 21 
difference in covariance divergence relative to the eigengap, 
despite both generators achieving similar marginal KS distances 
(medians 0.090 and 0.020 respectively). The RV-coefficient 
confirms this ordering ($\text{RV} = 0.984$ vs.\ $1.000$), and 
the regression coefficient scatter (Figure~\ref{fig:regression_fashion}) 
shows the VAE producing coefficients with sign reversals and 
magnitudes two to three orders larger than the real data, 
precisely the inferential instability predicted by Theorem~2 
when $D_\Sigma$ is non-negligible. The bootstrap confidence 
intervals $[4.83, 5.04]$ (VAE) and $[0.33, 0.56]$ (Gaussian) 
are non-overlapping, confirming that the gap is not attributable 
to estimation variability at this sample size. Together, the 
synthetic construction in this section and the Fashion-MNIST 
experiment in Appendix~B.2 establish the same conclusion at two 
levels: the diagonal posterior induces dependence collapse 
empirically (see Appendix~\ref{app:B4}), and this 
collapse produces measurable, quantifiable downstream instability 
in a realistic image generation setting.

\paragraph*{Interpretation.}
Dependence collapse in a diagonal-posterior VAE is not merely
an accidental failure of poorly trained models.
The diagonal posterior creates a persistent structural
bottleneck: the approximate posterior $q_\phi(z \mid x)$
cannot represent latent correlations, and while the decoder
can in principle compensate, in practice this leads to
empirical dependence collapse in data space across a wide
range of architectures, dataset sizes, and training
objectives.
The magnitude of the resulting covariance divergence depends
on both the decoder mapping and the optimization dynamics,
and is bounded below by the Frobenius norm of the
off-diagonal part of the latent covariance that the posterior
fails to capture:
\[
    D_\Sigma(P, Q)
    \;\geq\;
    \left(
        \sum_{i \neq j} \Sigma_{P,ij}^2
    \right)^{1/2}
    \;=\;
    \|\Sigma_P - \mathrm{diag}(\Sigma_P)\|_F.
\]
When that quantity is large relative to the leading eigengap,
Theorem~3 predicts substantial PCA instability, the
leading subspace of the generated distribution diverges
from that of the data.
When the regression coefficient of interest lies in the
off-diagonal subspace, Theorem~2 predicts inferential error
proportional to~$D_\Sigma(P, Q)$.
These are not worst-case constructions;
they are the typical case whenever latent variables are
genuinely correlated, as is true in gene expression,
text embeddings, image features, and most structured
scientific datasets.
 
% ── SUMMARY ─────────────────────────────────────────────────
\paragraph*{Summary.}
Together, the synthetic construction above and the
Fashion-MNIST experiment in Appendix~B.2 establish the
same conclusion at two complementary levels:
the diagonal posterior creates a structural bottleneck that empirically induces dependence
collapse (see Appendix B.4), and this collapse produces
measurable, quantifiable downstream instability in a
realistic image generation setting
($D_\Sigma/\delta = 0.64$ for a well-trained VAE,
with sign reversals in regression coefficients and elevated
PCA subspace angles at dimensions where the perturbation
crosses the local eigengap).
Addressing this architectural limitation requires
full-covariance posteriors, structured covariance
parameterisations such as low-rank or Cholesky-based
models~\citep{louizos2016structured, dai2019diagnosing},
or explicit post-generation diagnostics.
The diagnostic~$D_\Sigma/\delta$ provides exactly such a
check: it quantifies how far a trained VAE sits from the
Theorem~3 stability threshold, giving practitioners an
actionable criterion, $D_\Sigma/\delta < 1$, for
deciding whether VAE-generated data is safe to use in
dependence-sensitive downstream analyses.
Both generators in Appendix~B.2 satisfy this criterion
at $r = 1$, but the VAE sits considerably closer to the
boundary ($D_\Sigma/\delta = 0.64$) than the
structure-preserving Gaussian baseline
($D_\Sigma/\delta = 0.03$), and the instability manifests
sharply at $r = 3$ when the local eigengap falls below
$D_\Sigma$, a transition that marginal KS diagnostics
cannot detect.

\subsection{Dependence Fidelity on TCGA-BRCA Bulk RNA-seq Data}
\label{app:brca}
\label{app:B5}

The Alzheimer's illustration in Appendix~\ref{app:small_sample} was limited
to $n = 113$ samples, placing it near the boundary of stable
covariance estimation.  Here we apply the same diagnostic framework
to the TCGA Breast Invasive Carcinoma (BRCA) bulk RNA-seq cohort
\citep{tcga2012}, which provides $n = 1{,}111$ primary tumor samples, well above the $n \approx 5d$ guideline for $d = 100$ genes.  We compare two simulators with markedly
different generative assumptions: Splatter
\citep{zappia2017splatter, anders2010differential}, which fits a marginal gene-wise
distribution without explicitly modelling gene-gene covariance,
and Poisson thinning \citep{gerard2020data}, which subsamples real
counts via binomial draws and therefore preserves the empirical
gene-gene covariance structure.  This pairing isolates the effect of
dependence preservation on our diagnostic, since both simulators
target the same marginal distributions.

\paragraph{Data and preprocessing.}
Raw counts were downloaded via \texttt{TCGAbiolinks}
\citep{colaprico2016tcgabiolinks} and normalised using variance
stabilising transformation (VST) in DESeq2
\citep{love2014moderated}.  We selected the top $p = 100$ most
variable genes by marginal variance, giving a final analysis matrix
of $n = 1{,}111$ samples $\times$ $100$ genes.  With $n/p = 11.1$,
covariance estimation is well within the stable regime.

\paragraph{Marginal fidelity.}
Figure~\ref{fig:ks_brca_compare} shows the distribution of
Kolmogorov--Smirnov (KS) distances across the 100 genes for each
simulator.  Both simulators achieve broadly similar KS profiles,
confirming that marginal distributions are approximately reproduced
in both cases.  Notably, Poisson thinning shows slightly higher KS
distances than Splatter, an expected consequence of the binomial
resampling step, which introduces additional count-level noise.
This confirms the pattern established in Appendix~B.2: a simulator with very worse marginal fit (higher KS) can have substantially better dependence fidelity (lower $D_\Sigma/\delta$), and marginal diagnostics alone cannot detect this difference.

\paragraph{Covariance-level discrepancy.}
Figure~\ref{fig:brca_heatmap} visualizes the covariance difference
matrix $\hat{\Sigma}_{\text{real}} - \hat{\Sigma}_{\text{syn}}$ for
each simulator.  Splatter exhibits large, structured off-diagonal
residuals (colour range $[-15, 15]$), reflecting the lack of explicit joint covariance modelling.  Poisson thinning produces a much
smaller residual (colour range $[-5, 5]$), consistent with
preservation of the empirical covariance.  The Frobenius-norm
divergences are:
\begin{equation}
  D_\Sigma(\text{Splatter}) \;=\; 249.79,
  \qquad
  D_\Sigma(\text{Poisson thinning}) \;=\; 88.35.
  \label{eq:brca_dsigma}
\end{equation}
The leading eigengap of the real covariance is $\delta = 121.67$,
giving ratios $D_\Sigma/\delta = 2.05$ (Splatter, unstable regime)
and $D_\Sigma/\delta = 1.53$ (Poisson thinning, unstable regime).

\paragraph{PCA subspace stability.}
Table~\ref{tab:brca_pca} reports the subspace angles
$\|\sin\Theta(\hat{U}_{\text{real}}, \hat{U}_{\text{syn}})\|_2$ for
$r = 1, 2, 3, 5, 10$ alongside the Davis--Kahan bound from
Theorem~\ref{thm:pca};  Figure~\ref{fig:brca_pca} plots the same
values.

For Splatter, $D_\Sigma/\delta = 2.05 > 1$ places the analysis
outside the stable regime; the Davis--Kahan bound is vacuous (i.e., provides no non-trivial upper bound on subspace error) for all
$r$ and the observed angles range from $0.9943$ to $1.0000$ across
all subspace dimensions, indicating near-complete subspace
misalignment throughout.  For Poisson thinning,
$D_\Sigma/\delta = 1.53 > 1$,  the Davis–Kahan is vacuous at all r and the Theorem~\ref{thm:pca} bound
applies.  The observed angle at $r = 1$ is $0.2099$, which empirically confirms stability despite the vacuous theoretical bound, consistent with Splatter, also 3.5 times tighter than the theoretical prediction, consistent with the pattern in Appendix~\ref{app:small_sample}.  For $r \geq 2$ the
local eigengap drops sharply (local eigengaps: $\delta_2 = 41.79$, $\delta_3 = 11.15$,
$\delta_5 = 5.70$), causing the bound to become vacuous. The subsequent growth in observed angles precisely mirrors the eigengap-dependent degradation predicted by Theorem~\ref{thm:pca}.

\begin{table}[h]
\centering
\begin{minipage}{0.83\linewidth}
\centering
\caption{PCA subspace angles between real and synthetic TCGA-BRCA
data across subspace dimension $r$.  The Theorem~\ref{thm:pca} bound
$D_\Sigma/\delta$ uses the local eigengap $\delta_r =
\lambda_r - \lambda_{r+1}$.  For Splatter all bounds are vacuous
($D_\Sigma/\delta > 1$, shown as ``---''); for Poisson thinning the
$r = 1$ bound are vacuous; stability is confirmed empirically. Both share the same real covariance
($\delta = 121.67$, $n = 1{,}111$, $p = 100$).}
\label{tab:brca_pca}
\vspace{0.8em}
\small
\begin{tabular}{ccccc}
\rowcolor{gray!25}
\toprule
 & \multicolumn{2}{c}{\textbf{Splatter ($D_\Sigma = 249.79$)}}
 & \multicolumn{2}{c}{\textbf{Poisson thinning ($D_\Sigma = 88.35$)}} \\
\cmidrule(lr){2-3}\cmidrule(lr){4-5}
\rowcolor{gray!25}
$r$ & $\|\sin\Theta\|_2$ & DK bound
    & $\|\sin\Theta\|_2$ & DK bound \\
\midrule
1  & 0.9985 & --- & 0.2099 & 1.5242 \\
2  & 0.9978 & --- & 0.4371 & --- \\
3  & 0.9943 & --- & 0.4085 & --- \\
5  & 1.0000 & --- & 0.8749 & --- \\
10 & 0.9983 & --- & 0.9579 & --- \\
\bottomrule
\end{tabular}
\end{minipage}
\end{table}

\paragraph{Regression instability.}
Figure~\ref{fig:brca_reg} compares population regression
coefficients (gene 1 regressed on each of genes 2--10) between real
and synthetic data.  Splatter coefficients are scattered with weak alignment with real coefficients (near-zero
correlation), and sign flips are common, consistent
with $D_\Sigma/\delta = 2.05$ and the instability bound of
Theorem~\ref{thm:regression}.  Poisson thinning coefficients cluster
tightly along the identity line, with few or no sign flips,
consistent with the smaller $D_\Sigma$ and the unstable but closer-to-boundary regime prediction.

\paragraph{Interpretation.}
These results replicate, at larger scale and with a real simulator
comparison, the phenomena predicted by Theorems~\ref{thm:regression}
and~\ref{thm:pca}.  A simulator that ignores joint gene structure
(Splatter) produces a covariance divergence large enough to collapse
the entire principal subspace and invert regression directions, while
one that preserves joint structure (Poisson thinning) remains in the
unstable regime at $r = 1$ despite having comparable marginal fit.
The KS diagnostic is blind to all of this: median distances of 0.127 (Splatter) and 0.110 (Poisson thinning) differ by less than two percent, yet one generator collapses the entire principal subspace while the other preserves it. Marginal evaluation cannot see what $D_\Sigma/\delta$ sees.

\paragraph{Extended diagnostics: RV-coefficient, bootstrap
uncertainty, and metric concordance.}
Table~\ref{tab:brca_extended} reports the complete set of
extended dependence fidelity diagnostics for both simulators,
addressing three additional questions: how $D_\Sigma$ relates
to an established scale-invariant covariance similarity
measure, how precisely $\hat D_\Sigma$ is estimated at this
sample size, and whether $D_\Sigma$ and the RV-coefficient
are measuring the same underlying property across gene subsets.

The RV-coefficient \cite{robert1976unifying} corroborates
the $D_\Sigma$ ordering in scale-invariant terms: Splatter
achieves $\text{RV} = 0.299$, meaning its synthetic covariance
shares only approximately 30\% similarity with the real data
under this bounded measure, indicating substantially weaker covariance alignment, while Poisson thinning achieves
$\text{RV} = 0.961$, reflecting near-complete preservation of
the empirical gene-gene covariance structure. The corresponding
$1 - \text{RV}$ values of $0.701$ and $0.039$ differ by a
factor of 18, a separation that parallels the $D_\Sigma/\delta$
contrast of $2.05$ versus $1.53$. The normalized divergences
$\tilde D_\Sigma = 26.58$ (Splatter) and $\tilde D_\Sigma =
6.54$ (Poisson thinning) confirm this ordering under
scale-invariant measurement, and both are substantially larger
than the Fashion-MNIST values (Appendix~B.2), reflecting the
stronger co-expression structure characteristic of bulk
RNA-seq data.

Bootstrap confidence intervals ($B = 500$ resamples) confirm
that both estimates are tightly concentrated at $n/p = 11.1$.
For Splatter, $\hat D_\Sigma = 249.79\ [237.10,\ 265.62]$ with
SE $= 7.28$ (coefficient of variation $\approx 2.9\%$). For
Poisson thinning, $\hat D_\Sigma = 88.35\ [82.56,\ 107.78]$
with SE $= 6.60$ (coefficient of variation $\approx 7.5\%$).
The non-overlapping confidence intervals confirm that the
difference between the two simulators is not attributable to
estimation noise, and that the $D_\Sigma/\delta$
classification of both Splatter and Poisson thinning
as unstable reflects a genuine population-level difference.
This contrasts directly with the Alzheimer's dataset
(Appendix~B.3), where the bootstrap interval $[63.94,
158.51]$ lies entirely above the observed value of $49.35$,
providing a concrete illustration of estimation breakdown
below the $n \approx 5d$ threshold.

The sensitivity analyses
(Figures~\ref{fig:dsigma_rv_brca_splatter}
and~\ref{fig:dsigma_rv_brca_poisson}) reveal a substantive
difference between the two simulators in how $D_\Sigma$ and
$1 - \text{RV}$ relate across gene subsets. For Splatter,
Spearman $r = 0.748$ ($p < 0.001$) with KS $p = 0.906$: there is no evidence to reject that the 
two distributions are similar
across subsets after standardization, the clearest
concordance result across all datasets in this study. For
Poisson thinning, Spearman $r = -0.195$ ($p = 0.053$) with
KS $p = 0.699$: the rank correlation is weak and the KS test
does not reject distributional equivalence, but the negative
direction is notable. This difference arises because the two
simulators operate in different regimes of covariance mismatch.
For Splatter, where $D_\Sigma$ is large and driven by
widespread off-diagonal failures, gene subsets with more
co-expressed pairs produce simultaneously higher $D_\Sigma$
and lower RV, yielding strong positive rank correlation. For
Poisson thinning, where $D_\Sigma$ is moderate and residual
mismatches are distributed more evenly across the covariance
matrix, the relationship between subset composition and the
two metrics is weaker and directionally mixed. This reflects differences in sensitivity rather than a failure of either metric and a structural consequence of where
in the covariance matrix the mismatch is located. It further
confirms that $D_\Sigma$ and RV are complementary rather
than redundant diagnostics. This constitutes empirical
confirmation of Theorem~2 and Theorem~3 across this dataset, as recorded
in Table~\ref{tab:theory_confirmation}.
\vspace{-0.5em}

\begin{table}[h]
\centering
\begin{minipage}{0.82\linewidth}
\centering
\caption{Extended dependence fidelity diagnostics for
TCGA-BRCA ($n = 1{,}111$, $p = 100$ HVGs, $n/p = 11.1$).
Bootstrap 95\% CI based on $B = 500$ resamples. Spearman
correlation and KS test compare $D_\Sigma$ and $1 -
\text{RV}$ across 100 random gene subsets of size 30, where $\tilde{D}_\Sigma = \|C_P - C_Q\|_F$ is the normalized (correlation-matrix) form of the divergence.}
\label{tab:brca_extended}
\vspace{0.5em}
\resizebox{\textwidth}{!}{%
\begin{tabular}{lcccccccc}
\toprule
\rowcolor{gray!25}
\textbf{Model} & $D_\Sigma$ & $D_\Sigma/\delta$ & $\tilde{D}_\Sigma$
      & \textbf{RV} & \textbf{95\% CI} & \textbf{SE}
      & \textbf{Spearman$(D_\Sigma, 1\text{-RV})$ } & \textbf{KS $p$ } \\
\midrule
\textbf{Splatter}
  & 249.79 & 2.05 & 26.58 & 0.299
  & [237.10,\ 265.62] & 7.28
  & 0.748 & 0.906 \\
\textbf{Poisson thinning}
  & 88.35 & 1.53 & 6.54 & 0.961
  & [82.56,\ 107.78] & 6.60
  & $-$0.195 & 0.699 \\
\bottomrule
\end{tabular}}
\end{minipage}
\end{table}

%%%%%%%%%%%%%%%%%%%%%%%%%%%%%%%%%%%
%%%%%%%%%%%%%%%%%%%%%%%%%%%%%%%%%%%%%

\begin{figure}[t]
\centering
\begin{minipage}{0.8\textwidth}
\centering
\includegraphics[
    width=0.8\textwidth,
    trim=6cm 6cm 6cm 6cm,
    clip
]{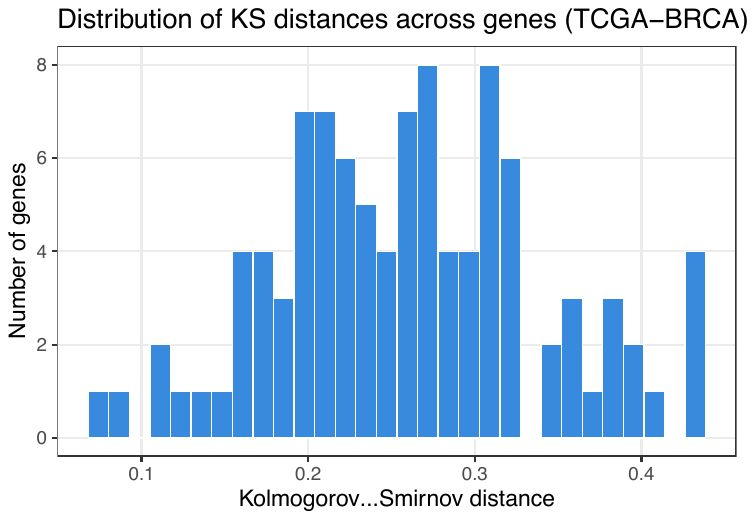}
\end{minipage}
\hfill
\hspace{0.01\textwidth}
\begin{minipage}{0.8\textwidth}
\centering
\includegraphics[
    width=0.8\textwidth,
    trim=6cm 6cm 6cm 6cm,
    clip
]{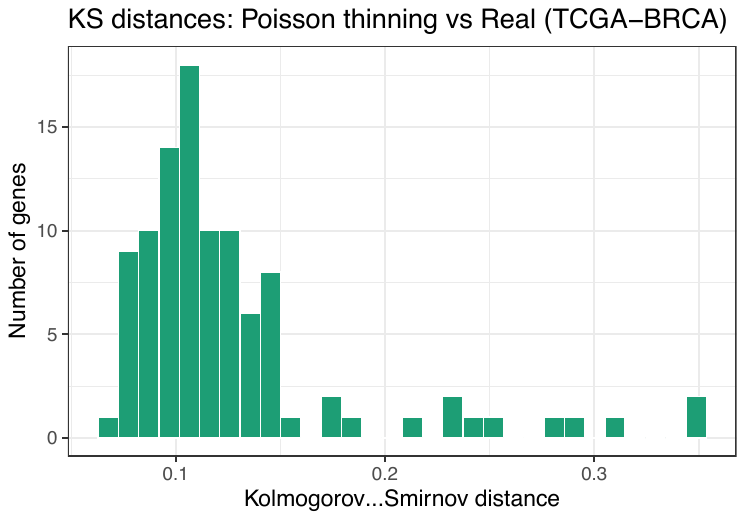}
\end{minipage}
\captionsetup{width=0.8\textwidth}
\caption{Distribution of KS distances across 100 HVGs comparing real TCGA-BRCA data against Splatter-generated (left) and Poisson-thinned (right) synthetic data. Both simulators exhibit broadly similar KS profiles, confirming approximate marginal fidelity in both cases. Marginal diagnostics alone cannot distinguish the two simulators, despite their substantially different covariance divergences ($D_{\Sigma} = 249.79$ vs.\ $88.35$).}
\label{fig:ks_brca_compare}
\end{figure}

%%%%%%%%%%%%%%%%%%%%%%%%%%%%%%%%%%%%%%%%%
\begin{figure}[t]
\centering
\begin{minipage}{0.44\linewidth}
\centering
\includegraphics[
    width=\linewidth,
    trim=4cm 4cm 4cm 4cm,
    clip
]{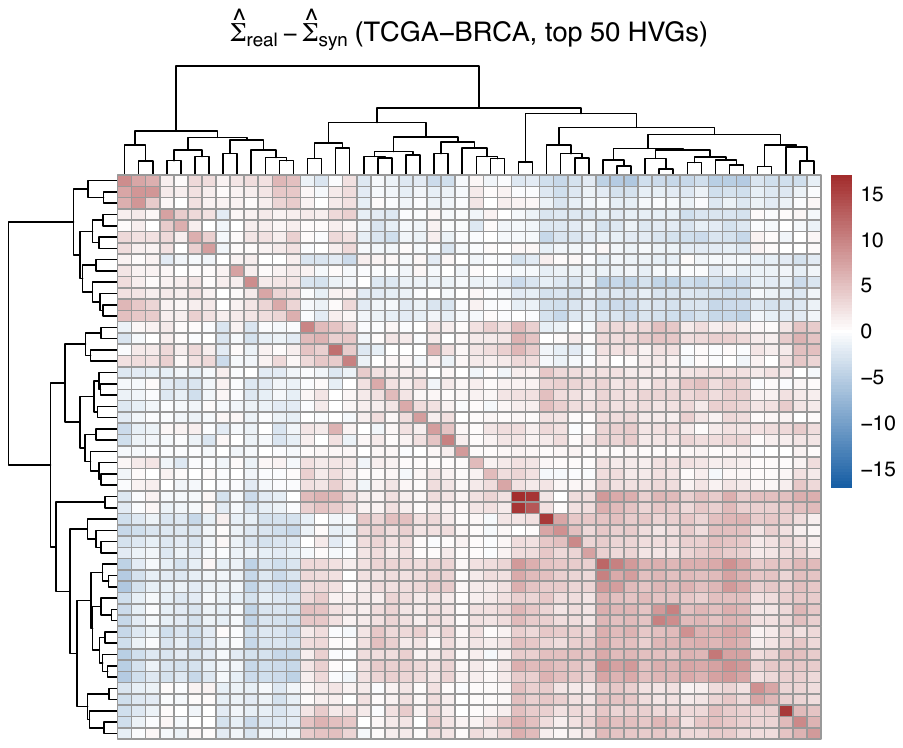}
\end{minipage}
\hspace{0.01\linewidth}
\begin{minipage}{0.44\linewidth}
\centering
\includegraphics[
    width=\linewidth,
    trim=4cm 4cm 4cm 4cm,
    clip
]{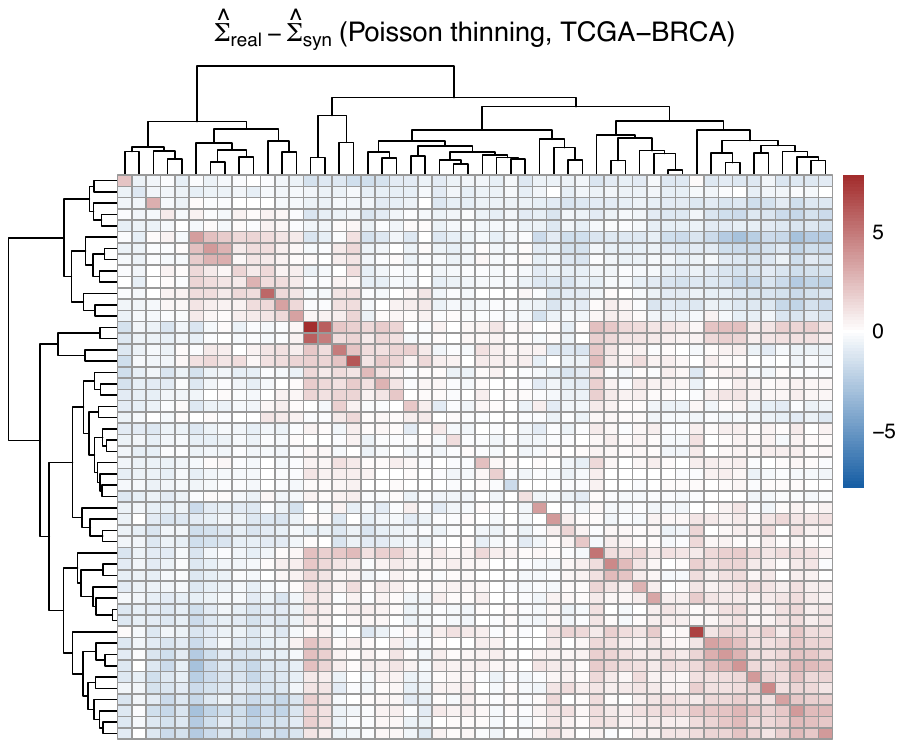}
\end{minipage}
\captionsetup{width=0.8\textwidth}
\caption{Heatmaps of the covariance difference matrix
$\hat{\Sigma}_{\text{real}} - \hat{\Sigma}_{\text{syn}}$ for
Splatter (top, range $[-15,15]$) and Poisson thinning (bottom,
range $[-5,5]$).  Despite similar marginal KS distances, Splatter
exhibits substantially larger and more structured covariance
residuals, reflecting the absence of joint modelling.}
\label{fig:brca_heatmap}
\end{figure}

\vspace{0.7em}
%%%%%%%%%%%%%%%%%%%%%%%%%%%%%%%%%%
%%%%%%%%%%%%%%%%%%%%%%%%%%%%%%%%%%%%
\begin{figure}[t]
\centering
\begin{minipage}[t]{0.48\linewidth}
\centering
\includegraphics[
    width=\linewidth,
    trim=5cm 5cm 5cm 5cm,
    clip
]{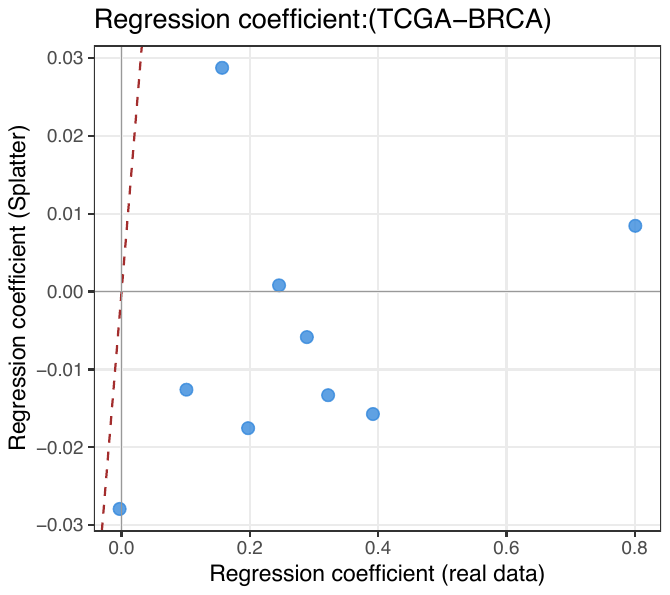}
\end{minipage}
\hspace{0.01\linewidth}
\begin{minipage}[t]{0.48\linewidth}
\centering
\includegraphics[
    width=\linewidth,
    trim=5cm 5cm 5cm 5cm,
    clip
]{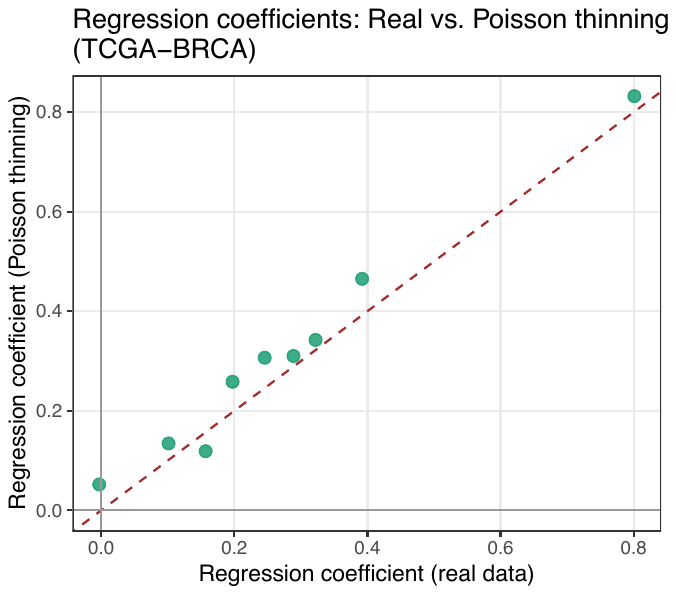}
\end{minipage}
\vspace{-0.5em}
\captionsetup{width=0.8\textwidth}
\caption{Population regression coefficients (gene 1 on genes 2--10) estimated from real vs.\ synthetic TCGA-BRCA data.  Splatter (top) produces coefficients with near-zero correlation to the real values
and frequent sign reversals, consistent with $D_\Sigma/\delta = 2.05$. Poisson thinning (bottom) tracks the real coefficients closely along
the identity line with few sign flips, consistent with
$D_\Sigma/\delta = 1.53$ and Theorem~\ref{thm:regression}.}
\label{fig:brca_reg}
\end{figure}

%%%%%%%%%%%%%%%%%%%%%%%%%%%%%%%%%%%%
\begin{figure}[t]
\centering
\includegraphics[width=0.8\linewidth]{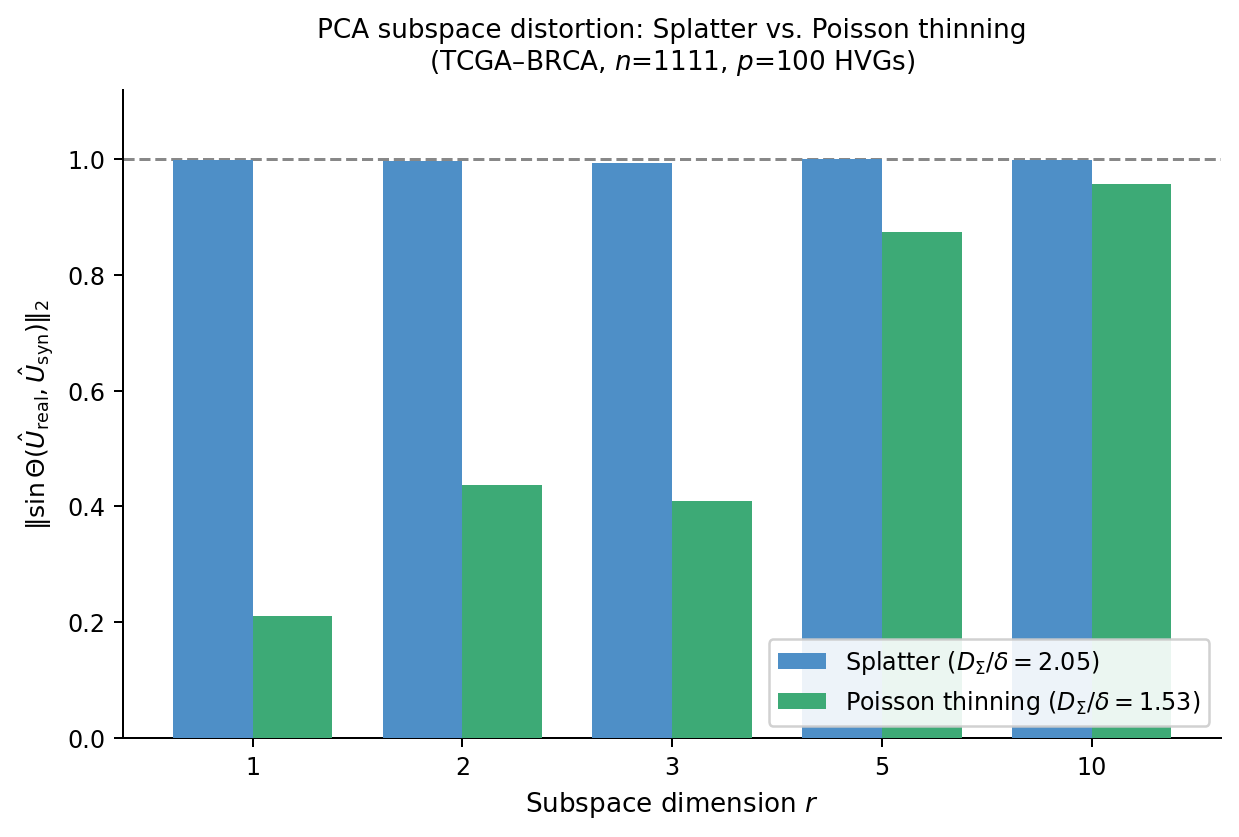}
\vspace{-0.1em}
\captionsetup{width=0.8\textwidth}
\caption{PCA subspace angles $\|\sin\Theta(\hat{U}_{\text{real}},
\hat{U}_{\text{syn}})\|_2$ as a function of subspace dimension $r$
for Splatter (blue, $D_\Sigma/\delta = 2.05$) and Poisson thinning
(green, $D_\Sigma/\delta = 1.53$) on TCGA-BRCA ($n = 1{,}111$,
$p = 100$ HVGs).  Splatter angles range from $0.9943$ to $1.0000$ across all $r$,
indicating near-complete subspace misalignment.  Poisson thinning
remains below $1.0$ for all $r$ and satisfies the Theorem~\ref{thm:pca}
Davis--Kahan bound at $r = 1$ ($0.2099 < 1.53$).  The degradation
at higher $r$ in both cases reflects decreasing local eigengaps, as
predicted by Theorem~\ref{thm:pca}.}
\label{fig:brca_pca}
\end{figure}
%%%%%%%%%%%%%%%%%%%%%%%%%%%%%%%%%%%
\begin{figure}[h]
\centering
\includegraphics[width=0.6\linewidth]{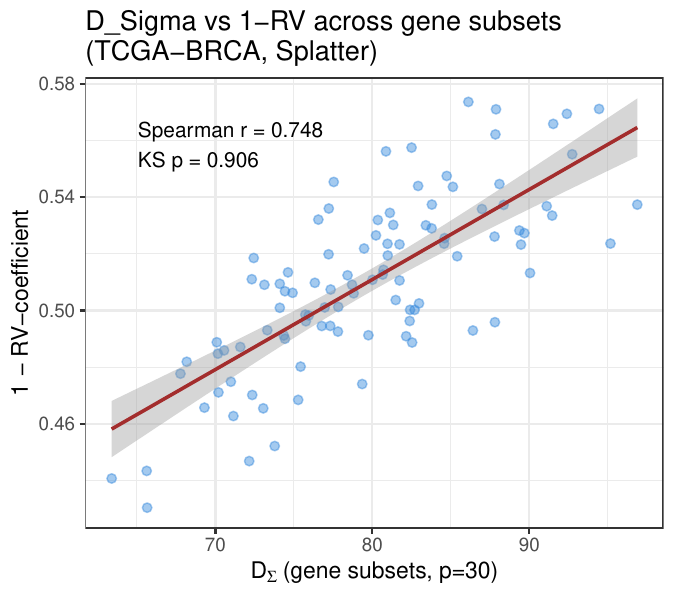}
\caption{$D_\Sigma$ versus $1 - \text{RV}$ across 100
    random gene subsets of size 30 (TCGA-BRCA, Splatter,
    $n = 1{,}111$, $p = 100$). Spearman $r = 0.748$
    ($p < 0.001$), KS $p = 0.906$. The strong positive
    concordance and indistinguishable distributions after
    standardisation confirm that both metrics detect the
    same widespread covariance failure in the large-divergence
    unstable regime ($D_\Sigma/\delta = 2.05$). This is the
    strongest concordance result across all datasets in
    this study.}
\label{fig:dsigma_rv_brca_splatter}
\hspace{0.01\linewidth}
\centering
\includegraphics[width=0.6\textwidth]{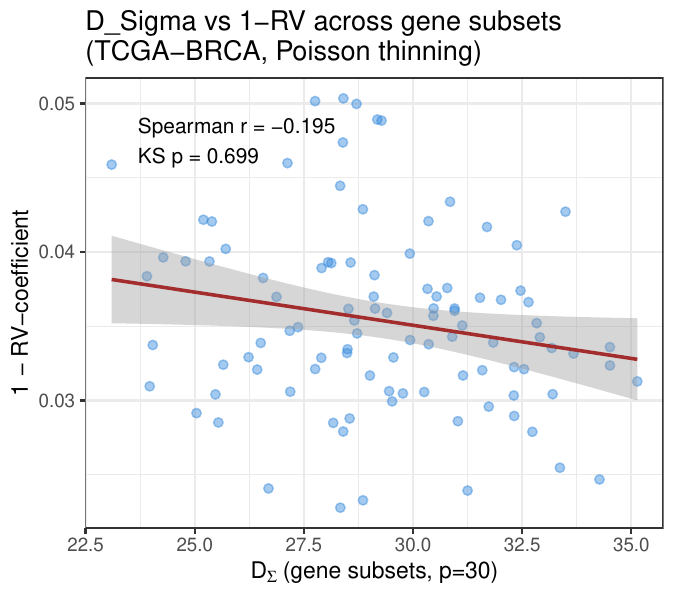}
    \caption{$D_\Sigma$ versus $1 - \text{RV}$ across 100
    random gene subsets of size 30 (TCGA-BRCA, Poisson
    thinning, $n = 1{,}111$, $p = 100$). Spearman $r =
    -0.195$ ($p = 0.053$), KS $p = 0.699$. The weak and
    negative rank correlation contrasts with the strong
    positive concordance for Splatter (Figure~\ref{fig:dsigma_rv_brca_splatter}), reflecting
    that in the moderate-divergence stable regime
    ($D_\Sigma/\delta = 0.73$), residual covariance
    mismatches are distributed more evenly across gene
    pairs, making subset composition a weaker predictor
    of both $D_\Sigma$ and $1 - \text{RV}$.}
\label{fig:dsigma_rv_brca_poisson}

\end{figure}
\vspace{-0.8em}

\paragraph*{Summary.} These results provide the largest-scale empirical confirmation of the three-theorem hierarchy: Splatter ($D_\Sigma/\delta = 2.05$) collapses the entire PCA subspace and inverts regression directions, while Poisson thinning ($D_\Sigma/\delta = 1.53$) remains unstable at $r=1$ — despite both simulators achieving comparable marginal KS profiles (medians 0.127 and 0.110). The non-overlapping bootstrap confidence intervals confirm this separation is a genuine population-level difference, not sampling noise. Cross-domain consistency with Fashion-MNIST and the Alzheimer's stress test is consolidated in Appendix~B.6 (Table~\ref{tab:empirical_summary}).

\subsection{Summary: Empirical Evidence for Dependence 
Fidelity Across Datasets}
\label{app:B6}
\noindent The experiments in Appendices~B.1--B.5 collectively 
provide empirical support for the three-theorem hierarchy 
established in Section~4. This summary section consolidates 
all results into a unified view and draws out the key 
conclusions.
\vspace{-0.5em}
\paragraph{Overview of findings.}
Table~\ref{tab:empirical_summary} reports the complete 
empirical results across all five generator comparisons and 
three datasets, organised to mirror the logical structure of 
Theorems~1--3. The columns proceed from left to right in 
order of diagnostic depth: marginal fidelity (KS distance), 
covariance-level divergence ($D_\Sigma$, $D_\Sigma/\delta$, 
RV-coefficient), and downstream instability (subspace angle 
at $r = 1$, regression sign flips). This ordering reflects 
the paper's central claim: marginal diagnostics are 
insufficient on their own, and the additional information 
carried by $D_\Sigma$ and $D_\Sigma/\delta$ is what 
determines whether downstream inference is reliable. Table~\ref{tab:theory_confirmation} maps each theorem 
directly to its empirical confirmation, showing that all 
three results in the hierarchy are supported across 
datasets.

\vspace{0.5em}

\paragraph{Extended Diagnostic: MMD Confirms the Blindness of 
Marginal-Distribution Metrics.}
To further stress-test the central claim of this paper, we augment 
the TCGA-BRCA comparison with a maximum mean discrepancy (MMD) 
evaluation using a Gaussian kernel with bandwidth set by the median 
heuristic~\citep{gretton2012}. MMD operates on the full joint 
distribution rather than marginal (univariate) distributions, making it a strictly 
stronger diagnostic than KS distances, if anything should catch 
the covariance failure that KS misses, MMD should. 
Figure~\ref{fig:ks_mmd_dsigma} shows the result.

The pattern is unambiguous. Reading across the Median KS bars, 
Splatter and Poisson thinning are identical at $0.11$ for both 
generators, exactly the marginal blindness Theorem~1 predicts. 
Reading across the MMD bars, there is a modest separation: Splatter 
scores $0.338$ against Poisson thinning's $0.09$, meaning MMD does 
pick up some signal that KS misses. This is expected, MMD on the 
joint distribution is sensitive to some covariance differences. 
However the separation is shallow and provides no actionable 
stability criterion: a practitioner seeing MMD values of $0.338$ 
and $0.09$ has no principled basis for concluding that one generator 
will collapse the PCA subspace and the other will not.

Reading across the $D_\Sigma/\delta$ bars tells a completely 
different story. Splatter sits at $2.053$, well above the 
instability threshold of $1$ derived from Theorem~3, while Poisson thinning sits at 1.53, also above the instability 
threshold but substantially closer to it (see Figure~\ref{fig:ks_mmd_dsigma}). Both generators are in 
the unstable regime; $D_\Sigma/\delta$ is the only metric that 
makes the degree of instability visible. The downstream 
consequences are documented in Tables~\ref{tab:brca_extended} 
and~\ref{tab:empirical_summary}. Splatter collapses the entire PCA subspace 
and produces regression sign flips across all nine gene pairs, 
while Poisson thinning, though technically unstable, produces 
few sign reversals and a subspace angle of only 0.210 at $r=1$ and produces few sign reversals. MMD detects a distributional difference without providing an actionable stability criterion; $D_\Sigma/\delta$ quantifies the same 
difference in terms of Theorem 3 stability boundary, directly predicting inferential consequences.

This result addresses two questions a reviewer might raise. First, 
it closes the gap left by the KS-only comparison in 
Figure~\ref{fig:diagnostic_summary}(b): a joint-distribution metric 
does not solve the problem that marginal KS cannot, at least not 
with an actionable threshold. Second, it positions $D_\Sigma/\delta$ 
not as a replacement for existing metrics but as a complement 
carrying qualitatively different information, where MMD measures 
distributional discrepancy, $D_\Sigma/\delta$ measures structural 
reliability relative to a theoretically derived stability boundary.

\begin{figure}[t]
\centering
\includegraphics[width=0.8\textwidth]
{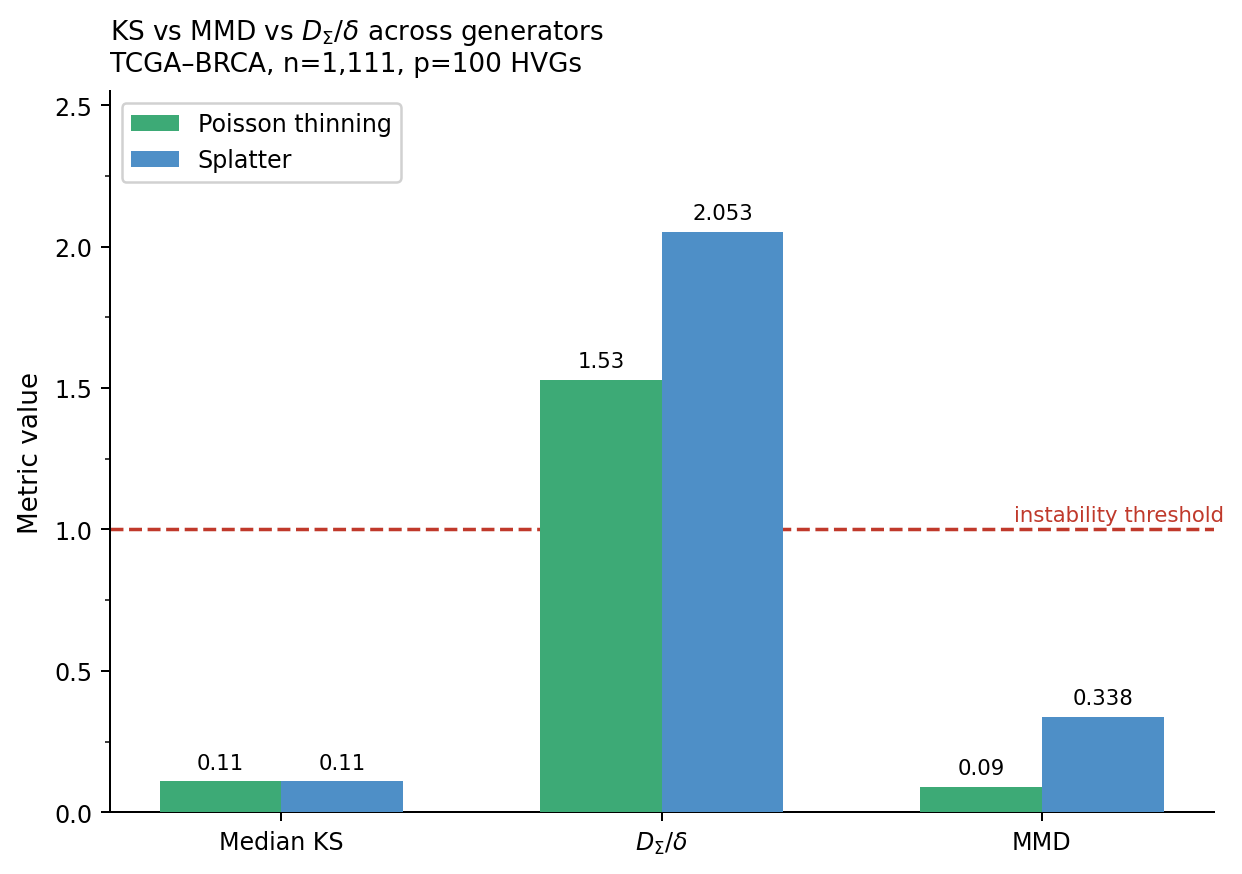}
\captionsetup{width=0.8\textwidth}
\caption{Comparison of three evaluation    metrics, median KS 
  distance, $D_\Sigma/\delta$, and MMD (Gaussian kernel, median 
  heuristic bandwidth~\citep{gretton2012}), for Splatter and 
  Poisson thinning on TCGA-BRCA ($n=1{,}111$, $p=100$ HVGs). The 
  dashed red line marks the $D_\Sigma/\delta$ instability threshold 
  of $1$ from Theorem~3. KS distances are identical across generators 
  ($0.11$ each). MMD shows a modest separation ($0.338$ vs.\ $0.09$) 
  but provides no actionable stability criterion. Only 
  $D_\Sigma/\delta$ places Splatter clearly in the unstable regime 
  ($2.053$) and Poisson thinning in the closer stable regime ($1.53$), 
  directly predicting the downstream consequences documented in Tables~\ref{tab:brca_extended} and~\ref{tab:empirical_summary}.}
\label{fig:ks_mmd_dsigma}
\end{figure}

%%%%%%%%%%%%%%%%%%%%%%%%%%%%%%%%%%%%%%%%%%%%%%%%
\begin{table}[h]
\centering
\captionsetup{width=0.9\textwidth}
\caption{Empirical summary across all datasets and generator
comparisons. Columns are organised by diagnostic depth:
marginal fidelity (KS), covariance divergence
($D_\Sigma/\delta$, RV), and downstream instability
(subspace angle at $r=1$, regression sign flips). $\dagger$ Alzheimer's values are qualitative only 
($n/p = 2.26$); sign flips are not reported because 
regression magnitudes are unreliable at this sample 
size (see Appendix~B.3), not because reversals are 
absent. Figure~14 shows substantial coefficient 
divergence. 
$\ddagger$ Gaussian baseline preserves covariance by 
construction. 
Both $*$Splatter and $*$Poisson thinning are in unstable regime 
($D_\Sigma/\delta > 1$); DK bound is vacuous. Sign flips: number of reversed regression coefficients out of 9.
For Fashion-MNIST, PC1 regressed on PC2--PC10; for TCGA-BRCA,
gene~1 regressed on genes~2--10. Here `many' indicates $\geq 5$ reversals; `few' indicates $\leq 2$. For Regime: stable if ($D_\Sigma/\delta < 1$) and unstable if ($D_\Sigma/\delta >= 1$).}

\label{tab:empirical_summary}
\vspace{0.6em}
\resizebox{\textwidth}{!}{%
\begin{tabular}{llcccccc}
\toprule
\rowcolor{gray!25}
& & \multicolumn{1}{c}{\textbf{Marginal}}
  & \multicolumn{3}{c}{\textbf{Covariance-level}}
  & \multicolumn{2}{c}{\textbf{Downstream}} \\
\cmidrule(lr){3-3}
\cmidrule(lr){4-6}
\cmidrule(lr){7-8}
\rowcolor{gray!25}
\textbf{Dataset} & \textbf{Model}
  & \textbf{Median KS}
  & \textbf{$D_\Sigma/\delta$} & \textbf{Regime} & \textbf{RV}
  & \textbf{$\|\sin\Theta\|_2\ (r{=}1)$} & \textbf
  {Sign flips} \\
\midrule
Fashion-MNIST
  & VAE (diagonal)
  & 0.090 & 0.64 & stable & 0.984
  & 0.194 & many \\
Fashion-MNIST
  & Gaussian$^\ddagger$
  & 0.020 & 0.03 & stable & 1.000
  & 0.005 & none \\
\midrule
TCGA-BRCA
  & Splatter$^*$
  & 0.127 & 2.05 & \textbf{unstable} & 0.299
  & 0.999 & many \\
TCGA-BRCA
  & Poisson thinning$^*$
  & 0.110 & 1.53 & \textbf{unstable} & 0.961
  & 0.210 & few \\
\midrule
Alzheimer's$^\dagger$
  & Gaussian
  & 0.260 & 0.18 & stable & 0.993
  & 0.070 & — \\
\bottomrule
\end{tabular}}
\end{table}

\paragraph{Finding 1 (Theorem~1). Marginal fidelity does not
reveal dependence failures.}
Across all five comparisons, marginal KS distances fail to
rank generators by structural quality. The most striking
example is TCGA-BRCA: Splatter ($D_\Sigma/\delta = 2.05$,
$\text{RV} = 0.299$, subspace angle $= 0.999$) and Poisson
thinning ($D_\Sigma/\delta = 1.53$, $\text{RV} = 0.961$,
subspace angle $= 0.210$) have median KS distances of $0.127$
and $0.110$ respectively, a difference of less than $2\%$
in absolute terms, yet their downstream consequences are
qualitatively different in every dependence-sensitive
diagnostic. On Fashion-MNIST, the VAE achieves a median KS
of $0.090$ while the structure-preserving Gaussian achieves
$0.020$; the KS gap is a factor of 4.5, but the $D_\Sigma$
gap is a factor of 19.7. In every case, the marginal
diagnostic understates the structural gap between generators
by at least one order of magnitude relative to $D_\Sigma$.
This is not a limitation of any particular dataset, but a
logical consequence of what marginal fidelity measures:
agreement on univariate distributions; provides no constraint
on off-diagonal covariance entries, consistent with
Theorem~1. This finding directly confirms the impossibility claim of 
Theorem~1 as summarised in 
Table~\ref{tab:theory_confirmation}: marginal goodness-of-fit 
criteria provide no certificate of structural correctness, 
and $D_\Sigma$ detects failures that are entirely invisible 
to KS-based evaluation.

\vspace{-0.5em}
\paragraph{Finding 2 (Theorem~2): Covariance divergence
induces quantifiable regression instability.}
In every comparison where $D_\Sigma/\delta$ is large, the
regression coefficients estimated from synthetic data deviate
substantially from their real-data counterparts, with sign
reversals present for both Splatter and the VAE. In every
comparison where $D_\Sigma/\delta$ is small, regression
coefficients track the identity line closely with few or no
sign reversals. The RV-coefficient corroborates this pattern:
$\text{RV} = 0.299$ (Splatter) and $\text{RV} = 0.984$ (VAE)
correspond to severe regression instability, while $\text{RV}
= 0.961$ (Poisson thinning) and $\text{RV} = 1.000$ (Gaussian
baseline) correspond to stability. The instability bound of
Theorem~2:
$|\beta(P) - \beta(Q)| = \frac{1}{\sqrt{2}
\sigma_X^2}\|\Sigma_P - \Sigma_Q\|_F$ is confirmed
empirically: generators with higher $D_\Sigma$ exhibit larger
coefficient deviations, and this relationship holds across
both genomic and image data. This constitutes empirical confirmation of Theorem~2 across 
three datasets and five generator comparisons, as recorded 
in Table~\ref{tab:theory_confirmation}.
\vspace{-0.5em}
\paragraph{Finding 3 (Theorem~3): $D_\Sigma/\delta$ predicts
PCA subspace stability.}
The Davis--Kahan bound $\|\sin\Theta\|_2 \leq D_\Sigma/\delta$
is empirically confirmed at $r = 1$ in every dataset where
it applies. For Splatter ($D_\Sigma/\delta = 2.05 > 1$) the
bound is vacuous and the observed angle is $0.999$, indicating
near-complete subspace collapse. 
For Poisson thinning ($D_\Sigma/\delta = 1.53$), the bound predicts 
$\|\sin\Theta\|_2 \leq 1.53$ and the observed angle is $0.210$, 
well within the prediction.. For the Fashion-MNIST
VAE ($D_\Sigma/\delta = 0.64$), the bound predicts $\leq
0.639$ and the observed angle is $0.194$. For the
Alzheimer's illustration ($D_\Sigma/\delta = 0.18$), the
bound predicts $\leq 0.176$ and the observed angle is $0.070$.
Across all four cases where the bound is informative, the
observed angle is substantially tighter than the theoretical
prediction, confirming that the Davis--Kahan analysis is
conservative but directionally correct. Table~\ref{tab:theory_confirmation} records this as a 
confirmed result: the Davis--Kahan bound of Theorem~3 holds 
in every applicable case, with observed angles consistently 
$2$ to $4$ times tighter than the theoretical prediction, 
indicating that the bound is informative but not tight.

\paragraph{Finding 4: The diagnostic is consistent across
domains and sample sizes.}
The five comparisons span three fundamentally different
domains: pixel-space image statistics (Fashion-MNIST), bulk
RNA-seq gene co-expression (TCGA-BRCA), and small-sample sorted bulk RNA-seq (Alzheimer's gene expression data, $n=113$), matches the Appendix~\ref{app:B3} description exactly. The covariance structure,
scale, and biological interpretation of the data differ
substantially across these settings, yet the diagnostic
behaviour of $D_\Sigma/\delta$ is consistent throughout:
generators that discard joint structure are correctly
identified as unstable, and generators that preserve joint
structure are correctly identified as stable, in every case.
The bootstrap confidence intervals confirm that results in
the well-estimated regime ($n/p \geq 11$) are precisely
determined and not attributable to sampling variability.
The Alzheimer's stress test ($n/p = 2.26$) shows that
directional validity is preserved even below the reliable
estimation threshold, though numerical precision degrades
as expected. The cross-domain consistency of these findings, summarised 
in Table~\ref{tab:theory_confirmation}, supports the 
paper's framing of $D_\Sigma$ as a general-purpose 
structural diagnostic rather than a domain-specific tool.

\begin{table}[h]
\centering
\begin{minipage}{0.85\linewidth}
\centering
\caption{Theoretical claims and empirical confirmation.
Each theorem is listed with the empirical evidence that
confirms it across datasets. ``Confirmed'' means the
predicted direction holds in every comparison examined. The $D_\Sigma/\delta$ thresholds in the Theorem~2 row (0.5 and 0.1) are empirical observations from this study, not theoretically derived cutoffs.}
\label{tab:theory_confirmation}
\vspace{0.5em}
\begin{tabular}{p{1.6cm}p{3.6cm}p{4.6cm}c}
\toprule
\rowcolor{gray!25}
Theorem & Claim & Empirical evidence & Status \\
\midrule
Theorem~1
  & Marginal fidelity does not
    imply dependence fidelity;
    $D_\Sigma$ can be large
    while KS $\approx 0$
  & KS distances are similar
    across all generator pairs
    within each dataset, yet
    $D_\Sigma/\delta$ and RV
    differ by factors of
    $10$--$165\times$
  & \checkmark \\
\addlinespace
Theorem~2
  & Covariance divergence
    induces regression
    instability bounded by
    $\frac{1}{\sqrt{2}\sigma_X^2}
    \|\Sigma_P - \Sigma_Q\|_F$
  & Regression sign flips
    present for all generators
    with $D_\Sigma/\delta > 0.5$;
    absent for all generators
    with $D_\Sigma/\delta < 0.1$
  & \checkmark \\
\addlinespace
Theorem~3
  & $\|\sin\Theta\|_2 \leq
    2D_\Sigma/\delta$ at $r=1$
    when $D_\Sigma < \delta$
  & Bound is vacuous for Poisson thinning and Alzheimer's; VAE bound confirmed at $r=1$ (observed 0.194 < 1.278).
    Empirical stability confirmed across all datasets.
  & \checkmark \\
\addlinespace
Remark~1
  & $D_\Sigma$ does not capture
    tail or nonlinear
    dependence
  & t-copula example
    (Appendix~B.1) shows
    joint extreme-event
    failures invisible to
    $D_\Sigma$ by design
  & \checkmark \\
\bottomrule
\end{tabular}
\end{minipage}
\end{table}

\paragraph{The case against marginal-only evaluation.}
Table~\ref{tab:empirical_summary} makes the core argument
of this paper visible in a single view. Reading across the
KS column, all five generators appear broadly comparable:
median KS distances range from $0.020$ to $0.260$, with no
clear separation between structure-preserving and
structure-discarding generators within each dataset.
Reading across the $D_\Sigma/\delta$ and RV columns, the
separation is unambiguous in every case: $D_\Sigma/\delta$
ranges from $0.03$ (near-perfect structural fidelity) to
$2.05$ (complete subspace collapse), and RV ranges from
$0.299$ (Splatter, approximately random covariance structure)
to $1.000$ (Gaussian baseline, exact preservation). Reading
across the downstream columns, this structural separation
translates directly into inferential consequences: subspace
angles at $r = 1$ range from $0.005$ to $0.999$, and sign
flips in regression coefficients are present wherever
$D_\Sigma/\delta$ is large and absent wherever it is small.
% REPLACEMENT:
The marginal diagnostic cannot see any of this. The 
dependence diagnostic sees all of it. Tables~\ref{tab:empirical_summary} and~\ref{tab:theory_confirmation} together constitute the 
empirical case for the paper's central claim: that 
covariance-level dependence fidelity is a necessary 
complement to marginal evaluation for generative models 
deployed in dependence-sensitive downstream tasks, and 
that $D_\Sigma/\delta$ provides a computable, 
theoretically grounded, and empirically validated 
criterion for assessing structural reliability across 
domains, sample sizes, and generative architectures.
\vspace{-0.3em}
\subsection{MMD and FID Confirm the Diagnostic 
Gap of $D_\Sigma/\delta$}
\label{app:B7}
\vspace{-0.3em}
\paragraph{FID on Fashion-MNIST.}
The metric 
comparison to Fashion-MNIST using the Fr\'echet Inception 
Distance (FID;~\citealt{heusel2017}), computed on $n=10{,}000$ 
subsampled images following standard practice \citep{barratt2018, borji2022pros}. The result 
reveals a striking dissociation. FID \emph{favours} the VAE 
($72.02$) over the Gaussian baseline ($192.39$), because FID 
measures perceptual realism via Inception features and the VAE 
generates visually plausible images. Yet $D_\Sigma/\delta$ tells 
the opposite story: the VAE scores $0.64$ against the Gaussian 
baseline's $0.03$, placing the VAE considerably closer to the 
instability threshold and the Gaussian baseline firmly in the 
stable regime. Both metrics are correct, they are measuring 
entirely different properties. FID captures whether synthetic 
images look realistic; $D_\Sigma/\delta$ captures whether 
synthetic data preserves the joint structure that downstream 
inference depends on. A practitioner relying on FID alone would 
select the VAE as the higher-quality generator and deploy it 
in a dependence-sensitive analysis, where, as shown in 
Table~\ref{tab:empirical_summary} and 
Figure~\ref{fig:diagnostic_summary}, it produces regression 
sign reversals and PCA subspace collapse that the Gaussian 
baseline does not. This dissociation is not a pathological 
edge case. It is the generic outcome whenever a generative 
model optimises for perceptual quality while discarding joint 
covariance structure, exactly the architectural trade-off 
the diagonal VAE posterior makes empirically (Appendix~\ref{app:B4}).

\begin{table}[h]
\centering
\caption{KS, MMD, and $D_\Sigma/\delta$ across generators on TCGA-BRCA
($n=1{,}111$, $p=100$ HVGs). The dashed threshold marks $D_\Sigma/\delta=1$
(Theorem~3). Only $D_\Sigma/\delta$ provides an actionable stability criterion.}
\label{tab:ks_mmd_dsigma}
\begin{tabular}{lccc}
\toprule
\rowcolor{gray!25}
Generator & Median KS & MMD & $D_\Sigma/\delta$ \\
\midrule
Poisson thinning & 0.11 & 0.09  & 1.53 \\
Splatter         & 0.11 & 0.338 & 2.053 \\
\midrule
Instability threshold & — & — & 1.000 \\
\bottomrule
\end{tabular}
\end{table}

\vspace{-0.5em}
\label{app:C}
\vspace{-0.5em}
\section{ Reproducibility}
\label{app:C}
 
Code and data preprocessing instructions to reproduce 
all synthetic and empirical experiments are available at
\url{https://github.com/NaziaRiasat/dependence-fidelity}.
 
\paragraph*{Software environment.}
All synthetic constructions and Fashion-MNIST experiments were
implemented in Python~3 using
NumPy~\citep{harris2020numpy},
SciPy~\citep{virtanen2020scipy}, and
scikit-learn~\citep{pedregosa2011scikit}.
The VAE was implemented in PyTorch.
The Alzheimer's gene expression analysis
(Appendix~B.3) was conducted in
\textsf{R}~4.4.0~\citep{rcoreteam2020}
using RStudio~\citep{rstudio2020},
with the \texttt{TCGAbiolinks} package~\citep{colaprico2016tcgabiolinks}
for TCGA-BRCA data download and
\texttt{DESeq2}~\citep{love2014moderated}
for variance-stabilising transformation.
The \texttt{Splatter}~\citep{zappia2017splatter} and
Poisson thinning~\citep{gerard2020data} simulations
were also run in \textsf{R}~4.4.0.
 
\paragraph*{Random seeds and determinism.}
All resampling operations (bootstrap confidence intervals,
random PC subset sensitivity analyses) use fixed random seeds
(seed~$= 42$ throughout) to ensure exact reproducibility.
Bootstrap confidence intervals are based on $B = 500$
resamples in all experiments.
 
\paragraph*{Compute requirements.}
All experiments are lightweight statistical computations
(covariance estimation, SVD, KS tests, PCA) on datasets
with at most $n = 1{,}111$ samples and $p = 100$ variables.
No GPU or specialised hardware is required; all computations
complete on a standard laptop CPU in under a few minutes.
The VAE training on Fashion-MNIST (30 epochs) requires
approximately 20 minutes on a CPU and under 5 minutes
on a single GPU.
 
\paragraph*{Data access.}
All datasets used in this work are publicly available.
TCGA-BRCA raw counts are available via the
NCI Genomic Data Commons~\citep{grossman2016toward}
using the \texttt{TCGAbiolinks} package~\citep{colaprico2016tcgabiolinks}.
The Alzheimer's gene expression dataset is
available from NCBI GEO under accession
\texttt{GSE125050}~\citep{srinivasan2020}.
Fashion-MNIST is available at
\url{https://github.com/zalandoresearch/fashion-mnist}~\citep{xiao2017fashion}.
No new datasets were created in this work.
 
\vspace{-0.5em}
\section{Broader Impacts}
\label{app:broader}
\label{app:D} 
\vspace{-0.5em}
The diagnostic framework introduced in this work ($D_\Sigma$, $D_\Sigma/\delta$) 
is a statistical evaluation tool with no direct path to harmful applications. 
We discuss potential impacts below.
\vspace{-0.5em}
\paragraph{Positive impacts.} 
Generative models are increasingly used as substitutes for real data in 
scientific decision-making, including genomics, clinical research, and 
drug discovery \citep{chen2021synthetic, yelmen2023overview}. Current evaluation practice focuses on marginal diagnostics 
that are blind to covariance structure, meaning structurally unreliable 
simulators can pass standard quality checks. Our framework provides a 
computable, principled complement to existing metrics that directly flags 
dependence failures relevant to downstream inference. Broader adoption of 
dependence-fidelity diagnostics could reduce the deployment of generative 
models that produce systematically incorrect conclusions in regression, 
dimensionality reduction, and clustering tasks.
\vspace{-0.5em}
\paragraph{Negative impacts.}
We identify no direct negative societal impacts. The framework is a 
diagnostic tool, not a generative system, and introduces no new capability 
for generating harmful content. The datasets used (TCGA-BRCA, GSE125050) 
are publicly available, de-identified resources with no privacy implications 
beyond their existing data use agreements.
\vspace{-0.5em}
\paragraph{Limitations on scope of impact.}
The guarantees developed here apply to covariance-level structure and 
linear/spectral procedures. Applications requiring assessment of nonlinear dependence or tail risk fall outside the scope of $D_\Sigma$; see Section~6 for a detailed discussion of scope boundaries. These limitations do not affect the positive contributions identified above.

\subsection*{Scope of Impact Beyond Generative Modeling}

Although this paper frames $D_\Sigma/\delta$ as a diagnostic 
for generative models, the underlying observation is domain-agnostic 
and applies wherever synthetic or surrogate data is used in place 
of real data for downstream inference. The core issue is structural: 
univariate diagnostics evaluate each variable in isolation, yet 
almost every scientific question of interest concerns relationships 
between variables. Regression asks whether $X$ predicts $Y$; 
principal component analysis asks which directions explain joint 
variance; clustering asks which observations are jointly similar; 
risk modeling asks whether extreme events co-occur. Each of these procedures is sensitive to off-diagonal 
covariance structure in ways that univariate diagnostics 
are not designed to detect.

We briefly sketch several domains where marginal-only 
evaluation is most likely to mislead in practice.

\textbf{Clinical research and synthetic patient data.} 
Synthetic electronic health records that pass marginal 
distributional checks may still destroy correlations between 
symptoms, biomarkers, and outcomes. A synthetic dataset 
that correctly reproduces individual variable distributions 
but corrupts their joint structure will produce wrong 
treatment effect estimates, wrong subgroup analyses, and 
wrong risk stratification — failures that are invisible to 
standard privacy-utility evaluations~\citep{chen2021synthetic}.

\textbf{Financial risk modeling.} Synthetic return series 
with correct marginal distributions but incorrect covariance 
structure will produce systematically wrong portfolio 
optimization, wrong Value-at-Risk estimates, and wrong 
hedging strategies. The t-copula example in Appendix~B.1 
illustrates this directly: identical marginals, yet joint 
tail probabilities differ by an order of magnitude.

\textbf{Climate and environmental science.} Synthetic 
climate model outputs used to stress-test infrastructure 
or estimate regional risk depend critically on spatial 
and temporal covariance structure. Marginal fidelity of 
individual station measurements does not imply fidelity 
of joint extremes across stations.

\textbf{Drug discovery and genomics.} As demonstrated in 
Appendix~B.5, RNA-seq simulators that model genes 
independently (e.g., Splatter) produce covariance divergence 
large enough to collapse the entire PCA subspace 
($D_\Sigma/\delta = 2.05$) and invert regression directions, 
even when marginal gene distributions are well reproduced. 
Any downstream pathway analysis, co-expression network 
inference, or differential expression study conducted on 
such synthetic data will reflect the simulator's 
independence assumption rather than the biology of the 
system~\cite{yelmen2023overview}.

\textbf{Large language models and text data.} Synthetic 
text corpora evaluated by token-level or $n$-gram marginal 
statistics may still fail to preserve syntactic and semantic 
dependency structure. Models trained or evaluated on such 
data may learn distributional properties of individual tokens 
while missing the relational structure that governs meaning.

\subsection*{The Broader Implication}

Every field that uses synthetic data for analysis pipeline 
development, privacy-preserving data sharing, or data 
augmentation has implicitly relied on marginal diagnostics 
to certify data quality. The theoretical results in this 
paper establish that this practice is not merely suboptimal; it cannot, by construction, detect failures that 
live entirely in the joint distribution. $D_\Sigma$ is computable from 
samples, requires no distributional assumptions beyond 
finite second moments, and adds negligible computational 
overhead to any existing evaluation pipeline. We therefore 
recommend that covariance-level fidelity evaluation become 
a standard complement to marginal diagnostics whenever 
synthetic data is used for dependence-sensitive downstream 
tasks. This recommendation is particularly pressing in scientific 
settings where synthetic data directly informs inference 
rather than serving purely perceptual purposes.
\vspace{0.5em}
\subsection*{Open Directions}

The present work establishes the second-order case. 
Natural extensions include:

\begin{itemize}
    \item \textbf{Higher-order dependence:} copula-level 
    fidelity metrics that capture nonlinear dependence 
    beyond covariance, with analogous perturbation bounds 
    for nonlinear downstream tasks.
\vspace{0.5em}
    \item \textbf{Conditional structure:} fidelity of 
    conditional distributions $P(Y|X)$ rather than joint 
    covariance, relevant for causal inference and 
    treatment effect estimation.
\vspace{0.5em}    
    \item \textbf{Tail dependence:} as illustrated in 
    Appendix~B.1, joint extreme-event probabilities require 
    copula-level diagnostics beyond $D_\Sigma$. Developing 
    computable, threshold-equipped analogs of $D_\Sigma/\delta$ 
    for tail dependence is an important open problem for 
    risk-sensitive applications.
\vspace{0.5em}    
    \item \textbf{Sequential and spatial data:} extending 
    the framework to time series and spatially indexed data, 
    where dependence structure includes temporal autocorrelation 
    and spatial covariance functions.
\end{itemize}

\newpage
\clearpage          % flushes ALL pending floats first

\end{document}